\theoremstyle{plain}
\newtheorem{theorem}{Theorem}[section]
\newtheorem{lemma}[theorem]{Lemma}
\theoremstyle{definition}
\newtheorem{definition}[theorem]{Definition}
\theoremstyle{remark}
\newlength{\myvspace}
\newcommand{\cmark}{\ding{51}}  
\newcommand{\xmark}{\ding{55}}  
\definecolor{tableblue}{HTML}{D0E1F9}        
\definecolor{tablemidblue}{HTML}{E0EDFF}     
\definecolor{tablelightblue}{HTML}{F0F8FF}   
\definecolor{tablegreen}{HTML}{C8E6C9}       
\definecolor{tablemidgreen}{HTML}{D9F0DA}    
\definecolor{tablelightgreen}{HTML}{E8F5E9}  
\definecolor{tablered}{HTML}{FFCDD2}         
\definecolor{tablemidred}{HTML}{FFE0E3}      
\definecolor{tablelightred}{HTML}{FFEBEE}    
\definecolor{lightgraytext}{gray}{0.55} 
\begin{document}

\title{Adaptively Point-weighting Curriculum Learning}

\author[1]{Wensheng Li}
\author[1]{Yichao Tian}
\author[2]{Hao Wang}
\author[1]{Ruifeng Zhou}
\author[1]{Hanting Guan}
\author[1]{Chao Zhang\thanks{Corresponding author.}}
\author[3]{Dacheng Tao}

\affil[1]{School of Mathematical Sciences, Dalian University of Technology, Dalian 116024, China}
\affil[ ]{\texttt{lws@mail.dlut.edu.cn, 13223541137@mail.dlut.edu.cn, zhouruifeng@mail.dlut.edu.cn, miracie@mail.dlut.edu.cn, chao.zhang@dlut.edu.cn}}
\affil[2]{School of Environmental Science and Technology, Dalian University of Technology, Dalian 116024, China}
\affil[ ]{\texttt{wanghao\_dut@mail.dlut.edu.cn}}
\affil[3]{College of Computing $\&$ Data Science, Nanyang Technological University, 639798, Singapore}
\affil[ ]{\texttt{dacheng.tao@gmail.com}}

\maketitle

\begin{abstract}
Curriculum learning (CL) mimics human learning, in which easy samples are learned first, followed by harder samples, and has become an effective method for training deep networks. However, many existing automatic CL methods maintain a preference for easy samples during the entire training process regardless of the constantly evolving training state. This is just like a human curriculum that fails to provide individualized instruction, which can delay learning progress. To address this issue, we propose an adaptively point-weighting (APW) curriculum learning method that assigns a weight to each training sample based on its training loss. The weighting strategy of APW follows the easy-to-hard training paradigm, guided by the current training state of the network. We present a theoretical analysis of APW, including training effectiveness, training stability, and generalization performance. Experimental results validate these theoretical findings and demonstrate the superiority of the proposed APW method.
\end{abstract}

\section{Introduction}
The concept of curriculum learning (CL) is inspired by the human education process, which starts with simple concepts and then gradually introduces more complicated concepts as the learner's ability improves. A desired curriculum should be designed according to the easy-to-hard training paradigm, which starts with easy samples to form the current training data subset and then gradually adds hard samples into the subset until all training samples are included~\cite{bengio2009curriculum,elman1993learning}.

As flexible plug-and-play submodules, CL methods have been widely applied to train deep networks in various real-world scenarios, such as image classification~\cite{weinshall2018curriculum,croitoru2024learning}, natural language processing (NLP)~\cite{xu2020curriculum,mejeh2024taking}, and graph machine learning~\cite{wang2021curgraph,li2023curriculum}. The recent CL benchmark (CurBench)~\cite{zhou2024curbench} evaluates the performance of CL methods across these research directions and has pointed out that a well-designed curriculum can accelerate training convergence and improve the generalization performance of the networks~\cite{gong2016curriculum,weinshall2018curriculum}. However, most of the existing automatic CL methods are designed with a fixed weighting preference, {\it i.e.,} they favor either easy samples or hard samples throughout training and thus do not implement the easy-to-hard training paradigm in an adaptive manner.

In general, there are two main components in developing CL methods~\cite{wang2021survey}: 1) the difficulty measurer that evaluates the difficulty of training samples, for example, it introduces an error threshold to decide whether a training sample is easy or hard: a sample is considered easy if its loss is no larger than the threshold, and hard otherwise; 2) the training scheduler designs the schedule for learning samples from easy to hard. These two components allow a CL method to adjust sample weights to guide the network to focus more on easy samples in the early training phase and then gradually shift attention toward hard samples in the later training phase.

The recent curriculum learning library, CurML~\cite{zhou2022curml}, implements most of the existing CL methods and evaluates their performance on image classification tasks. CurBench~\cite{zhou2024curbench} further evaluates these methods on a broader range of classification tasks beyond image classification. The existing CL methods can be broadly categorized into two groups, as summarized in CurBench, based on whether the sample weights are 1) hard weights, such as self-paced learning and its variants~\cite{kumar2010self,jiang2015self} and teacher transferring~\cite{weinshall2018curriculum,matiisen2019teacher}, or 2) soft weights, such as meta-learning~\cite{ren2018learning,shu2019meta,wang2020optimizing} and confidence-aware losses~\cite{saxena2019data,castells2020superloss}. Beyond these, other CL methods design curricula from different perspectives, such as model-level curricula~\citet{sinha2020curriculum} and intra-sample curricula~\cite{wang2023efficienttrain,jarca2024cbm}.

We focus on the CL methods that assign soft sample weights, which are referred to as loss-reweighting methods in CurBench~\cite{zhou2024curbench}. Such methods assign a soft weight to each training sample and replace the average loss with a reweighted loss (see Figure~\ref{fig:APW_structure}).

\begin{figure}[htbp]
\centering
\includegraphics[width=0.65\textwidth]{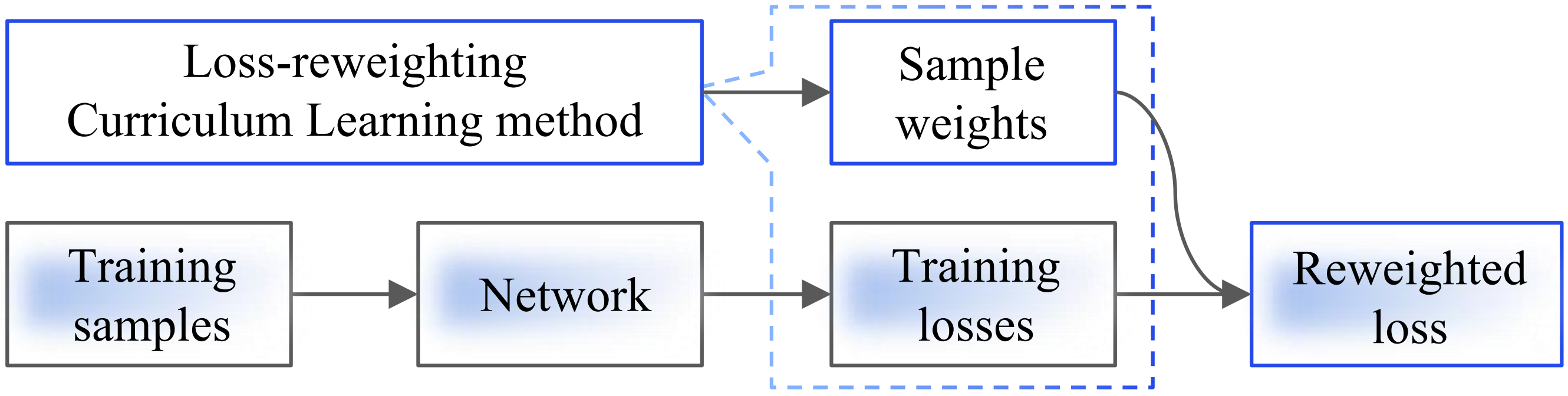}
\caption{An illustration of the loss-reweighting CL methods.}
\label{fig:APW_structure}
\end{figure}

There are two limitations of the existing loss-reweighting methods. 
\begin{enumerate}[label=(Lim-\arabic*),ref=Lim-\arabic*, leftmargin=*, labelindent=0pt]
\item \label{lim:L1} Their difficulty measurers usually only consider training samples in the current mini-batch, without taking the entire training set into account. This issue limits their applicability; for example, the sample weights cannot be directly used as sampling probabilities over the entire dataset.

\item \label{lim:L2} The existing works commonly adopt a fixed weighting preference throughout training, namely, assigning larger weights to easy samples than to hard ones or the opposite, but do not adapt the training scheduler to the constantly evolving training state of the network ({\it e.g.,}~\cite{kim2018screenernet, shu2019meta,castells2020superloss}). Consequently, the training scheduler does not fully follow the easy-to-hard training paradigm.
\end{enumerate}

In this study, we propose an adaptively point-weighting (APW) curriculum learning method, which is an automatic CL algorithm and assigns a soft weight to each training sample based on its per-sample loss and the current training state of the network. Thus, APW addresses Limitation~\ref{lim:L1}. Inspired by the latest evidence of the learning dynamics of networks~\cite{zhou2025new}, APW can adaptively detect the two training phases and assign sample weights following the easy-to-hard training paradigm. Thus, APW provides a solution to Limitation~\ref{lim:L2}. Specifically, in the early training phase, the APW method gradually increases the weights of easy samples to mitigate the negative impact of hard samples; in the later training phase, it gradually increases the weights of hard samples to further improve predictive performance. We primarily focus on image classification, and we also demonstrate that APW can be applied to NLP and graph machine learning.

The proposed APW method makes the following contributions. 1) APW is a novel and self-contained loss-reweighting method. To the best of our knowledge, it is the first loss-reweighting method to adaptively enforce the easy-to-hard training paradigm in CL. Its weighting strategy accounts for the entire training set, and thus APW is broadly applicable in practice. Accordingly, we present two APW variants to demonstrate its applicability. 2) APW involves only two hyperparameters, which are among the fewest in CL methods, and we further show that these hyperparameters can be set in an interpretable way. The computation of sample weights in APW relies only on elementary operations (addition, subtraction, multiplication, division, logarithm, and exponentiation) and introduces no additional learnable parameters. 3) The theoretical analysis suggests that APW improves prediction confidence on easy samples in the training set, and then improves prediction confidence on easy samples in the test set, which is crucial for real-world datasets~\cite{guo2017calibration,angelopoulos2022conformal} and has not been explicitly analyzed in prior CL work.

\section{Adaptively Point-weighting (APW) Curriculum Learning}\label{section:APW}

In this section, we introduce APW under the two-component framework for automatic CL methods~\cite{wang2021survey}, then elaborate on its practical implementations, and finally conclude with a discussion of its capabilities.

\subsection{The Difficulty Measurer}\label{section:APW:workflow:difficulty_measurer}

Denote the $n$-th training sample by $\mathbf{z}^{(n)} = (\mathbf{x}^{(n)}, y^{(n)})$ and its loss by $\mathrm{L}^{(n)}$ ({\it e.g.,} the cross-entropy loss in a classification task). Let $\mathcal{Z} = \{\mathbf{z}^{(n)}\}_{n=1}^{N}$ denote the entire training set, and initialize the sample-weight vector as $\mathbf{w}_0 = \{w_{0}^{(n)}\}_{n=1}^{N}$ with $w_{0}^{(n)} = \frac{1}{N}$. At the beginning of the $k$-th epoch of applying APW, the difficulty level of every training sample is evaluated in the following way:
\vspace{-0.05cm}
\begin{equation}\label{eq:beta_k}
\beta_k(\mathbf{z}^{(n)};e)=\begin{cases}+1,&\mathrm{L}_{k}^{(n)}\leq e;\\-1,
&\mathrm{L}_{k}^{(n)}>e.\end{cases}
\vspace{-0.05cm}
\end{equation}
where $\mathrm{L}_{k}^{(n)}$ is the training loss evaluated at the current network parameters (weights and biases) $\Theta_{k}$. Given an error threshold $e$, the sample $\mathbf{z}^{(n)}$ with $\beta_k(\mathbf{z}^{(n)};e)=+1$ (resp. $\beta_k(\mathbf{z}^{(n)};e)=-1$) is marked as the easy (resp. hard) sample. Many loss-reweighting methods ({\it e.g.,}~\cite{kim2018screenernet,castells2020superloss}) also employ a binary criterion similar to Eq.~\eqref{eq:beta_k}. Next, define 
\vspace{-0.05cm}
\begin{equation}\label{eq:rho_k}
\rho_{k}=\sum_{\{n:\beta_k(\mathbf{z}^{(n)};e)=-1\}}w_{k-1}^{(n)}
\vspace{-0.05cm}
\end{equation}
as the sum of hard sample weights at the $(k-1)$-th epoch. The value $\rho_k$ serves as an indicator of the remaining training difficulty for the current network. Namely, if the network can efficiently learn the training set, the value of $\rho_k$ should exhibit a decreasing trend. For a network trained with APW, according to whether $\rho_k$ exceeds $\frac{1}{2}$, the training process is split into two phases: the epochs with $\rho_{k} > \frac{1}{2}$ are said to be in the early training phase, whereas the epochs with $\rho_{k} < \frac{1}{2}$ are said to be in the later training phase.

\subsection{The Training Scheduler}\label{section:APW:workflow:training_scheduler}

To follow the easy-to-hard training paradigm, we design the training scheduler such that, in the early training phase, easy samples are assigned larger weights than hard samples to avoid disrupting initial learning; in the later training phase, the weights of easy samples gradually decrease while those of hard samples gradually increase to further improve performance. To this end, we devise a weight-change value $\alpha_k$:
\begin{equation}\label{eq:alpha_k}
\alpha_k = \frac{1}{q} \log\left(\frac{(1 - \rho_k)}{\rho_k}\right),
\end{equation}
where $q \geq 2$ is chosen to be large to stabilize the process of updating sample weights. This formula draws inspiration from AdaBoost~\cite{freund1997decision}. In practice, we clip $\rho_k$ to $[10^{-4}, 1-10^{-4}]$ so that $\alpha_k$ remains bounded and the magnitude of weight updates is controllable. At the $k$-th epoch, the weight of the sample $\mathbf{z}^{(n)}$ is updated as follows:
\begin{equation}\label{eq:weight_update}
w_k^{(n)}=\frac{w_{k-1}^{(n)}\exp\left(-\alpha_k\,\beta_k\left(\mathbf{z}^{(n)};e\right)\right)}{Z_k},
\end{equation}
where $w_{k-1}^{(n)}$ comes from the $(k-1)$-th epoch, and
\begin{equation*}
Z_k=\sum_{n=1}^Nw_{k-1}^{(n)}\exp(-\alpha_k\,\beta_k(\mathbf{z}^{(n)};e))
\end{equation*}
ensures that the sample-weight vector of APW sums to one. Finally, the network parameters are updated by minimizing the reweighted loss function:
\begin{equation}\label{eq:APW_loss}
\mathrm{L}_k^\mathrm{APW}=\sum_{n=1}^{N}w_k^{(n)}\mathrm{L}_k^{(n)}.
\end{equation}

We summarize the relationship between $w_k^{(n)}$ and $w_{k-1}^{(n)}$ in the early and later training phases in Table~\ref{tab:relation}. In the difficulty measurer of APW, given an error threshold $e$, the transition between the two training phases is determined by comparing $\rho_k$ with $\frac{1}{2}$. The threshold $e$ is set in an interpretable manner based on the estimated label-noise rate of the real-world dataset ({\it cf.} Section~\ref{section:experiments}) and we use $\rho_k$ relative to $\frac{1}{2}$ to develop the theoretical analysis ({\it cf.} Section~\ref{section:theoretical_analysis}). As further shown in Appendix~\ref{appendix:APW}, extending the default value $\frac{1}{2}$ to other choices typically leads to worse performance.

\begin{table}[htbp]
\centering   
\caption{Weight update in the early and later training phases.}\label{tab:relation}
\resizebox{0.7\linewidth}{!}{
\begin{tabular}{c|c|c}
\toprule[1.2pt]
Training phase & Early training phase ($\rho_k > \frac{1}{2}$) & Later training phase ($\rho_k < \frac{1}{2}$) \\
\midrule
Sign of $\alpha_k$ & $\alpha_k < 0$ & $\alpha_k > 0$ \\
\midrule
\makecell{Weight update \\ (unnormalized)} 
& \makecell{$\beta_{k}(\mathbf{z}^{(n)};e) = +1$: $w_k^{(n)} > w_{k-1}^{(n)}$ \\ $\beta_{k}(\mathbf{z}^{(n)};e) = -1$: $w_k^{(n)} < w_{k-1}^{(n)}$} 
& \makecell{ $\beta_{k}(\mathbf{z}^{(n)};e) = +1$: $w_k^{(n)} < w_{k-1}^{(n)}$ \\  $\beta_{k}(\mathbf{z}^{(n)};e) = -1$: $w_k^{(n)} > w_{k-1}^{(n)}$} \\
\bottomrule[1.2pt]
\end{tabular}
}
\end{table}

The APW method belongs to the group of loss-reweighting CL methods summarized in CurBench~\cite{zhou2024curbench}. In Appendix~\ref{appendix:APW}, we provide an analysis of the weighting behavior of APW and visualize the weighting dynamics.

\subsection{Implementations of APW}\label{section:APW:implementations}

Since the APW method assigns sample weights based on the entire training set, we introduce two variants of APW: sampling-APW (S-APW) and mixup-APW (M-APW). S-APW takes the APW-assigned weight of every training sample as its sampling probability. As training progresses, the sampled set becomes increasingly difficult, thus implementing an easy-to-hard curriculum. M-APW incorporates mixup~\cite{zhang2017mixup} into APW by replacing the standard mixing coefficient with the APW-assigned weights of the two samples (normalized to sum to one). In this manner, M-APW combines APW's CL weighting strategy with mixup's robustness to label noise. The details of these two variants can be found in Appendices~\ref{appendix:implementation:S-APW} and~\ref{appendix:implementation:M-APW}.

Furthermore, we consider three weighting approaches, epoch-level weighting (E-weighting), iteration-level weighting (I-weighting), and combined epoch- and iteration-level weighting (EI-weighting), whose detailed algorithms are provided in Appendix~\ref{appendix:implementation:approaches}. In E-weighting, our default weighting approach, all sample weights are updated at the beginning of each epoch ({\it cf.} Section~\ref{section:APW:workflow:training_scheduler}). In I-weighting, only the weights of the samples in the current mini-batch are updated at each iteration. EI-weighting combines both E-weighting and I-weighting. In particular, S-APW only uses E-weighting during curriculum sampling when gradually increasing the difficulty of the current training subset. After all training samples have been included, S-APW can then adopt one of the three weighting approaches.

\subsection{Discussion of APW}\label{section:APW:discussion}

A desired CL method should have the following two characteristics~\cite{wang2021survey}: one is the guidance, {\it i.e.,} to encourage the network to learn from the simple sub-tasks to the overall task~\cite{matiisen2019teacher,shu2019transferable}, and the other is the denoising, {\it i.e.,} to avoid overfitting to label noise~\cite{geng2023denoising,kim2024denoising}. The proposed APW method meets the two requirements. 

{\bf The guidance.} The weighting strategy of APW uses $\rho_{k}$ to indicate the training state of the network and follows the easy-to-hard training paradigm. Specifically, in the early training phase, since $\rho_{k}$ is large, the easy samples have relatively large weights and thus the network pays more attention to simple sub-tasks. In the later training phase, since $\rho_{k}$ gradually becomes small, the weights of hard samples are gradually increased to balance the network's attention among all training samples, rather than focusing only on the easy samples.

{\bf The denoising.} A common assumption in CL is that hard samples are more likely to contain label noise, and this assumption has been widely corroborated~\cite{wang2021survey}. Moreover, the latest evidence on neural network learning dynamics shows that the training process can be divided into two phases~\cite{zhou2025new}. In the early training phase, the network is highly sensitive, and thus hard samples can cause irreversible damage to the network parameters~\cite{achille2017critical}. In contrast, the later training phase is more stable and exhibits strong robustness to hard samples~\cite{zhou2025new}. Therefore, to mitigate the premature influence of noisy information on model performance, in the early training phase, APW places more emphasis on the easy samples, and in the later training phase, APW gradually shifts the emphasis toward the hard samples.

The weighting strategy of APW is simple and lightweight, as it introduces no extra modules, does not require an unbiased meta-data set, and does not introduce any additional regularization terms. At the end of each epoch, per-sample training losses are typically computed via a forward evaluation pass for logging and checkpoint selection, and APW simply reuses these losses to mark a training sample as easy or hard. Moreover, the weighting strategy of APW requires only a few elementary operations for each training sample. As a result, the additional computational overhead is negligible and in fact smaller than that of a tiny fully connected network with a single hidden layer of 100 neurons (1D input and 1D output)~\cite{shu2019meta}. Therefore, the APW method has strong potential for practical applications.

\section{Theoretical Analysis of APW}\label{section:theoretical_analysis}

In this section, we provide a theoretical analysis of APW under the default E-weighting approach. We first introduce several necessary preliminaries to assess the training behavior of the network trained with APW. Building upon these preliminaries, we then present theoretical results showing that APW increases E-Prop on the training set, maintains the easy status of training samples, and leads to an improvement in E-Prop on the test set, where E-Prop is defined as the proportion of easy samples in the considered dataset.

\subsection{Preliminaries}

The difficulty measurer of APW introduces an error threshold $e$ and treats $\rho_k$ as a criterion for the remaining training difficulty at the $k$-th epoch. Since in practice $\rho_k$ is clipped, we have $0<\rho_k<1$ for all $k$ in our theoretical analysis. Compared to empirical risk minimization, using the reweighted loss of APW as the training objective can be interpreted as completing an $e$-level task, with progress quantified by E-Prop on the training set. If the network can reliably accomplish the $e$-level task, we state the following definition to characterize a well-behaved convergence phase toward the end of training.

\begin{definition}\label{definition:K_convergence}
Starting from the $(K+1)$-th epoch, if $\rho_{k} < \frac{1}{2}$ holds for all $k = K+1,\dots,K+\Delta_K$, then the phase from the $(K+1)$-th to the $(K+\Delta_K)$-th epoch is said to be a $\Delta_K$-convergence phase.
\end{definition}

Since $\alpha_k$ is a strictly decreasing function of $\rho_k$ ({\it cf.} Eq.~\eqref{eq:alpha_k}), we introduce $A_K := \sum_{k=1}^K \alpha_k$ to summarize the cumulative training state. $A_K>0$ indicates that the network predominantly stays in the later training phase up to the $K$-th epoch, and a larger $A_K$ indicates that the network can handle the $e$-level task. 

\begin{definition}
According to Definition~\ref{definition:K_convergence}, if $A_K>0$, then the subsequent $\Delta_K$-convergence phase is said to be a $\Delta_K^{+}$-convergence phase.
\end{definition}

Since the weighting strategy of APW is iterative ({\it cf.} Eq.~\eqref{eq:weight_update}), it is reasonable to examine the cumulative easy status of a training sample ({\it cf.} Section~\ref{section:APW:discussion}). For a training sample $\mathbf{z}^{(n)}$, we define
\begin{equation}\label{eq:g_K}
g_K(\mathbf{z}^{(n)}) := \sum_{k=1}^K\alpha_k\,\beta_k(\mathbf{z}^{(n)};e),
\end{equation}
as its cumulative training performance from the $1$-st to the $K$-th epoch. 
\begin{definition}
If $g_{K}(\mathbf{z}^{(n)})>0$, we call $\mathbf{z}^{(n)}$ a $K$-retentive sample. We further define $m_K(\mathbf{z}^{(n)}) := g_{K}(\mathbf{z}^{(n)}) / A_K$ as the relative cumulative training performance.
\end{definition}
Next, we present the following lemma to relate the network training state to the easy status of training samples.
\begin{lemma}\label{lemma:ineq:m_K}
During a $\Delta_K^{+}$-convergence phase, if a $K$-retentive sample $\mathbf{z}^{(n)}$ satisfies $m_{K+\Delta_K}(\mathbf{z}^{(n)})>m_{K}(\mathbf{z}^{(n)})>0$, then it holds that
\begin{equation}\label{ineq:m_K}
\frac{g_{K+\Delta_K}(\mathbf{z}^{(n)})-g_K(\mathbf{z}^{(n)})}{A_{K+\Delta_K}-A_K}>\frac{g_K(\mathbf{z}^{(n)})}{A_K}.
\end{equation}
\end{lemma}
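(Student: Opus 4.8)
The plan is to reduce the stated inequality to the hypothesis $m_{K+\Delta_K}(\mathbf{z}^{(n)})>m_K(\mathbf{z}^{(n)})$ by an elementary cross-multiplication, after first pinning down the signs of the relevant quantities. Throughout, I would write $a=g_K(\mathbf{z}^{(n)})$, $c=g_{K+\Delta_K}(\mathbf{z}^{(n)})$, $b=A_K$, and $d=A_{K+\Delta_K}$, so that the hypothesis reads $c/d>a/b>0$ and the target inequality~\eqref{ineq:m_K} reads $(c-a)/(d-b)>a/b$.

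First I would establish the three sign facts that make the argument go through. Since a $\Delta_K^{+}$-convergence phase is by definition a $\Delta_K$-convergence phase together with $A_K>0$, we have $b>0$. During that phase every epoch $k\in\{K+1,\dots,K+\Delta_K\}$ satisfies $\rho_k<\tfrac{1}{2}$, which by Eq.~\eqref{eq:alpha_k} forces $\alpha_k>0$ (the later-training-phase regime recorded in Table~\ref{tab:relation}); summing these gives $d-b=\sum_{k=K+1}^{K+\Delta_K}\alpha_k>0$, and hence also $d>0$. Together with $a/b>0$ and $b>0$ this yields $a>0$ as well, so all of $a,b,c,d,d-b$ are strictly positive.

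With the denominators $b$, $d$, and $d-b$ all positive, both the hypothesis and the target become polynomial inequalities without any reversal of direction. Clearing denominators in the hypothesis $c/d>a/b$ (multiply by $bd>0$) gives $bc>ad$. Clearing denominators in the target $(c-a)/(d-b)>a/b$ (multiply by $b(d-b)>0$) gives $b(c-a)>a(d-b)$, which simplifies to $bc>ad$ once the $-ab$ terms cancel. The two cleared forms are identical, so the target is equivalent to the hypothesis and the lemma follows. Conceptually this is just the mediant inequality: since $c/d$ is the mediant of $a/b$ and $(c-a)/(d-b)$, the statement that the mediant $c/d$ exceeds $a/b$ is exactly the statement that the increment ratio $(c-a)/(d-b)$ exceeds $a/b$.

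There is no genuine analytic obstacle here; the only place to be careful is the sign bookkeeping. The direction of the inequality after cross-multiplication hinges entirely on $b>0$ and $d-b>0$, and it is precisely the $\Delta_K^{+}$-convergence hypothesis (supplying $A_K>0$) together with the positivity of $\alpha_k$ in the later phase (supplying $A_{K+\Delta_K}-A_K>0$) that guarantees these. If either sign failed, the equivalence would break and the inequality could flip, so I would make sure to invoke both structural hypotheses explicitly before clearing any denominators.
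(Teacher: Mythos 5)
Your proof is correct and follows essentially the same route as the paper's: both reduce the claim to the elementary fraction fact that if $c/d>a/b>0$ with $b>0$ and $d-b>0$ then $(c-a)/(d-b)>a/b$, using $A_K>0$ from the $\Delta_K^{+}$-convergence hypothesis and $A_{K+\Delta_K}-A_K=\sum_{k=K+1}^{K+\Delta_K}\alpha_k>0$ from $\rho_k<\tfrac{1}{2}$ in that phase. Your explicit cross-multiplication merely spells out the step the paper states without derivation, so there is nothing to add.
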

Consequently, a larger increment $g_{K+\Delta_K}(\mathbf{z}^{(n)})-g_K(\mathbf{z}^{(n)})$ indicates that $\mathbf{z}^{(n)}$ is marked as easy more frequently during this training phase.

\subsection{Theoretical Analysis}

Based on the aforementioned preliminaries, we analyze the theoretical properties of APW. We first investigate whether the training difficulty $\rho_k$ tends to decrease as training progresses.
\begin{theorem}\label{theorem:rho_k}
For any $e>0$, it holds that
\begin{equation*}
\sum_{\{n:\beta_k(\mathbf{z}^{(n)};e)=-1\}} w_k^{(n)} < \frac{1}{e} \mathrm{L}_k^{\mathrm{APW}}.
\end{equation*}
Especially, if the hard samples remain the same from the $k$-th epoch to the $(k+1)$-th epoch, then $\rho_{k+1} < \frac{1}{e} \mathrm{L}_k^{\mathrm{APW}}$.
\end{theorem}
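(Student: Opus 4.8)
The plan is to establish the displayed inequality directly from the threshold characterization of hard samples in Eq.~\eqref{eq:beta_k} together with the non-negativity of the per-sample losses, and then to read off the bound on $\rho_{k+1}$ as an immediate consequence by matching index sets.

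First I would note that, by Eq.~\eqref{eq:beta_k}, every hard sample $\mathbf{z}^{(n)}$ (one with $\beta_k(\mathbf{z}^{(n)};e)=-1$) satisfies $\mathrm{L}_k^{(n)}>e$. Since $e>0$, this gives $1<\mathrm{L}_k^{(n)}/e$ for each such $n$. The weights $w_k^{(n)}$ are strictly positive, as they arise from the exponential update Eq.~\eqref{eq:weight_update} normalized by $Z_k>0$ and initialized at $w_0^{(n)}=\tfrac{1}{N}$. Multiplying the above by $w_k^{(n)}$ and summing over the hard-sample index set therefore yields
\[
\sum_{\{n:\beta_k(\mathbf{z}^{(n)};e)=-1\}} w_k^{(n)} < \frac{1}{e}\sum_{\{n:\beta_k(\mathbf{z}^{(n)};e)=-1\}} w_k^{(n)}\,\mathrm{L}_k^{(n)}.
\]
I would then enlarge the right-hand sum to range over all $N$ indices: because each term $w_k^{(n)}\mathrm{L}_k^{(n)}$ is non-negative, adding the easy-sample terms can only increase it, so the right-hand side is bounded above by $\tfrac{1}{e}\sum_{n=1}^N w_k^{(n)}\mathrm{L}_k^{(n)}=\tfrac{1}{e}\mathrm{L}_k^{\mathrm{APW}}$ via Eq.~\eqref{eq:APW_loss}. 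Chaining the two steps proves the first inequality.

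For the ``especially'' clause, I would invoke the definition $\rho_{k+1}=\sum_{\{n:\beta_{k+1}(\mathbf{z}^{(n)};e)=-1\}} w_k^{(n)}$ from Eq.~\eqref{eq:rho_k}. The hypothesis that the hard samples are unchanged from the $k$-th to the $(k+1)$-th epoch means the index set $\{n:\beta_{k+1}=-1\}$ coincides with $\{n:\beta_{k}=-1\}$, so $\rho_{k+1}$ equals exactly the left-hand side already bounded above; hence $\rho_{k+1}<\tfrac{1}{e}\mathrm{L}_k^{\mathrm{APW}}$ follows at once.

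The argument is elementary, so the only point requiring care is strictness in the degenerate case where no sample is hard. There the first displayed step collapses to a triviality, but the conclusion still holds: the left-hand side is an empty sum equal to $0$, whereas $\mathrm{L}_k^{\mathrm{APW}}=\sum_n w_k^{(n)}\mathrm{L}_k^{(n)}>0$ for strictly positive per-sample losses (as with softmax cross-entropy, which never attains zero), so $0<\tfrac{1}{e}\mathrm{L}_k^{\mathrm{APW}}$. When at least one hard sample exists, its positive-weight term already renders the first inequality strict, and strictness then propagates through the chain. I expect no genuine obstacle beyond this bookkeeping.
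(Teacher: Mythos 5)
Your proof is correct and follows essentially the same route as the paper's: lower-bound each hard-sample loss by $e$, enlarge the weighted sum to the full reweighted loss $\mathrm{L}_k^{\mathrm{APW}}$ using non-negativity of the remaining terms, and identify $\rho_{k+1}$ with the hard-sample weight sum when the hard set is unchanged. Your extra attention to the degenerate case of an empty hard-sample set (where strictness must come from $\mathrm{L}_k^{\mathrm{APW}}>0$ rather than from the per-sample bound) is a small refinement that the paper's own chain of inequalities glosses over, but the substance of the two arguments is identical.
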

This result provides an upper bound on the training difficulty and suggests a tendency for $\rho_k$ to decrease as $\mathrm{L}_k^{\mathrm{APW}}$ is minimized. Namely, the APW method can facilitate learning the $e$-level task. Next, we analyze how APW encourages more training samples to become the $K$-retentive samples.
\begin{theorem}[Training Effectiveness]\label{theorem:training_effectiveness}
Let $q \geq  2$ and $\gamma_k = \frac{1}{2} - \rho_k$. Then,
\begin{align*}
\frac{1}{N} \sum_{n=1}^N \mathbf{I}(g_K(\mathbf{z}^{(n)}) \leq 0) 
&\leq \frac{1}{N} \sum_{n=1}^N \exp(-g_K(\mathbf{z}^{(n)})) \\
&= \prod_{k=1}^K Z_k 
\leq \exp\left(-\frac{4}{q} \sum_{k=1}^K \gamma_k^2\right),
\end{align*}
where $\mathbf{I}(\cdot)$ is the indicator function.
\end{theorem}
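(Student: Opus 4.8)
The plan is to follow the AdaBoost training-error template (the update rule in Eq.~\eqref{eq:weight_update} and the step $\alpha_k$ in Eq.~\eqref{eq:alpha_k} are explicitly modeled on it), specialized to the two-valued difficulty signal $\beta_k(\mathbf{z}^{(n)};e)\in\{+1,-1\}$. The displayed chain splits into three independent pieces, and I would treat them in the order written.

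First, the leftmost inequality is pointwise: for every $n$ one has $\mathbf{I}(g_K(\mathbf{z}^{(n)})\le 0)\le \exp(-g_K(\mathbf{z}^{(n)}))$, since the exponential is always positive and is at least $1$ exactly when its exponent is nonnegative, i.e.\ when $g_K(\mathbf{z}^{(n)})\le 0$. Averaging over $n$ yields the first line. For the middle equality, I would unroll the normalized recursion in Eq.~\eqref{eq:weight_update} from $w_0^{(n)}=1/N$: telescoping the per-epoch factors $\exp(-\alpha_k\beta_k(\mathbf{z}^{(n)};e))$ and recognizing $\sum_{k}\alpha_k\beta_k(\mathbf{z}^{(n)};e)=g_K(\mathbf{z}^{(n)})$ from Eq.~\eqref{eq:g_K} gives $w_K^{(n)}=\frac{(1/N)\exp(-g_K(\mathbf{z}^{(n)}))}{\prod_{k=1}^K Z_k}$. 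Summing over $n$ and using $\sum_n w_K^{(n)}=1$ (which the normalizers $Z_k$ are built to enforce) produces $\prod_{k=1}^K Z_k=\frac1N\sum_n\exp(-g_K(\mathbf{z}^{(n)}))$.

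The substance is the rightmost inequality, which I would reduce to a single per-epoch bound $Z_k\le\exp(-\tfrac4q\gamma_k^2)$, after which the product over $k$ is immediate. To obtain $Z_k$ in closed form I split its defining sum into easy ($\beta_k=+1$) and hard ($\beta_k=-1$) samples, giving $Z_k=(1-\rho_k)e^{-\alpha_k}+\rho_k e^{\alpha_k}$, and substitute $\alpha_k=\frac1q\log\frac{1-\rho_k}{\rho_k}$. Writing $s=2\gamma_k=1-2\rho_k$ and $\phi_k=\mathrm{arctanh}(s)$, this collapses to the ratio $Z_k=\cosh\!\big((1-\tfrac2q)\phi_k\big)/\cosh(\phi_k)$, so the target becomes the analytic inequality $\log\cosh\phi_k-\log\cosh\!\big((1-\tfrac2q)\phi_k\big)\ge \tfrac1q\tanh^2\phi_k$.

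The main obstacle is exactly this last inequality, and it is where the hypothesis $q\ge2$ is essential. The step $\alpha_k$ is the $Z_k$-minimizing choice only when $q=2$; for $q>2$ it is a deliberately shrunken step, so the familiar identity $Z_k=2\sqrt{\rho_k(1-\rho_k)}=\sqrt{1-4\gamma_k^2}\le e^{-2\gamma_k^2}$ no longer applies, and a crude Hoeffding bound on $Z_k$ is too loose to recover the constant even at $q=2$. Since $Z_k$ is even in $\gamma_k$, I may assume $\phi_k\ge0$, write the $\log\cosh$ difference as $\int_{1-2/q}^{1}\phi_k\tanh(w\phi_k)\,dw$, lower-bound the integrand by concavity of $\tanh$ through the origin (so $\tanh(w\phi_k)\ge w\tanh\phi_k$ for $w\in[0,1]$), and close with the elementary fact $\tanh\phi_k\le\phi_k$. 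This chain yields exactly $\tfrac1q\tanh^2\phi_k$ precisely when $1-\tfrac2q\ge0$, i.e.\ $q\ge2$; a second-order check as $\phi_k\to0$ confirms the constant $\tfrac4q$ is tight and that $q\ge2$ is the sharp threshold. Multiplying the per-epoch bounds over $k=1,\dots,K$ then completes the argument.
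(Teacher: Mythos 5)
Your proposal is correct, and the first two links of the chain (the pointwise bound $\mathbf{I}(g_K\le 0)\le e^{-g_K}$ and the telescoping of Eq.~\eqref{eq:weight_update} from $w_0^{(n)}=1/N$ combined with $\sum_n w_K^{(n)}=1$) coincide with the paper's argument. Where you genuinely diverge is the crux, the per-epoch bound $Z_k\le\exp(-\tfrac4q\gamma_k^2)$. The paper splits into the cases $q=2$ and $q>2$: for $q=2$ it uses $Z_k=2\sqrt{\rho_k(1-\rho_k)}=\sqrt{1-4\gamma_k^2}\le e^{-2\gamma_k^2}$, and for $q>2$ it factors $Z_k=\big[(1-\rho_k)\rho_k\big]^{1/q}\big[(1-\rho_k)^{1-2/q}+\rho_k^{1-2/q}\big]$, bounds the first factor by $4^{-1/q}e^{-4\gamma_k^2/q}$ via $1-x\le e^{-x}$, and bounds the second by $4^{1/q}$ using concavity and symmetry of $x\mapsto(1-x)^{1-2/q}+x^{1-2/q}$ about $x=\tfrac12$ (which is where $q\ge2$ enters), letting the $4^{\pm 1/q}$ factors cancel. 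Your hyperbolic substitution $Z_k=\cosh\big((1-\tfrac2q)\phi_k\big)/\cosh(\phi_k)$ with $\phi_k=\operatorname{arctanh}(2\gamma_k)$, followed by the integral representation $\log\cosh\phi_k-\log\cosh\big((1-\tfrac2q)\phi_k\big)=\int_{1-2/q}^{1}\phi_k\tanh(w\phi_k)\,dw$ and the bounds $\tanh(w\phi_k)\ge w\tanh\phi_k$ and $\tanh\phi_k\le\phi_k$, yields $\tfrac2q\big(1-\tfrac1q\big)\tanh^2\phi_k\ge\tfrac1q\tanh^2\phi_k=\tfrac4q\gamma_k^2$ for $q\ge 2$; this is a uniform treatment of all $q\ge2$ with no case split, and your second-order expansion near $\phi_k=0$ additionally explains why $q\ge2$ is the sharp threshold for this constant, an insight the paper's proof does not surface. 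The one nitpick is the phrase that the chain yields ``exactly'' $\tfrac1q\tanh^2\phi_k$: it yields $\tfrac2q(1-\tfrac1q)\tanh^2\phi_k$, which dominates $\tfrac1q\tanh^2\phi_k$ for all $q\ge2$ with equality of constants only at $q=2$; this does not affect correctness.
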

This theorem suggests that the training samples are converted into $K$-retentive samples at an exponential rate in $\sum_{k=1}^K \gamma_k^2/q$ as training progresses. The choice of $q$ is important for the convergence rate and in practice, we set $q$ to be equal to the number of training epochs by default. Moreover, if there exists a subsequent $\Delta_K$-convergence phase, the $K$-retentive samples tend to be marked as easy more frequently during this training phase. If this phase is further a $\Delta_K^{+}$-convergence phase (with $A_K>0$), then building on the conclusion of Theorem~\ref{theorem:training_effectiveness}, we provide a probabilistic analysis of how APW increases E-Prop on the training set.
\begin{theorem}[Training Stability]\label{theorem:training_stability}
Let $q \geq 2$ and $\gamma_k = \frac{1}{2} - \rho_k$. Then, under the condition of $ A_K > 0 $, for any $\theta>0$, it holds that
\begin{equation}\label{eq:P_m}
\mathbb{P}_{\mathbf{z} \sim \mathcal{Z}} \left[ m_K(\mathbf{z}) > \theta \right] \geq 1 - \prod_{k=1}^K \delta(\theta,\gamma_k),
\end{equation}
where $\delta(\theta,\gamma_k) := (1 - 2\gamma_k)^{\frac{1 - \theta}{q}} (1 + 2\gamma_k)^{\frac{1 + \theta}{q}} > 0$. Here, $\mathbf{z} \sim \mathcal{Z}$ denotes sampling uniformly at random from the training set $\mathcal{Z}$.
\end{theorem}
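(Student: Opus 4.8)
The plan is to adapt the classical AdaBoost margin-bound argument to the $q$-scaled weight updates of APW. Since $A_K>0$ by hypothesis, the event $\{m_K(\mathbf{z})>\theta\}$ is equivalent to $\{g_K(\mathbf{z})>\theta A_K\}$, so it suffices to upper bound the complementary probability
\begin{equation*}
\mathbb{P}_{\mathbf{z}\sim\mathcal{Z}}\left[m_K(\mathbf{z})\le\theta\right]=\frac{1}{N}\sum_{n=1}^N\mathbf{I}\left(g_K(\mathbf{z}^{(n)})\le\theta A_K\right).
\end{equation*}
First I would pass to the exponential relaxation of the indicator, using $\mathbf{I}(g_K\le\theta A_K)\le\exp(\theta A_K-g_K)$, which is valid since the exponent is nonnegative precisely on the event where the indicator equals $1$. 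Summing over $n$ factorizes the bound as $\exp(\theta A_K)\cdot\frac{1}{N}\sum_n\exp(-g_K(\mathbf{z}^{(n)}))$.

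Next I would invoke Theorem~\ref{theorem:training_effectiveness}, which already identifies $\frac{1}{N}\sum_n\exp(-g_K(\mathbf{z}^{(n)}))=\prod_{k=1}^K Z_k$. Combining this with $\exp(\theta A_K)=\prod_{k=1}^K e^{\theta\alpha_k}$ rewrites the target as a product of per-epoch factors,
\begin{equation*}
\mathbb{P}_{\mathbf{z}\sim\mathcal{Z}}\left[m_K(\mathbf{z})\le\theta\right]\le\prod_{k=1}^K e^{\theta\alpha_k}Z_k,
\end{equation*}
so the whole claim reduces to the single-epoch inequality $e^{\theta\alpha_k}Z_k\le\delta(\theta,\gamma_k)$ for every $k$.

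To verify this, I would compute $Z_k$ explicitly. Splitting the normalizer into easy and hard samples and using $\sum_{\text{easy}}w_{k-1}^{(n)}=1-\rho_k$ gives $Z_k=(1-\rho_k)e^{-\alpha_k}+\rho_k e^{\alpha_k}$; substituting $e^{\alpha_k}=((1-\rho_k)/\rho_k)^{1/q}$ from Eq.~\eqref{eq:alpha_k} yields a closed form. Multiplying by $e^{\theta\alpha_k}=((1-\rho_k)/\rho_k)^{\theta/q}$ and factoring out $(1-\rho_k)^{(1+\theta)/q}\rho_k^{(1-\theta)/q}$ from the two resulting terms leaves
\begin{equation*}
e^{\theta\alpha_k}Z_k=(1-\rho_k)^{\frac{1+\theta}{q}}\rho_k^{\frac{1-\theta}{q}}\left[(1-\rho_k)^{\frac{q-2}{q}}+\rho_k^{\frac{q-2}{q}}\right].
\end{equation*}
On the other hand, writing $1-2\gamma_k=2\rho_k$ and $1+2\gamma_k=2(1-\rho_k)$ shows $\delta(\theta,\gamma_k)=2^{2/q}(1-\rho_k)^{(1+\theta)/q}\rho_k^{(1-\theta)/q}$, so the per-epoch inequality is exactly $(1-\rho_k)^{(q-2)/q}+\rho_k^{(q-2)/q}\le 2^{2/q}$. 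Positivity $\delta(\theta,\gamma_k)>0$ is immediate since $\rho_k\in(0,1)$.

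The main obstacle is this last scalar inequality, and it is where the hypothesis $q\ge2$ enters. Setting $s=(q-2)/q\in[0,1)$, the map $t\mapsto t^{s}$ is concave on $(0,\infty)$, so the symmetric function $\rho\mapsto(1-\rho)^{s}+\rho^{s}$ on $(0,1)$ is maximized at $\rho=\tfrac12$, with maximal value $2\cdot(1/2)^{s}=2^{1-s}=2^{2/q}$. This delivers $(1-\rho_k)^{(q-2)/q}+\rho_k^{(q-2)/q}\le 2^{2/q}$ (with equality iff $\gamma_k=0$), hence $e^{\theta\alpha_k}Z_k\le\delta(\theta,\gamma_k)$ for each epoch. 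Taking the product over $k$ and passing back to the complement yields $\mathbb{P}_{\mathbf{z}\sim\mathcal{Z}}[m_K(\mathbf{z})>\theta]\ge 1-\prod_{k=1}^K\delta(\theta,\gamma_k)$, as required.
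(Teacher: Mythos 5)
Your proof is correct and follows essentially the same route as the paper's: the exponential relaxation of the indicator, the factorization into $\prod_{k} e^{\theta\alpha_k} Z_k$ via the identity $\frac{1}{N}\sum_n \exp(-g_K(\mathbf{z}^{(n)}))=\prod_k Z_k$, and the scalar bound $(1-\rho)^{1-2/q}+\rho^{1-2/q}\le 2^{2/q}$ obtained from concavity and symmetry about $\rho=\tfrac12$ (the paper phrases this as $\phi_2(x)\le\phi_2(\tfrac12)$ and splits the cases $q=2$ and $q>2$, whereas you treat them uniformly). The only nit is your parenthetical ``equality iff $\gamma_k=0$'': for $q=2$ the exponent $(q-2)/q$ vanishes and equality holds for every $\rho_k$, but this does not affect the validity of the inequality.
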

During the $\Delta_K^{+}$-convergence phase, Eq.~\eqref{eq:P_m} provides a lower bound on $\mathbb{P}_{\mathbf{z} \sim \mathcal{Z}} \left[ m_K(\mathbf{z})>\theta \right]$. From the $K$-th epoch to the $(K+\Delta_K)$-th epoch, the product term in Eq.~\eqref{eq:P_m} is updated by the factor
\begin{equation}\label{eq:delta_k}
\eta(\Delta_K) := \prod_{k=K+1}^{K+\Delta_K} \delta(\theta,\gamma_k).
\end{equation}
Under the setup of Theorem~\ref{theorem:training_stability} and the definition of a $\Delta_K^{+}$-convergence phase, we further restrict $\theta$ and provide the following lemma.
\begin{lemma}\label{lemma:theta}
For any $0<\theta \leq \gamma_k<\frac{1}{2}$ and any $k \in \{K+1,\dots,K+\Delta_K\}$, it holds that
\begin{equation*}
\delta(\theta, \gamma_k) \leq \delta(\gamma_k, \gamma_k) \leq \delta(\gamma_{\min}, \gamma_{\min}) < 1,
\end{equation*}
where $\gamma_{\min} := \min\{ \gamma_{K+1},\cdots,\gamma_{K +\Delta_K }\}$.
\end{lemma}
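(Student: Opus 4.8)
The plan is to establish the chain of three inequalities one at a time, exploiting the fact that taking logarithms turns $\delta$ into an expression that is affine in its first argument and amenable to a series comparison in its second. Throughout I write $\delta(\theta,\gamma) = (1-2\gamma)^{(1-\theta)/q}(1+2\gamma)^{(1+\theta)/q}$ and note that the hypothesis $0 < \gamma_k < \tfrac12$ gives $1-2\gamma_k \in (0,1)$ and $1+2\gamma_k \in (1,2)$, so all bases are positive and the logarithms below are well defined. For the first inequality $\delta(\theta,\gamma_k) \le \delta(\gamma_k,\gamma_k)$, I fix $\gamma=\gamma_k$ and view $\theta \mapsto \delta(\theta,\gamma)$. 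Its logarithm
\[
\log\delta(\theta,\gamma) = \frac{1-\theta}{q}\log(1-2\gamma) + \frac{1+\theta}{q}\log(1+2\gamma)
\]
is affine in $\theta$ with slope $\tfrac1q\log\frac{1+2\gamma}{1-2\gamma} > 0$, since $\frac{1+2\gamma}{1-2\gamma} > 1$. Hence $\delta(\cdot,\gamma)$ is increasing, and the hypothesis $\theta \le \gamma_k$ yields the claim.

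For the second inequality I study the diagonal $h(\gamma) := \delta(\gamma,\gamma)$ and show it is strictly decreasing on $(0,\tfrac12)$, which immediately gives $\delta(\gamma_k,\gamma_k)=h(\gamma_k)\le h(\gamma_{\min})=\delta(\gamma_{\min},\gamma_{\min})$ because $\gamma_{\min}\le\gamma_k$ for every $k$. Differentiating $\log h(\gamma) = \tfrac1q\big[(1-\gamma)\log(1-2\gamma)+(1+\gamma)\log(1+2\gamma)\big]$ and collecting the rational terms over the common denominator $1-4\gamma^2$ reduces the sign question to
\[
q\,\frac{d}{d\gamma}\log h(\gamma) = \log\frac{1+2\gamma}{1-2\gamma} - \frac{4\gamma}{1-4\gamma^2}.
\]
Setting $t=2\gamma\in(0,1)$ and comparing the Maclaurin series $\log\frac{1+t}{1-t}=2\sum_{m\ge0}\frac{t^{2m+1}}{2m+1}$ and $\frac{2t}{1-t^2}=2\sum_{m\ge0}t^{2m+1}$ term by term, the difference equals $2\sum_{m\ge0}t^{2m+1}\!\big(\tfrac{1}{2m+1}-1\big)$, whose $m=0$ term vanishes and whose remaining terms are all strictly negative. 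Thus $h'<0$ on $(0,\tfrac12)$.

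The third inequality then follows at once: $h$ extends continuously to $\gamma=0$ with $h(0)=1$, and strict monotonicity gives $h(\gamma_{\min}) < h(0) = 1$. Equivalently, one can show directly that the bracket $(1-\gamma)\log(1-2\gamma)+(1+\gamma)\log(1+2\gamma)$ is negative on $(0,\tfrac12)$ by the same series comparison, which also yields $\delta(\gamma_{\min},\gamma_{\min})<1$.

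The only genuinely delicate step is the second inequality, namely verifying that the diagonal $h$ is monotone decreasing. The subtlety is that, along the diagonal, both exponents and both bases of $\delta$ vary simultaneously, so the elementary affine-in-$\theta$ argument used for the first inequality no longer applies. The cleanest route is the term-by-term comparison of $\operatorname{artanh}(t)$ against $\frac{t}{1-t^2}$ described above; alternatively one may invoke the standard inequality $\operatorname{artanh}(t) < \frac{t}{1-t^2}$ for $t\in(0,1)$, which encodes exactly the same fact. Everything else is routine logarithmic differentiation and sign-checking.
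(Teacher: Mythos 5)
Your proof is correct and follows essentially the same route as the paper's: both reduce the first inequality to the fact that $\log\delta(\cdot,\gamma_k)$ is affine with positive slope in $\theta$, and both reduce the remaining two inequalities to showing that the diagonal function $\psi(\gamma)=(1-\gamma)\log(1-2\gamma)+(1+\gamma)\log(1+2\gamma)$ is strictly decreasing and negative on $(0,\tfrac12)$. The only difference is cosmetic: you establish $\psi'<0$ by a term-by-term Maclaurin comparison of $\log\frac{1+t}{1-t}$ against $\frac{2t}{1-t^2}$, whereas the paper deduces it from the symmetry and concavity of $\psi$ about $0$; both are routine and both are valid.
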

Thus, the factor $\eta(\Delta_K)$ is upper-bounded by an exponential term $\eta(\Delta_K) \leq {\delta(\gamma_{\min}, \gamma_{\min})}^{\Delta_K}$. Moreover, this lemma indicates that a larger $\gamma_{\min}$ permits a more reasonable $\theta \leq \gamma_{\min}$, and thus makes the lower bound in Eq.~\eqref{eq:P_m} more effective. In this case, the proportion of training samples with a large $m_{K+\Delta_K}(\mathbf{z})$ is expected to increase over this training phase, and thus Eq.~\eqref{ineq:m_K} is more likely to hold for more training samples, which indicates that the easy status is well maintained.

Finally, this naturally motivates the following question: for a network trained using the APW method, if most training samples are $K$-retentive and have large values of $m_{K+\Delta_K}(\mathbf{z}^{(n)})$ after a $\Delta_K^{+}$-convergence phase, we investigate whether these properties lead to a large E-Prop on the test set. We present the following theorem.
\begin{theorem}[Generalization Performance]\label{theorem:generalization_error}
Consider a $\Delta_K$-convergence phase. Let $\mathcal{Z}$ be a set of $N$ i.i.d. training samples $\mathbf{z}$ taken from the distribution $\mathcal{D}$ over the sample space. Define a function class $\mathcal{B}$ as:
\begin{equation*}
\mathcal{B} = \left\{ b_{\Delta_K} \mid b_{\Delta_K}(\mathbf{z}) = \frac{g_{K+\Delta_K}(\mathbf{z})-g_K(\mathbf{z})}{A_{K+\Delta_K}-A_K}\right\}, 
\end{equation*}
where the functions $\beta_k$ are determined by the network parameters. Then, for any $\varepsilon > 0$ and any $\theta > 0$, with probability at least
\begin{equation*}
1-2\mathcal{N}\left(\mathcal{B},\frac{\theta}{2},\frac{\varepsilon}{8},2N\right)
\exp\left(-\frac{\varepsilon^2N}{32}\right),
\end{equation*}
it holds that for any $b_{\Delta_K} \in \mathcal{B}$,
\begin{equation*}
\mathbb{P}_{\mathbf{z}\sim\mathcal{D}}\left[b_{\Delta_K}(\mathbf{z})\leq0\right]\leq\mathbb{P}_{\mathbf{z}\sim \mathcal{Z}}\left[b_{\Delta_K}(\mathbf{z})\leq\theta\right]+\varepsilon,
\end{equation*}
where $\mathcal{N}\left(\mathcal{B},{\theta/2},{\varepsilon/8},2N\right)$ is the $\theta/2$-covering number of $\mathcal{B}$ over all training sets of size $2N$ with $\varepsilon/8$-sloppy~\cite{661502}.
\end{theorem}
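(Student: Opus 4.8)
The plan is to recognize $b_{\Delta_K}$ as a \emph{normalized voting margin} of a positively weighted ensemble and then invoke the covering-number margin-bound machinery of the boosting literature~\cite{661502}. First I would observe that throughout a $\Delta_K$-convergence phase we have $\rho_k < \frac{1}{2}$, hence $\alpha_k > 0$ for every $k \in \{K+1,\dots,K+\Delta_K\}$ by Eq.~\eqref{eq:alpha_k}. Since $g_{K+\Delta_K}(\mathbf{z}) - g_K(\mathbf{z}) = \sum_{k=K+1}^{K+\Delta_K} \alpha_k\,\beta_k(\mathbf{z};e)$ and $A_{K+\Delta_K} - A_K = \sum_{k=K+1}^{K+\Delta_K} \alpha_k > 0$, the function $b_{\Delta_K}(\mathbf{z})$ is a convex combination of the $\pm 1$-valued base decisions $\beta_k(\mathbf{z};e)$, so $b_{\Delta_K}(\mathbf{z}) \in [-1,1]$. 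This identifies $\mathcal{B}$ with the class of normalized margins of a positively weighted ensemble of binary easy/hard classifiers, placing us exactly in the setting where margin-based generalization bounds apply, with the event $\{b_{\Delta_K}(\mathbf{z}) \leq 0\}$ playing the role of a misclassification and $\{b_{\Delta_K}(\mathbf{z}) \leq \theta\}$ the role of a margin violation.

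With this identification, I would bound $\mathbb{P}\big[\sup_{b \in \mathcal{B}} (\mathbb{P}_{\mathbf{z} \sim \mathcal{D}}[b(\mathbf{z}) \leq 0] - \mathbb{P}_{\mathbf{z} \sim \mathcal{Z}}[b(\mathbf{z}) \leq \theta]) > \varepsilon\big]$ through the standard three-step route. The first step is symmetrization: introduce an independent ghost sample $\mathcal{Z}'$ of size $N$ and show that the event above implies, with at least constant probability, an analogous two-sample discrepancy between $\mathcal{Z}$ and $\mathcal{Z}'$ measured at the intermediate threshold $\frac{\theta}{2}$ with deviation exceeding $\frac{\varepsilon}{2}$; this yields the leading factor of $2$ and moves the analysis onto the combined sample of size $2N$. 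The second step is the covering reduction: using the $\frac{\theta}{2}$-cover of $\mathcal{B}$ with $\frac{\varepsilon}{8}$-sloppiness over the $2N$ points, replace each $b$ by its nearest cover element $\hat{b}$, which agrees with $b$ to within $\frac{\theta}{2}$ on all but an $\frac{\varepsilon}{8}$ fraction of the points. Because the cover scale is exactly half the margin gap, on the well-approximated points the cover element at threshold $\frac{\theta}{2}$ separates the two original thresholds, namely $\{b(\mathbf{z}) \leq 0\}$ implies $\{\hat{b}(\mathbf{z}) \leq \frac{\theta}{2}\}$ and $\{b(\mathbf{z}) > \theta\}$ implies $\{\hat{b}(\mathbf{z}) > \frac{\theta}{2}\}$, so the continuous-class discrepancy reduces to a single-threshold discrepancy for the finite cover, with the sloppy fraction charged against the $\varepsilon$ budget. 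The third step applies a union bound over the finite cover of cardinality $\mathcal{N}(\mathcal{B}, \frac{\theta}{2}, \frac{\varepsilon}{8}, 2N)$ together with Hoeffding's inequality for each fixed cover element, comparing the two halves of the combined sample; this produces the per-element tail $\exp(-\frac{\varepsilon^2 N}{32})$.

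Reassembling the three bounds, the factor $2$ from symmetrization, the cardinality $\mathcal{N}(\mathcal{B}, \frac{\theta}{2}, \frac{\varepsilon}{8}, 2N)$ from the union bound, and the per-element tail $\exp(-\frac{\varepsilon^2 N}{32})$ combine into the claimed failure probability, and on the complementary event the stated inequality holds uniformly over $b_{\Delta_K} \in \mathcal{B}$. I expect the main obstacle to be the second step: the bookkeeping of the $\varepsilon$ budget across symmetrization ($\frac{\varepsilon}{2}$), sloppiness ($\frac{\varepsilon}{8}$), and the residual Hoeffding deviation is delicate, since the constants $8$ and $32$ in the statement are pinned down precisely by how the single cover scale $\frac{\theta}{2}$ is made to serve the two distinct thresholds $0$ and $\theta$ simultaneously. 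Verifying that the sloppy-cover definition of~\cite{661502} is strong enough for this double-threshold transfer, and that the excluded fraction of points can be charged against the deviation rather than ignored, is the crux, whereas the first and third steps are essentially the classical symmetrization and union-bound-plus-Hoeffding arguments.
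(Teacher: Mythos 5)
Your proposal is correct and follows essentially the same route as the paper: both arguments rest on observing that during a $\Delta_K$-convergence phase every $\alpha_k>0$, so $b_{\Delta_K}(\mathbf{z})=\sum_{k=K+1}^{K+\Delta_K}\frac{\alpha_k}{\sum_j\alpha_j}\beta_k(\mathbf{z};e)$ is a convex combination of $\pm1$-valued base functions, which places $\mathcal{B}$ exactly in the convex-hull setting of the Schapire--Freund--Bartlett--Lee margin bound with $\{b_{\Delta_K}\leq 0\}$ as the error event and $\{b_{\Delta_K}\leq\theta\}$ as the margin-violation event. The only difference is packaging: the paper invokes Theorem 4 of that work as a black box and spends its proof on the translation from $yf(x)$ to a generic real-valued function on the sample space (plus the remark that the bound is uniform over a pre-specified parameter class and hence applies to the learned $\Theta_k$), whereas you re-derive that theorem via the standard symmetrization, sloppy-cover, and Hoeffding steps --- which is the classical argument and yields the same constants.
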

This theorem is adapted from \cite{bartlett1998boosting} and specialized to the APW formulation. Based on the discussion of Lemma~\ref{lemma:ineq:m_K} and Theorems~\ref{theorem:training_effectiveness} and \ref{theorem:training_stability}, APW can increase E-Prop on the training set and maintain this status well. Consequently, Theorem~\ref{theorem:generalization_error} shows that E-Prop on the test set is expected to increase accordingly. Reliable confidence estimates and effective risk/loss control are crucial for deployment in real-world applications~\cite{guo2017calibration,angelopoulos2022conformal}. The APW method increases E-Prop on the test set and thus offers strong practical value. In Appendix~\ref{appendix:APW}, we further provide intuitive examples to illustrate the core idea of APW and the theoretical findings.

\section{Experiments}\label{section:experiments}

In this section, we consider multiple datasets spanning image classification, NLP, and graph machine learning. Specifically, we use CIFAR-10/100~\cite{krizhevsky2009learning}, CIFAR-10N/100N~\cite{wei2021learning}, WebVision and Mini-WebVision~\cite{li2017WebVision} for image classification, RTE~\cite{wang2018glue} for NLP, and NCI1~\cite{morris2020tudataset} for graph machine learning. The detailed description of these datasets (including their label-noise ratios $p_{\text{noise}}$), as well as the network architectures and optimizer settings used in our experiments, can be found in Appendix~\ref{appendix:descriptions}.

{\bf Hyperparameter $q$ in APW.} 
The hyperparameters in the APW method are designed to be interpretable. Throughout this section, we set $q$ equal to the number of training epochs $K^\dagger$ (see the discussion of Theorem~\ref{theorem:training_effectiveness}). We further examine the sensitivity of APW to $q$ in Section~\ref{section:experiments:benchmark}.

{\bf Hyperparameter $e$ in APW.} 
Our default choice of $e$ is guided by the estimated label-noise rate $p_{\text{noise}}$ and is set in an interpretable manner. 1) For clean datasets ($p_{\text{noise}}=0$), we set $e = \log(2)$ and mark a training sample as easy only if the predicted probability of the correct class is at least 50\%. 2) For datasets that inherently contain label noise, we set $e = \log(2) + \log(1 - p_{\text{noise}})$, since fewer training samples can be considered easy samples in this case. 3) For datasets with synthetic label noise at rate $p_{\text{noise}}$, we set $e = \log(2) - \log(1 - p_{\text{noise}})$, because such synthetic noise increases the overall training loss and thus requires a larger error threshold $e$. In addition, we evaluate the parameter dissimilarity (PD) method proposed in~\cite{zhou2025new} to automatically determine the value of $e$ in Section~\ref{section:experiments:benchmark}, with implementation details provided in Appendix~\ref{appendix:implementation:PD}.

{\bf Evaluation Metrics.}
During training, we use the dataset-dependent default threshold $e$ defined above. In the evaluation phase, we compute E-Prop on the evaluation set (the test set when available; otherwise the validation set) using a fixed threshold $\log(2)$ (denoted as E-Prop$^\dagger$). Namely, E-Prop$^\dagger$ we report is the proportion of samples with $\mathrm{L}^{(n)}\leq \log(2)$. Since $\mathrm{L}^{(n)}\leq \log(2)$ corresponds to assigning at least 50\% probability to the true class, it indicates a confident correct prediction. We denote the accuracy on the evaluation set by ``T-ACC'' and report both T-ACC and E-Prop$^\dagger$ averaged over 5 runs.

{\bf Notation for APW Variants.} 
We denote the vanilla baseline that minimizes the standard mini-batch average loss (i.e., uniform sample weights) as ``Vanilla''. For each of APW, S-APW, and M-APW, we implement three weighting approaches, namely E-weighting, I-weighting, and EI-weighting, and denote the resulting variants as APW-\{E, I, EI\}, S-APW-\{E, I, EI\}, and M-APW-\{E, I, EI\}. In addition, S-APW-A denotes that after sampling is completed, we train the model with the vanilla average loss. Among the three weighting approaches, we recommend E-weighting as the default for simplicity and efficiency. For brevity, we refer to APW-\{E, I, EI\} as APWs, S-APW-\{A, E, I, EI\} as S-APWs, and M-APW-\{E, I, EI\} as M-APWs.

\subsection{Alignment with Fair Benchmark}\label{section:experiments:benchmark}

CurML establishes a fair benchmark~\cite{zhou2022curml}, which evaluates most existing CL methods on clean CIFAR-10 and CIFAR-10 with 40\% synthetic label noise. These CL methods include methods assigning binary sample weights ({\it e.g.,}~\cite{jiang2015self,zhou2018minimax,matiisen2019teacher,kong2021adaptive}) and methods assigning soft sample weights ({\it e.g.,}~\cite{kim2018screenernet,shu2019meta,castells2020superloss}). Following the experimental setup in this fair comparison, we evaluate the proposed APW and S-APW on CIFAR-10 and CIFAR-100, with and without 40\% synthetic label noise. We use (N-40) to denote the dataset with 40\% synthetic label noise. The network architecture and optimizer settings are kept consistent with CurML, and we refer to this network as Simple-CNN. 

\begin{figure}[htbp]
\centering
\includegraphics[width=0.7\textwidth]{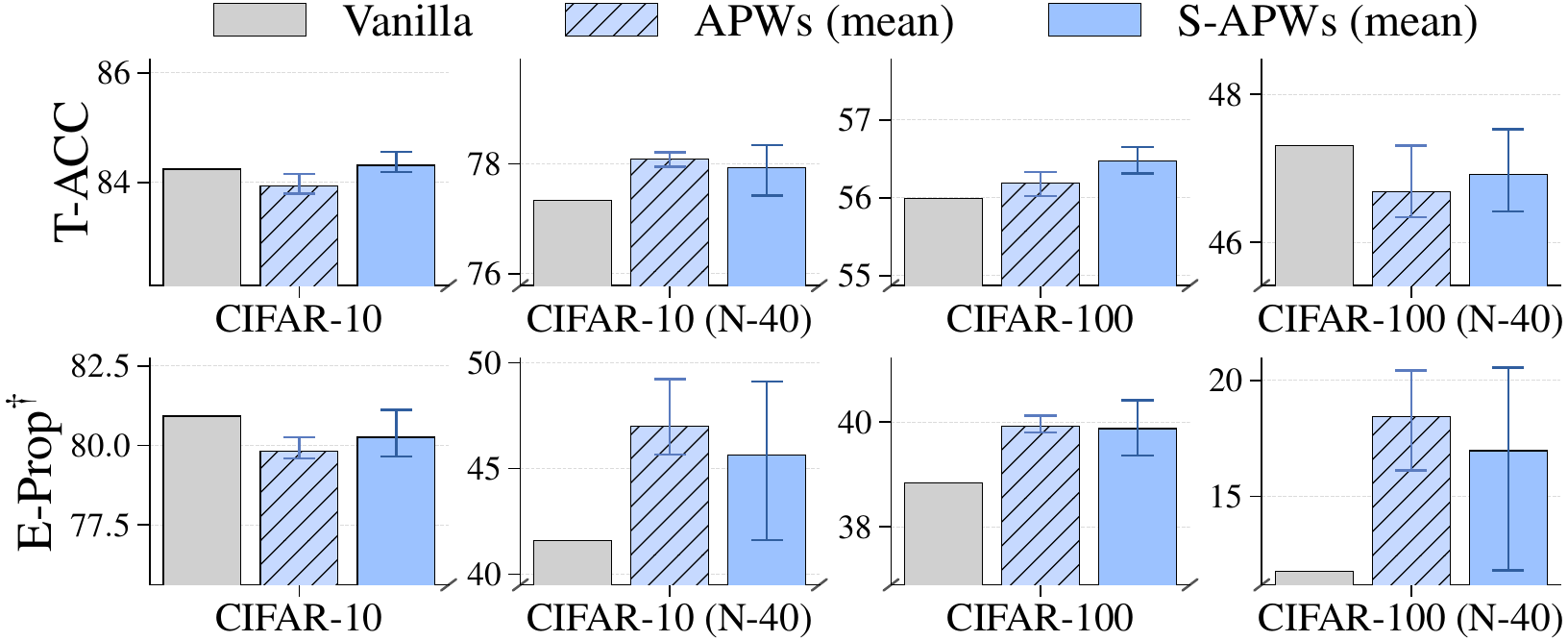}
\caption{Experimental results on CIFAR-10/100 and CIFAR-10/100 (N-40) trained using Simple-CNN. For APWs and S-APWs, bars denote mean performance over the three weighting approaches (E-weighting, I-weighting, and EI-weighting) and error bars span the minimum and maximum across these strategies.}
\label{fig:CIFAR_10_100}
\end{figure}

Figure~\ref{fig:CIFAR_10_100} shows the experimental results. On clean CIFAR-10, APW and S-APW perform slightly worse than Vanilla under most of the three weighting approaches, in line with CurML's observations for other CL methods (e.g.,~\cite{zhou2018minimax,kong2021adaptive}). Since Simple-CNN already performs very well on clean CIFAR-10, this easy task leaves limited headroom for CL methods and may even lead to slightly worse performance than the vanilla baseline. In contrast, on CIFAR-10 (N-40) and clean CIFAR-100, both APW and S-APW substantially outperform Vanilla. However, on CIFAR-100 (N-40), APW and S-APW achieve poor T-ACC while obtaining a significant improvement in E-Prop$^\dagger$. We attribute this behavior to an overly small $q$ ({\it cf.} the discussion of Theorem~\ref{theorem:training_effectiveness}), which makes the APW method downweight the hard samples too aggressively and consequently weakens its guidance effect ({\it cf.}~Section~\ref{section:APW:discussion}).

\begin{figure}[htbp]
\centering
\includegraphics[width=0.75\textwidth]{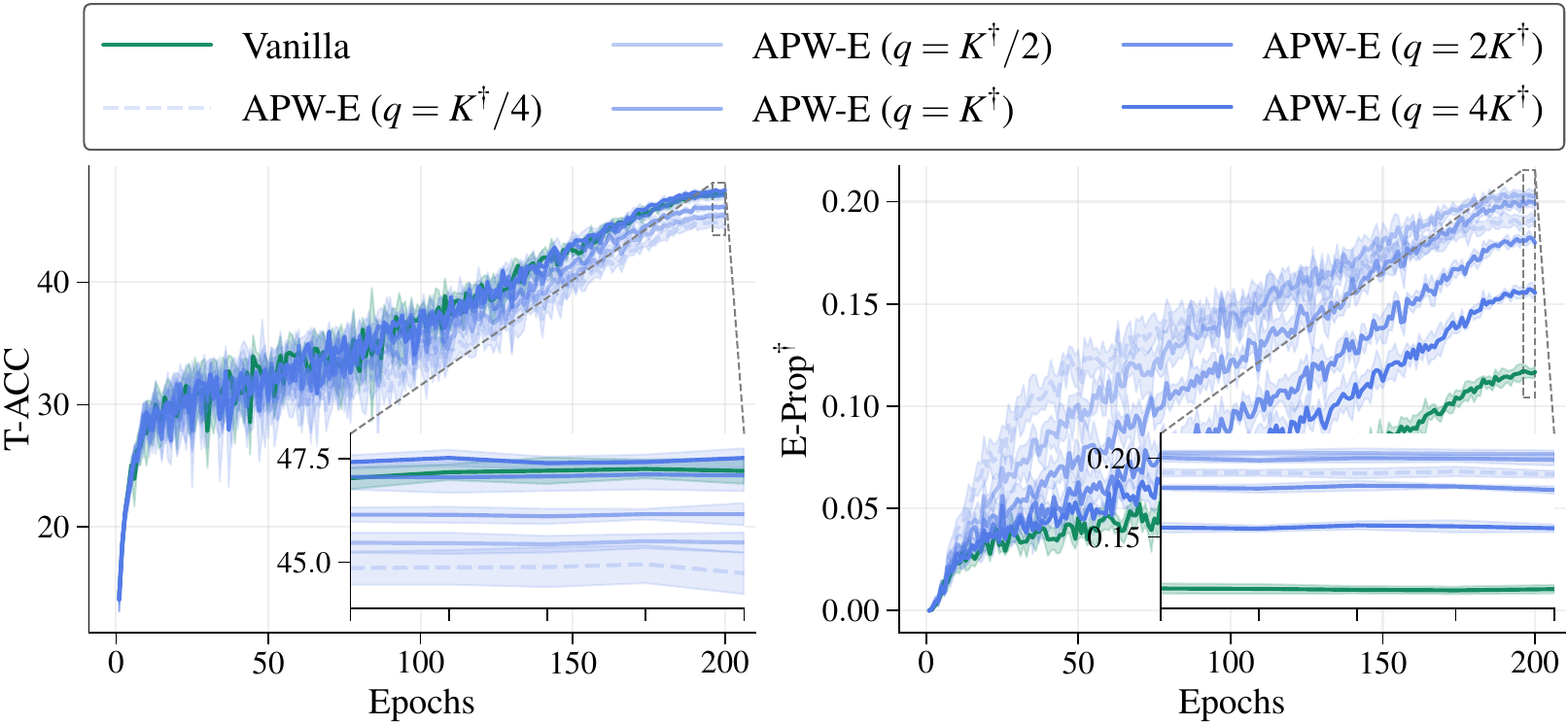}
\caption{Sensitivity of APW to $q$. Experimental results on CIFAR-100 (N-40) trained using Simple-CNN. We report T-ACC and E-Prop$^\dagger$ of APW-E under different $q \in \{K^\dagger/4, K^\dagger/2, K^\dagger, 2K^\dagger, 4K^\dagger\}$. Each curve shows the mean over 5 runs, with shaded regions indicating $\pm$ one standard deviation.}\label{fig:different_q}
\end{figure}

To further support this interpretation, we conduct a sensitivity analysis of the choice of $q$ in APW. As shown in Figure~\ref{fig:different_q}, increasing $q$ gradually weakens the denoising effect (E-Prop$^\dagger$) while strengthening the guidance effect (T-ACC). When $q=4K^\dagger$, APW achieves a better balance by maintaining the denoising while restoring the guidance. However, simply decreasing $q$ does not monotonically enhance the denoising, as E-Prop$^\dagger$ at $q=K^\dagger/4$ is lower than that at $q=K^\dagger/2$. These results suggest that tuning $q$ can improve the performance of APW, especially when the model struggles to fit the dataset; in such cases, a larger $q$ is a more robust choice.

\begin{table*}[htbp]
\centering
\caption{Experimental results on WebVision trained using Inception-ResNet-v2. We evaluate the proposed APW, S-APW, and M-APW using the three weighting approaches (E-weighting, I-weighting, and EI-weighting) and report the Top-1-based metrics T-ACC and E-Prop$^\dagger$. In each row, the entries marked with {\setlength{\fboxsep}{2pt}\colorbox{tablemidblue}{\textbf{blue}}} represent the maximum value, and the entries marked with {\setlength{\fboxsep}{2pt}\colorbox{tablelightblue}{lightblue}} represent the second-highest value. }\label{tab:details_webvision}
\begin{threeparttable}
\resizebox{\linewidth}{!}{
\begin{tabular}{l|c|ccc|cccc|ccc}
\hline
\multicolumn{1}{l|}{}                                   & \multicolumn{1}{c|}{}                            & \multicolumn{3}{c|}{APWs}                                                                                   & \multicolumn{4}{c|}{S-APWs}                                                                                                                        & \multicolumn{3}{c}{M-APWs} \\ \cline{3-12}
\multicolumn{1}{l|}{\multirow{-2}{*}{Metrics}}          & \multicolumn{1}{c|}{\multirow{-2}{*}{Vanilla}}   & \multicolumn{1}{c}{APW-E}              & \multicolumn{1}{c}{APW-I}  & \multicolumn{1}{c|}{APW-EI}           & \multicolumn{1}{c}{S-APW-A}       & \multicolumn{1}{c}{S-APW-E}  & \multicolumn{1}{c}{S-APW-I}        & \multicolumn{1}{c|}{S-APW-EI}              & \multicolumn{1}{c}{M-APW-E}                    & \multicolumn{1}{c}{M-APW-I}               & \multicolumn{1}{c}{M-APW-EI}   \\ \hline
T-ACC                                                   & 66.13                                            & \cellcolor{tablelightblue}{66.23}      & 66.15                      & 66.14                                 & 66.02                             & 66.20                        & 66.14                              & \cellcolor{tablemidblue}\textbf{66.25}     & 66.13                                          & 66.20                                     & 66.16 \\
E-Prop$^\dagger$                                        & 51.80                                            & 53.41                                  & 52.54                      & 52.95                                 & 51.76                             & 53.47                        & 53.24                              & 53.20                                      & 62.86                                          & \cellcolor{tablemidblue}\textbf{62.94}    & \cellcolor{tablelightblue}{62.89} \\ \hline
\end{tabular}}
\end{threeparttable}
\end{table*}

CurBench~\cite{zhou2024curbench} includes the representative CL methods from CurML~\cite{zhou2022curml} and extends the benchmark settings beyond image classification to NLP and graph machine learning. Following the showcased benchmark examples in CurBench, we fine-tune BERT~\cite{devlin2019bert} on RTE and train GCN~\cite{kipf2016semi} on NCI1 for a fair comparison. In addition, we evaluate APW with two choices of $e$, namely, the default setting and an automatically selected value. Specifically, we determine $e$ using the PD method proposed in~\cite{zhou2025new}, which automatically detects the transition point between two phases of the training process. As shown in Table~\ref{tab:nlp_and_graph}, the results obtained with the default $e$ and the automatically selected $e$ are comparable. The default $e$ usually performs better, likely because it leverages the estimated label-noise rate.

\begin{table}[htbp]
\centering
\caption{Sensitivity of APW to $e$. Experimental results on RTE trained using BERT and NCI1 trained using GCN. We report T-ACC of APW-E when using the default $e$ and the PD-estimated $e$ during training. Since both datasets are binary classification tasks, T-ACC equals E-Prop$^\dagger$ under our evaluation definition. Bold entries indicate the best result in each row.}\label{tab:nlp_and_graph}
\resizebox{0.7\linewidth}{!}{
\begin{tabular}{l|cccc}
\hline
\multicolumn{1}{l|}{\multirow{2}{*}{Method}} & \multicolumn{4}{c}{Dataset} \\ \cline{2-5}
\multicolumn{1}{l|}{}                        & RTE            & RTE (N-40)     & NCI1           & NCI1 (N-40) \\ \hline
Vanilla                                      & 64.40          & 54.42          & 69.99          & 55.67 \\
APW-E (Default $e$)                          & 65.20          & \textbf{57.13} & 71.70          & \textbf{58.03} \\
APW-E (PD-estimated $e$)                     & \textbf{65.49} & 56.32          & \textbf{71.94} & 57.24 \\ \hline
\end{tabular}    
}
\end{table}

\subsection{Comparison with Loss-Reweighting Methods}\label{section:experiments:comparison}

Following the categorization in CurBench~\cite{zhou2024curbench}, the APW method falls into the ``via Loss Reweighting'' category of CL methods, {\it i.e.,} it is a loss-reweighting CL method. Therefore, we use the representative loss-reweighting methods from CurBench, including ScreenerNet~\cite{kim2018screenernet}, MW-Net~\cite{shu2019meta}, and SuperLoss~\cite{castells2020superloss}, as comparison methods.

To assess the effectiveness of APW under real-world label noise, we conduct comparative experiments on CIFAR-10N, CIFAR-100N, and Mini-WebVision. CIFAR-10N and CIFAR-100N are noisy variants of CIFAR-10 and CIFAR-100, respectively, and their labels are obtained via human re-annotation. Thus, these two datasets contain label noise reflecting real-world annotation errors. Mini-WebVision is a subset of WebVision, and it is commonly used as a substitute for the full WebVision dataset for rapid comparison of CL methods. We consider four network architectures: ResNet-18~\cite{he2016deep}, EfficientNet~\cite{tan2019efficientnet}, ConvNeXt~\cite{liu2022convnet}, and MaxViT~\cite{tu2022maxvit}. These networks span residual CNNs (ResNet), modern CNN designs (EfficientNet and ConvNeXt), and hybrid transformer models (MaxViT) to enable a more comprehensive evaluation.

\begin{table*}[htbp]
\centering
\caption{Comparison of T-ACC for different CL methods on three datasets (CIFAR-10N, CIFAR-100N, and Mini-WebVision) trained using four backbone models (ResNet-18, EfficientNet, ConvNeXt, and MaxViT). Each table cell shows a bar chart where the x-axis denotes the CL methods and the y-axis denotes T-ACC. For APWs and M-APWs, bars indicate the mean performance over the three weighting approaches (E-weighting, I-weighting, and EI-weighting), and error bars span the minimum and maximum across these strategies. The bottom row lists the CL method corresponding to each bar color.}\label{tab:comparison_CL}
\small 
\begin{threeparttable} 
\setlength{\tabcolsep}{2pt}
\begin{tabular}{ c| >{\centering\arraybackslash}m{0.22\linewidth} 
                | >{\centering\arraybackslash}m{0.22\linewidth} 
                | >{\centering\arraybackslash}m{0.22\linewidth} 
                | >{\centering\arraybackslash}m{0.22\linewidth}}
\toprule[0.7pt]
\multicolumn{1}{c}{} 
& ResNet-18
& EfficientNet
& ConvNeXt 
& MaxViT \\
\midrule[0.5pt]
\makecell[c]{\rotatebox{90}{{CIFAR-10N}}}
&\begin{minipage}[b]{0.22\textwidth}
\centering
\includegraphics[width=\linewidth]{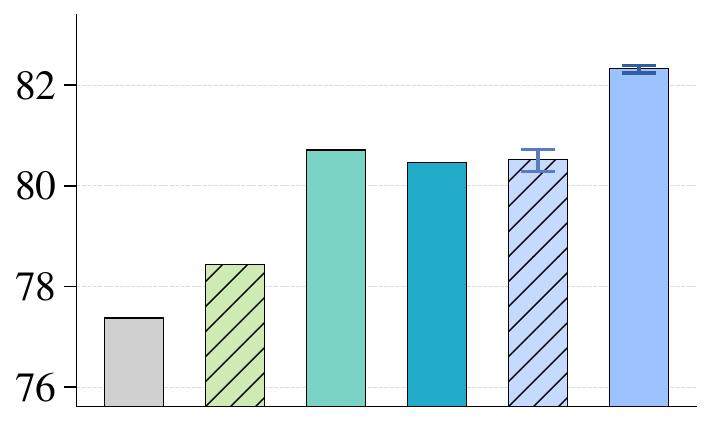}
\end{minipage}
&\begin{minipage}[b]{0.22\textwidth}
\centering
\includegraphics[width=\linewidth]{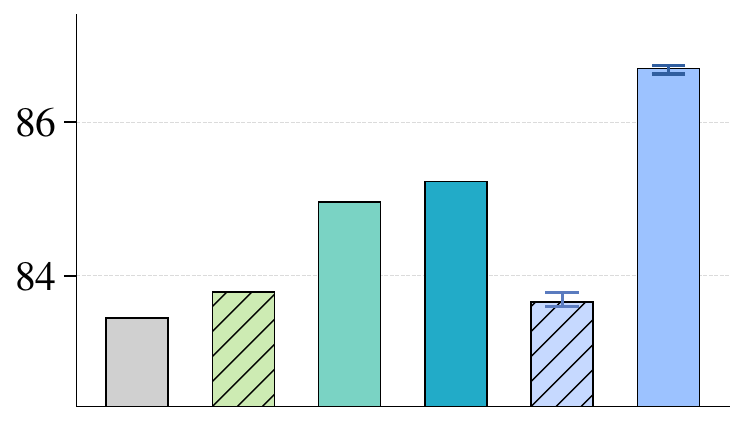}
\end{minipage}
&\begin{minipage}[b]{0.22\textwidth}
\centering
\includegraphics[width=\linewidth]{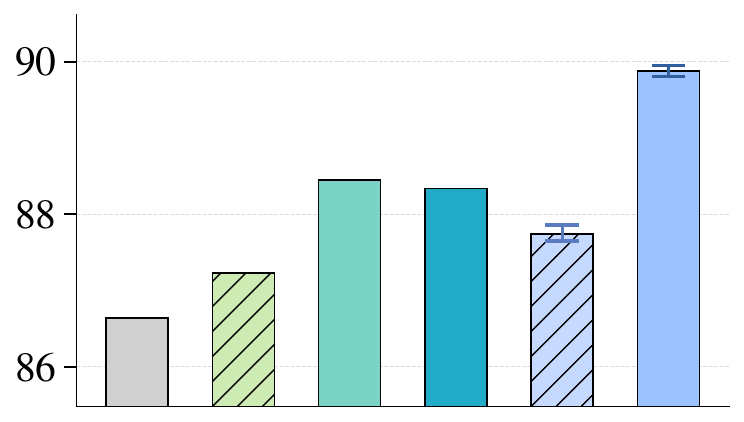}
\end{minipage}
&\begin{minipage}[b]{0.22\textwidth}
\centering
\includegraphics[width=\linewidth]{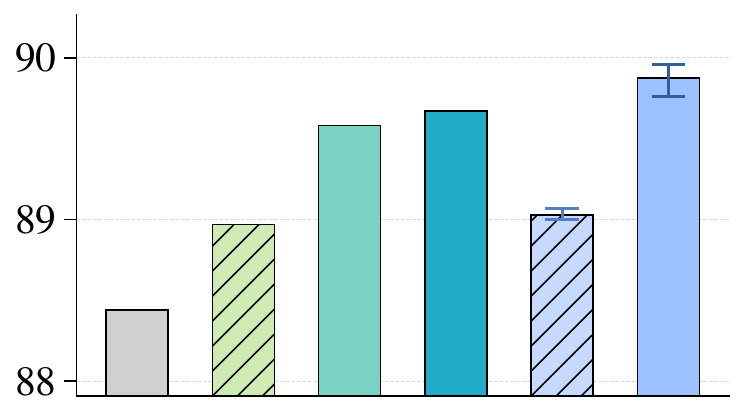}
\end{minipage}
\\
\midrule[0.5pt]
\makecell[c]{\rotatebox{90}{{CIFAR-100N}}}
&\begin{minipage}[b]{0.22\textwidth}
\centering
\includegraphics[width=\linewidth]{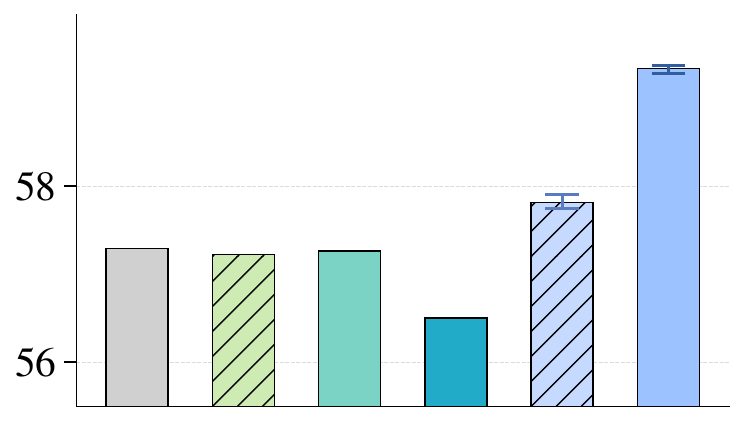}
\end{minipage}
&\begin{minipage}[b]{0.22\textwidth}
\centering
\includegraphics[width=\linewidth]{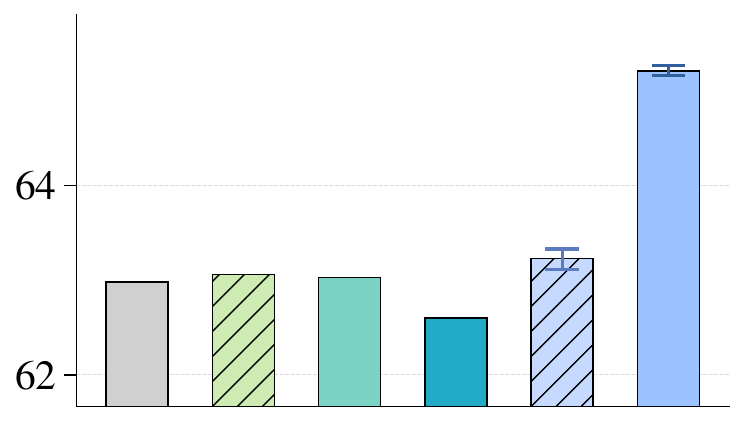}
\end{minipage}
&\begin{minipage}[b]{0.22\textwidth}
\centering
\includegraphics[width=\linewidth]{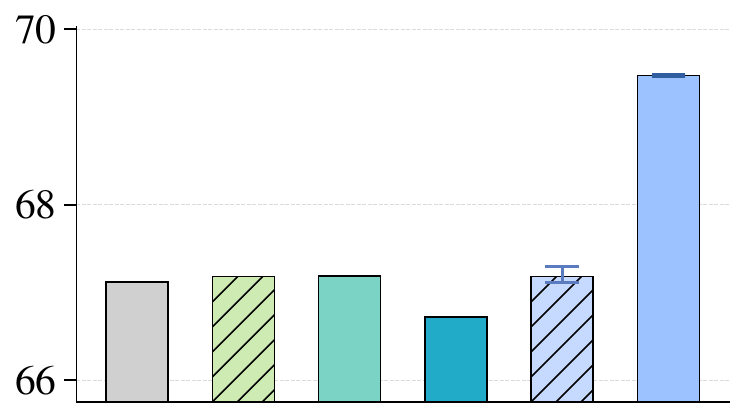}
\end{minipage}
&\begin{minipage}[b]{0.22\textwidth}
\centering
\includegraphics[width=\linewidth]{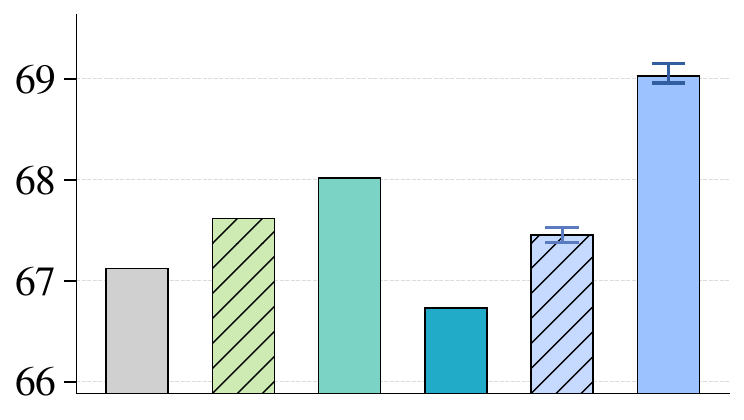}
\end{minipage}
\\
\midrule[0.5pt]
\makecell[c]{\rotatebox{90}{{Mini-WebVision}}}
&\begin{minipage}[b]{0.22\textwidth}
\centering
\includegraphics[width=\linewidth]{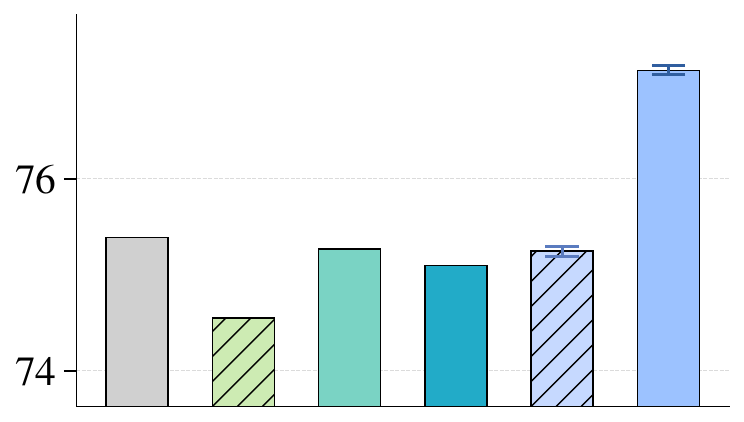}
\end{minipage}
&\begin{minipage}[b]{0.22\textwidth}
\centering
\includegraphics[width=\linewidth]{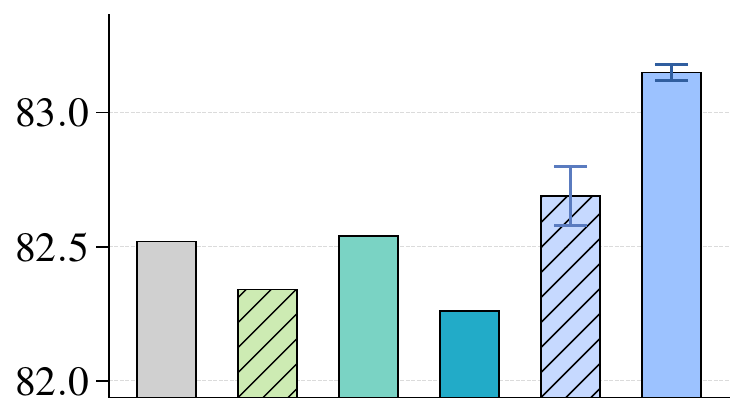}
\end{minipage}
&\begin{minipage}[b]{0.22\textwidth}
\centering
\includegraphics[width=\linewidth]{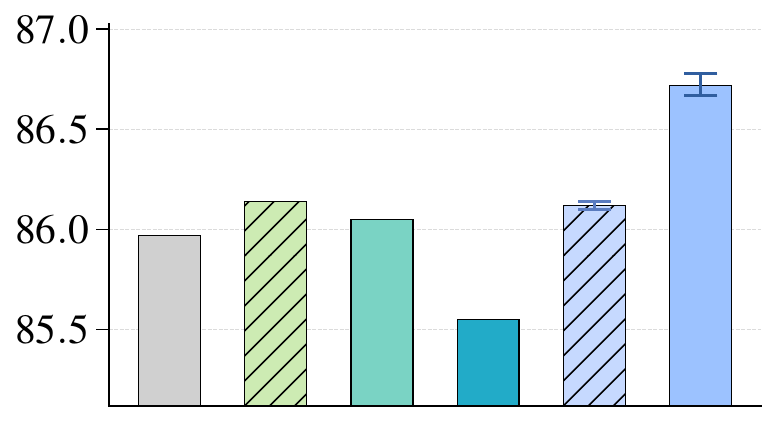}
\end{minipage}
&\begin{minipage}[b]{0.22\textwidth}
\centering
\includegraphics[width=\linewidth]{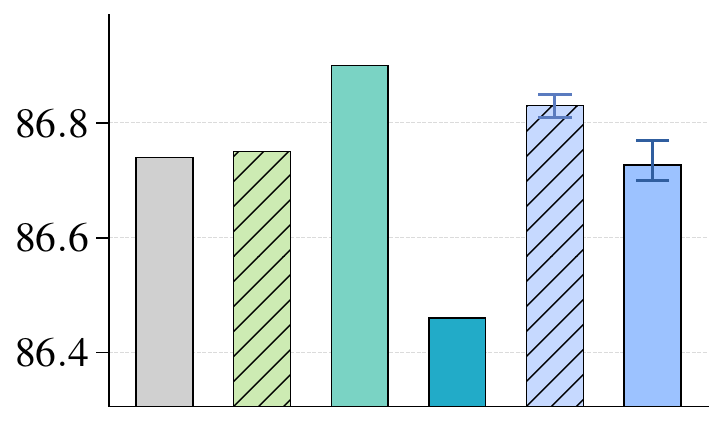}
\end{minipage}
\\
\midrule[0.5pt]
\multicolumn{5}{c}{%
\includegraphics[width=0.88\linewidth]{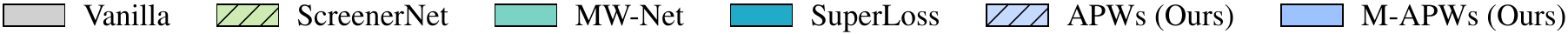}
} 
\\[-3pt]
\bottomrule[0.7pt]
\end{tabular}
\end{threeparttable}
\end{table*}

Table~\ref{tab:comparison_CL} reports the comparative results. It can be observed that, across most experimental settings, APW consistently improves over the vanilla baseline, and M-APW further outperforms both APW and other loss-reweighting methods. These results demonstrate the effectiveness of the proposed methods on the datasets with real-world label noise, and show that they can be applied to diverse network architectures. Additionally, other loss-reweighting methods often exhibit inconsistent performance within the same dataset across different networks, and in some settings they even fail to surpass Vanilla. This observation is consistent with the findings of CurBench~\cite{zhou2024curbench}, which show that the existing CL methods typically require careful hyperparameter tuning for specific problems and that no CL method is universally optimal across all scenarios.

Using the same experimental settings as in Table~\ref{tab:comparison_CL}, we conduct an ablation study on M-APW. Specifically, in Appendix~\ref{appendix:implementation:Ablation-M-APW}, we compare standard mixup and M-APW using the metrics T-ACC and E-Prop$^\dagger$. The experimental results show that the standard mixup improves T-ACC over the vanilla baseline on real-world datasets, but it often reduces E-Prop$^\dagger$. In contrast, M-APW consistently improves both T-ACC and E-Prop$^\dagger$. These findings indicate that M-APW combines APW's ability to increase E-Prop with mixup's robustness to label noise.

\subsection{Comprehensive Evaluation of APW and Its Variants on WebVision}\label{section:experiments:WebVision}

We consider WebVision~\cite{li2017WebVision}, which is a large-scale real-world dataset containing over 2.4 million images across 1,000 categories. The images were crawled from publicly available web sources. The label noise arises from the web scraping process, and includes 1) in-domain noise, where image labels may be misinterpreted or incorrect, and 2) out-of-domain noise, where images from unrelated classes are mistakenly assigned to the target categories. For the network architecture, we adopt the commonly used baseline model for WebVision, the Inception-ResNet-v2 backbone~\cite{szegedy2017inception}.

We comprehensively evaluate the proposed APW, S-APW, and M-APW using the three weighting approaches on the WebVision dataset. As shown in Table~\ref{tab:details_webvision}, compared with the vanilla baseline, APW and S-APW achieve similar performance, slightly improving T-ACC and modestly increasing E-Prop$^\dagger$. Taking advantage of the noise robustness of mixup, M-APW achieves a much larger improvement in E-Prop$^\dagger$, by around 10 percentage points. These results support the theoretical findings, showing that the APW method substantially improves prediction confidence, even when the network is trained on real-world datasets.

\section{Conclusion}\label{section:conclusion}

In this paper, we propose the adaptively point-weighting curriculum learning method, APW, which is to our knowledge the first loss-reweighting method that adaptively enforces the easy-to-hard training paradigm in CL. APW is lightweight and broadly applicable. It involves only two interpretable hyperparameters, the fewest among the existing automatic CL methods, and its weighting strategy accounts for the entire training set and the evolving network state. To demonstrate the applicability, we propose two practical variants, S-APW and M-APW. Our theoretical analysis shows that APW can increase E-Prop on the training set, which can enhance E-Prop on the test set.

While we focus on classification tasks, in future work we will extend APW by integrating it with more sample-level techniques to tackle a broader range of tasks. Moreover, the difficulty measurer of APW considers only a binary relationship between the error threshold and the per-sample training loss, which makes it difficult to determine whether hard samples arise from label noise or from genuinely complex data patterns. This limitation could potentially be mitigated by combining APW with meta-learning.

\bibliography{APW}
\bibliographystyle{plainnat}

\newpage
\appendix
\onecolumn

\clearpage

\section{Implementation Details}\label{appendix:implementation}

In this appendix, we present the detailed algorithms for the three weighting approaches (E-weighting, I-weighting, and EI-weighting) and analyze the differences between these approaches. We then provide implementation details for the PD method proposed in~\cite{zhou2025new} to automatically determine the value of $e$. Finally, we provide a detailed description of the two variants of APW, namely, S-APW and M-APW.

\subsection{Three Weighting Approaches: E-weighting, I-weighting, and EI-weighting}\label{appendix:implementation:approaches}

Let $\mathcal{Z} = \{\mathbf{z}^{(n)}\}_{n=1}^{N}$ denote the entire training set, and let $\mathbf{w}_{k} = \{w_{k}^{(n)}\}_{n=1}^{N}$ denote the resulting sample-weight vector assigned by APW at the $k$-th epoch. Let $\widetilde{\mathcal{Z}}_{k,s} \subset \mathcal{Z}$ denote the mini-batch at the $s$-th iteration of the $k$-th epoch, with size $N_{k,s} = |\widetilde{\mathcal{Z}}_{k,s}|$, where $s \in \mathbb{N}^+$. The reweighted mini-batch loss at this iteration is defined as
\begin{equation}\label{eq:mini_batch_APW_loss}
\mathrm{L}_{k,s}^\mathrm{APW} = \sum_{\mathbf{z}^{(i)} \in \widetilde{\mathcal{Z}}_{k,s}} w_{k,s}^{(i)} \,\mathrm{L}^{(i)}_{k,s},
\end{equation}
where $\mathrm{L}^{(i)}_{k,s}$ is the per-sample loss of $\mathbf{z}^{(i)} \in \widetilde{\mathcal{Z}}_{k,s}$ evaluated at the current network parameters $\Theta_{k,s}$, and $w_{k,s}^{(i)}$ is the corresponding sample weight which is obtained from $\mathbf{w}_k$ in E-weighting and computed on the fly in I-weighting/EI-weighting. Next, we introduce the detailed algorithms for the three weighting approaches, using the $k$-th epoch to illustrate the procedure.
\begin{enumerate}
\item \textbf{E-weighting:}
\begin{itemize}
\item At the beginning of epoch $k$, compute the loss vector $\{\mathrm{L}_k^{(n)}\}_{n=1}^{N}$ with the network parameters $\Theta_{k,0}$ ({\it i.e.,} $\Theta_{k}$)
\item Calculate the value of $\alpha_k$ ({\it cf.} Eq.~\eqref{eq:alpha_k}) using the loss vector $\{\mathrm{L}_k^{(n)}\}_{n=1}^{N}$
\item {\it Calculate the sample-weight vector $\mathbf{w}_{k} = \{w_{k}^{(n)}\}_{n=1}^{N}$ ({\it cf.} Eq.~\eqref{eq:weight_update}) using $\alpha_k$ and $\{w_{k-1}^{(n)}\}_{n=1}^{N}$}
\item For each mini-batch $\widetilde{\mathcal{Z}}_{k,s} \subset \mathcal{Z}$:
\begin{itemize}
\item compute the per-sample losses $\mathrm{L}^{(i)}_{k,s}$ using the network parameters $\Theta_{k,s}$
\item {\it for each $\mathbf{z}^{(i)} \in \widetilde{\mathcal{Z}}_{k,s}$, set $w_{k,s}^{(i)} \leftarrow w_k^{(i)}$; then, normalize the sample weights in the mini-batch so that they sum to one}
\item compute the mini-batch loss $\mathrm{L}_{k,s}^\mathrm{APW}$ by Eq.~\eqref{eq:mini_batch_APW_loss} with the sample weights $w_{k,s}^{(i)}$
\item update $\Theta_{k,s}$ with the gradient $\nabla_{\Theta_{k,s}} \mathrm{L}_{k,s}^\mathrm{APW}$
\end{itemize}
\end{itemize}
\item \textbf{I-weighting:}
\begin{itemize}
\item At the beginning of epoch $k$, compute the loss vector $\{\mathrm{L}_k^{(n)}\}_{n=1}^{N}$ with the network parameters $\Theta_{k,0}$ ({\it i.e.,} $\Theta_{k}$)
\item Calculate the value of $\alpha_k$ ({\it cf.} Eq.~\eqref{eq:alpha_k}) using the loss vector $\{\mathrm{L}_k^{(n)}\}_{n=1}^{N}$
\item For each mini-batch $\widetilde{\mathcal{Z}}_{k,s} \subset \mathcal{Z}$:
\begin{itemize}
\item compute the per-sample losses $\mathrm{L}^{(i)}_{k,s}$ using the network parameters $\Theta_{k,s}$
\item {\it for each $\mathbf{z}^{(i)} \in \widetilde{\mathcal{Z}}_{k,s}$, calculate the mini-batch weight $w_{k,s}^{(i)}$ by adapting the global update manner in Eq.~\eqref{eq:weight_update} to the current mini-batch, using $\alpha_k$ and $w_{k-1}^{(i)}$, and then update $w_{k}^{(i)} \leftarrow \frac{N_{k,s}}{N} w_{k,s}^{(i)}$}
\item compute the mini-batch loss $\mathrm{L}_{k,s}^\mathrm{APW}$ by Eq.~\eqref{eq:mini_batch_APW_loss} with the sample weights $w_{k,s}^{(i)}$
\item update $\Theta_{k,s}$ with the gradient $\nabla_{\Theta_{k,s}} \mathrm{L}_{k,s}^\mathrm{APW}$
\end{itemize}
\item {\it Normalize the sample weights $w_{k}^{(i)}$ so that they sum to one, and then obtain the resulting sample-weight vector $\mathbf{w}_k = \{w_k^{(n)}\}_{n=1}^{N}$.}
\end{itemize}
\item \textbf{EI-weighting:}
\begin{itemize}
\item At the beginning of epoch $k$, compute the loss vector $\{\mathrm{L}_k^{(n)}\}_{n=1}^{N}$ with the network parameters $\Theta_{k,0}$ ({\it i.e.,} $\Theta_{k}$)
\item Calculate the value of $\alpha_k$ ({\it cf.} Eq.~\eqref{eq:alpha_k}) using the loss vector $\{\mathrm{L}_k^{(n)}\}_{n=1}^{N}$
\item {\it Calculate the sample-weight vector $\widetilde{\mathbf{w}}_{k} = \{\widetilde{w}_{k}^{(n)}\}_{n=1}^{N}$ ({\it cf.} Eq.~\eqref{eq:weight_update}) using $\alpha_k$ and $\{w_{k-1}^{(n)}\}_{n=1}^{N}$}
\item For each mini-batch $\widetilde{\mathcal{Z}}_{k,s} \subset \mathcal{Z}$:
\begin{itemize}
\item compute the per-sample losses $\mathrm{L}^{(i)}_{k,s}$ using the network parameters $\Theta_{k,s}$
\item {\it for each $\mathbf{z}^{(i)} \in \widetilde{\mathcal{Z}}_{k,s}$, calculate the mini-batch weight $w_{k,s}^{(i)}$ by adapting the global update manner in Eq.~\eqref{eq:weight_update} to the current mini-batch, using $\alpha_k$ and $\widetilde{w}_{k}^{(i)}$, and then update $w_{k}^{(i)} \leftarrow \frac{N_{k,s}}{N} w_{k,s}^{(i)}$}
\item compute the mini-batch loss $\mathrm{L}_{k,s}^\mathrm{APW}$ by Eq.~\eqref{eq:mini_batch_APW_loss} with the sample weights $w_{k,s}^{(i)}$
\item update $\Theta_{k,s}$ with the gradient $\nabla_{\Theta_{k,s}} \mathrm{L}_{k,s}^\mathrm{APW}$
\end{itemize}
\item {\it Normalize the sample weights $w_{k}^{(i)}$ so that they sum to one, and then obtain the resulting sample-weight vector $\mathbf{w}_k = \{w_k^{(n)}\}_{n=1}^{N}$.}
\end{itemize}
\end{enumerate}
In the above description, the italicized steps indicate where the three weighting approaches differ. E-weighting is the default weighting approach that updates the weights of all training samples once at the beginning of each epoch. In contrast, I-weighting updates sample weights by taking into account the changes in the training state within an epoch. EI-weighting combines E-weighting and I-weighting. Specifically, it performs a global update of all sample weights once per epoch and further adjusts them according to the intra-epoch training state. When updating the global sample-weight vector from a mini-batch, we scale the updated weights by a factor of $N_{k,s}/N$, where $N_{k,s}$ is the mini-batch size and $N$ is the entire training set size, to keep the updated weights on the same scale as the global sample-weight vector.

\subsection{Parameter Dissimilarity (PD) Method}\label{appendix:implementation:PD}

The latest empirical evidence on network learning dynamics shows a two-phase training behavior~\cite{zhou2025new}, and the PD method was proposed to detect the phase transition point. Inspired by that work, APW introduces an error threshold $e$ and distinguishes between an early and a later training phase by comparing the training-difficulty indicator $\rho_k$ with $\frac{1}{2}$ (see Table~\ref{tab:relation}). When $\rho_k$ decreases to $\frac{1}{2}$, this indicates that the total weight on hard samples drops to one half ({\it cf.} Eq.~\eqref{eq:rho_k}). Therefore, to align APW's phase split with the PD-estimated transition, we set the error threshold $e$ in APW to the average training loss at the phase transition point estimated by the PD method.

The PD method uses the cosine dissimilarity between two flattened parameter vectors as a quantitative measure for analyzing the directionality of training dynamics. Specifically, the network parameters are recorded over training in chronological order, forming a sequence of checkpoints $\{ \Theta^{(t)} \}_{t=1}^{T}$. Each checkpoint $\Theta^{(t)}$ is then flattened into a parameter vector $\operatorname{vec}(\Theta^{(t)})$. The cosine dissimilarity between two checkpoints is defined as
\begin{equation*}
d(t_0,t_1) = 1 - \cos\big\langle \operatorname{vec}(\Theta^{(t_0)}), \operatorname{vec}(\Theta^{(t_1)}) \big\rangle,
\end{equation*}
where
\begin{equation*}
\cos \big\langle \operatorname{vec}(\Theta^{(t_0)}), \operatorname{vec}(\Theta^{(t_1)}) \big\rangle =  \frac{\big\langle \operatorname{vec}(\Theta^{(t_0)}), \operatorname{vec}(\Theta^{(t_1)}) \big\rangle}{\|\operatorname{vec}(\Theta^{(t_0)})\|_2\,\|\operatorname{vec}(\Theta^{(t_1)})\|_2}.
\end{equation*}
Empirical evidence in~\cite{zhou2025new} suggests that a large $d(t_0,t_1)$ typically indicates a pronounced change in the optimization direction between checkpoints $t_0$ and $t_1$ (with $1 \le t_0 < t_1 \leq T$). Thus, the phase transition point can be detected from the two-dimensional grid of $d(t_0,t_1)$ values. Specifically, there often exists a time index $t_1$ such that, regardless of the choice of $t_0 < t_1$, $d(t_0,t_1)$ attains its peak at nearly the same $t_1$. Motivated by this phenomenon, we define the following measure
\begin{equation*}
d^{\dagger}(t_1) = \frac{1}{t_1-1}\sum_{t_0=1}^{t_1-1} d(t_0,t_1).
\end{equation*}
In practice, we record a sequence of checkpoints during the training process of Vanilla and then approximate the phase transition point as
\begin{equation*}
t^{*}=\arg\max_{t_1} d^{\dagger}(t_1).
\end{equation*}
Subsequently, we compute the average per-sample loss at this point and use it as the error threshold $e$ in APW.

\subsection{Sampling-APW (S-APW)}~\label{appendix:implementation:S-APW}

The CL methods that assign every training sample a binary weight typically start from an easy subset of the training data and gradually introduce harder samples as training progresses~\cite{jiang2015self,zhou2018minimax,matiisen2019teacher,kong2021adaptive}. The training scheduler in such CL methods increases the difficulty (or complexity) of the current training set according to a curriculum schedule until all training samples are included. This data selection manner assigns a binary weight to every training sample, whereas the APW method assigns a soft weight to every training sample. To illustrate the application of APW, we devise the S-APW method by taking the APW-assigned weight $w_{k}^{(n)}$ for sample $\mathbf{z}^{(n)}$ as its sampling probability. Let $\mathcal{Z}$ denote the entire training set, and let $\mathcal{Z}_k$ denote the training subset used at the $k$-th epoch. The workflow of S-APW is summarized in Algorithm~\ref{alg:S-APW}.

\begin{algorithm}[htbp]
\caption{The workflow of the S-APW method}\label{alg:S-APW}
\begin{algorithmic}[1]
\STATE \textbf{Initialization:} set $k \leftarrow 0$, initial training subset $\mathcal{Z}_{0} \leftarrow \emptyset$, sampling fraction $r_s=0.05$.
\REPEAT
\STATE $k \leftarrow k + 1$. 
\STATE Calculate the sample-weight vector $\mathbf{w}_k$ on the entire training set $\mathcal{Z}$ using the APW method.
\STATE Sample a subset $\mathcal{M}_k \subseteq \mathcal{Z}$ of size $|\mathcal{M}_k|=\left\lfloor r_s |\mathcal{Z}| \right\rfloor$ with sampling probabilities given by $\mathbf{w}_k$.
\STATE $\mathcal{Z}_k \leftarrow \mathcal{Z}_{k-1} \cup \mathcal{M}_k$
\STATE Train the network on $\mathcal{Z}_k$ using the average training loss.
\UNTIL{$|\mathcal{Z} \setminus \mathcal{Z}_k| < \left\lfloor r_s |\mathcal{Z}| \right\rfloor$} 
\STATE Train the network on the entire training set $\mathcal{Z}$ in all subsequent training epochs.
\end{algorithmic}
\end{algorithm}

In Algorithm~\ref{alg:S-APW}, when $\mathcal{Z}_k$ is still a proper subset of $\mathcal{Z}$, APW only adopts the E-weighting approach because I-weighting and EI-weighting are not applied to the sampled subset, and the network parameters are updated by minimizing the average training loss rather than the reweighted loss ({\it cf.} Eq.~\eqref{eq:APW_loss}).

In the early training phase, the easy samples have large weights. Therefore, these samples are more likely to be sampled from the entire training set $\mathcal{Z}$ and added to the current training subset $\mathcal{Z}_k$. In the later training phase, the weights of hard samples are increased, and these samples are gradually incorporated into $\mathcal{Z}_k$ until the number of remaining samples $|\mathcal{Z} \setminus \mathcal{Z}_k|$ becomes smaller than $\lfloor r_s |\mathcal{Z}| \rfloor$. The subsequent network training is then performed on the entire dataset $\mathcal{Z}$. We can apply one of the three weighting approaches (E-weighting, I-weighting, and EI-weighting) and update the network parameters by minimizing the reweighted loss ({\it cf.} Appendix~\ref{appendix:implementation:approaches}).

\subsection{Mixup-APW (M-APW)}\label{appendix:implementation:M-APW}

Let $C$ denote the number of classes, and let $\{(\mathbf{x}^{(n)}, \mathbf{y}^{(n)})\}_{n=1}^{N}$ be the training set, where each label $\mathbf{y}^{(n)}$ is one-hot encoded, {\it i.e.,} $\mathbf{y}^{(n)} \in \{0,1\}^C$ with $\mathbf{y}^{(n)} = [y^{(n)}_{1},\ldots,y^{(n)}_{C}]^\top$ and $\sum_{c=1}^C y^{(n)}_{c} = 1$. Given two samples $(\mathbf{x}^{(i)},\mathbf{y}^{(i)})$ and $(\mathbf{x}^{(j)},\mathbf{y}^{(j)})$, the standard mixup method forms a mixed example from the two samples by sampling $\lambda \sim \mathrm{Beta}(\alpha_{\text{mix}},\alpha_{\text{mix}})$ and setting
\begin{equation*}
\widetilde{\mathbf{x}} = \lambda \mathbf{x}^{(i)} + (1-\lambda) \mathbf{x}^{(j)}, \qquad
\widetilde{\mathbf{y}} = \lambda \mathbf{y}^{(i)} + (1-\lambda) \mathbf{y}^{(j)}.
\end{equation*}
Let $f(\mathbf{x})$ denote the model prediction for an input $\mathbf{x}$, and the corresponding cross-entropy loss is
\begin{equation*}
\mathrm{L} = \mathrm{CE}(\mathbf{y},f(\mathbf{x})).
\end{equation*}
For the mixed example $(\widetilde{\mathbf{x}},\widetilde{\mathbf{y}})$, the mixup loss can be written as
\begin{equation}\label{eq:mixup_CE}
\widetilde{\mathrm{L}} = \mathrm{CE}(\widetilde{\mathbf{y}},f(\widetilde{\mathbf{x}})) = \lambda\,\mathrm{CE}(\mathbf{y}^{(i)},f(\widetilde{\mathbf{x}})) + (1-\lambda)\,\mathrm{CE}(\mathbf{y}^{(j)},f(\widetilde{\mathbf{x}})).
\end{equation}
The mixing coefficient $\lambda$ is randomly sampled from the distribution $\mathrm{Beta}(\alpha_{\text{mix}},\alpha_{\text{mix}})$, which induces a convex interpolation of both inputs and labels and can improve robustness to label noise~\cite{zhang2017mixup,li2020dividemix,peng2020suppressing,cordeiro2021propmix}. We propose M-APW by replacing the mixup interpolation coefficient with the normalized APW-assigned weights of the two samples, and accordingly Eq.~\eqref{eq:mixup_CE} is replaced with the M-APW mixup loss
\begin{equation}\label{eq:mixup_weighted_CE}
\widetilde{\mathrm{L}}^{\mathrm{APW}}  = \frac{w^{(i)}}{w^{(i)}+w^{(j)}} \,\mathrm{CE}(\mathbf{y}^{(i)},f(\widetilde{\mathbf{x}})) + \frac{w^{(j)}}{w^{(i)}+w^{(j)}} \,\mathrm{CE}(\mathbf{y}^{(j)},f(\widetilde{\mathbf{x}})),
\end{equation}
where $w^{(i)}$ and $w^{(j)}$ are the APW-assigned weights of samples $(\mathbf{x}^{(i)},\mathbf{y}^{(i)})$ and $(\mathbf{x}^{(j)},\mathbf{y}^{(j)})$, respectively. Next, we replace the reweighted mini-batch loss ({\it cf.} Eq.~\eqref{eq:mini_batch_APW_loss}) with the mini-batch average of the M-APW mixup losses, where each mixup loss is computed using Eq.~\eqref{eq:mixup_weighted_CE}.

M-APW combines the idea of CL with mixup augmentation and thus the mixed examples follow the easy-to-hard training paradigm. In the early training phase, the mixed examples are biased toward easy samples, whereas hard samples receive less emphasis. In the later training phase, M-APW gradually increases the emphasis on hard samples, leading to a more balanced emphasis across all training samples. 

\subsection{Ablation Study on M-APW}\label{appendix:implementation:Ablation-M-APW}

Next, we conduct an ablation study on M-APW. As a supplement to Section~\ref{section:experiments:comparison}, we use the same experimental settings as in Table~\ref{tab:comparison_CL} and compare standard mixup and M-APW using the metrics T-ACC and E-Prop$^\dagger$. Table~\ref{tab:mixup_details} reports the results. It can be observed that the standard mixup consistently improves T-ACC over the vanilla baseline, but it reduces E-Prop$^\dagger$ on ConvNeXt and MaxViT. This degradation may stem from label noise, which can lead to noisy mixed targets and thus hurt the model's prediction confidence. In contrast, M-APWs achieve the best T-ACC and E-Prop$^\dagger$ in almost all settings, except with Mini-WebVision using MaxViT. These results suggest that M-APW combines mixup's robustness to label noise, improving predictive accuracy, with APW's ability to enhance prediction confidence.

\begin{table*}[htbp]
\centering
\caption{Experimental results on three datasets (CIFAR-10N, CIFAR-100N, and Mini-WebVision) trained using four backbone models (ResNet-18, EfficientNet, ConvNeXt, and MaxViT). We compare standard mixup and M-APW using the metrics T-ACC and E-Prop$^\dagger$. In each row, the entries marked with {\setlength{\fboxsep}{3pt}\colorbox{tablemidblue}{blue}} represent the maximum value, and the entries marked with {\setlength{\fboxsep}{3pt}\colorbox{tablemidgreen}{green}} represent the minimum value.}\label{tab:mixup_details}
\begin{threeparttable}
\resizebox{0.93\linewidth}{!}{
\begin{tabular}{l|c|c|c|c|ccc}
\hline
\multicolumn{1}{l|}{}                                  & \multicolumn{1}{l|}{}                                & \multicolumn{1}{c|}{}                          & \multicolumn{1}{c|}{}                          & \multicolumn{1}{c|}{}                          & \multicolumn{3}{c}{M-APWs} \\
\cline{6-8}
\multicolumn{1}{l|}{\multirow{-2}{*}{Dataset}}         & \multicolumn{1}{l|}{\multirow{-2}{*}{Models}}        & \multicolumn{1}{c|}{\multirow{-2}{*}{Metrics}} & \multicolumn{1}{c|}{\multirow{-2}{*}{Vanilla}} & \multicolumn{1}{c|}{\multirow{-2}{*}{\makecell[c]{Standard \\ mixup}}}   & \multicolumn{1}{c}{M-APW-E} & \multicolumn{1}{c}{M-APW-I} & \multicolumn{1}{c}{M-APW-EI} \\
\hline

\multirow{8}{*}{\makecell[l]{CIFAR-10N}}
& \multicolumn{1}{l|}{}                                & T-ACC            & \cellcolor{tablemidgreen}{77.37} & 80.49 & 82.36 & 82.24 & \cellcolor{tablemidblue}{82.39} \\
& \multicolumn{1}{l|}{\multirow{-2}{*}{ResNet-18}}     & E-Prop$^\dagger$ & \cellcolor{tablemidgreen}{51.17} & 52.83 & 60.73 & 60.56 & \cellcolor{tablemidblue}{60.99} \\ \cline{2-8}

& \multicolumn{1}{l|}{}                                & T-ACC            & \cellcolor{tablemidgreen}{83.45} & 85.23 & \cellcolor{tablemidblue}{86.74} & 86.63 & \cellcolor{tablemidblue}{86.74} \\
& \multicolumn{1}{l|}{\multirow{-2}{*}{EfficientNet}}  & E-Prop$^\dagger$ & \cellcolor{tablemidgreen}{61.42} & 64.70 & 73.18 & 72.92 & \cellcolor{tablemidblue}{73.28} \\ \cline{2-8}

& \multicolumn{1}{l|}{}                                & T-ACC            & \cellcolor{tablemidgreen}{86.64} & 88.35 & 89.87 & 89.81 & \cellcolor{tablemidblue}{89.95} \\
& \multicolumn{1}{l|}{\multirow{-2}{*}{ConvNeXt}}      & E-Prop$^\dagger$ & 64.98 & \cellcolor{tablemidgreen}{64.14} & 71.83 & 71.56 & \cellcolor{tablemidblue}{71.98} \\ \cline{2-8}

& \multicolumn{1}{l|}{}                                & T-ACC            & \cellcolor{tablemidgreen}{88.44} & 89.39 & 89.90 & 89.76 & \cellcolor{tablemidblue}{89.96} \\
& \multicolumn{1}{l|}{\multirow{-2}{*}{MaxViT}}        & E-Prop$^\dagger$ & 69.06 & \cellcolor{tablemidgreen}{65.26} & 70.57 & 70.26 & \cellcolor{tablemidblue}{70.95} \\ \hline

\multirow{8}{*}{\makecell[l]{CIFAR-100N}}
& \multicolumn{1}{l|}{}                                & T-ACC            & \cellcolor{tablemidgreen}{57.29} & 59.10 & \cellcolor{tablemidblue}{59.37} & 59.35 & 59.28 \\
& \multicolumn{1}{l|}{\multirow{-2}{*}{ResNet-18}}     & E-Prop$^\dagger$ & \cellcolor{tablemidgreen}{34.24} & 36.04 & \cellcolor{tablemidblue}{39.16} & 39.09 & 38.21 \\ \cline{2-8}

& \multicolumn{1}{l|}{}                                & T-ACC            & \cellcolor{tablemidgreen}{62.98} & 64.57 & \cellcolor{tablemidblue}{65.27} & 65.21 & 65.16 \\
& \multicolumn{1}{l|}{\multirow{-2}{*}{EfficientNet}}  & E-Prop$^\dagger$ & \cellcolor{tablemidgreen}{40.77} & 45.66 & 51.31 & \cellcolor{tablemidblue}{51.42} & 50.55 \\ \cline{2-8}

& \multicolumn{1}{l|}{}                                & T-ACC            & \cellcolor{tablemidgreen}{67.12} & 68.53 & \cellcolor{tablemidblue}{69.48} & 69.47 & 69.46 \\
& \multicolumn{1}{l|}{\multirow{-2}{*}{ConvNeXt}}      & E-Prop$^\dagger$ & \cellcolor{tablemidgreen}{45.02} & 46.08 & \cellcolor{tablemidblue}{51.74} & 51.72 & 50.32 \\ \cline{2-8}

& \multicolumn{1}{l|}{}                                & T-ACC            & \cellcolor{tablemidgreen}{67.12} & 68.63 & \cellcolor{tablemidblue}{69.15} & 68.96 & 68.98 \\
& \multicolumn{1}{l|}{\multirow{-2}{*}{MaxViT}}        & E-Prop$^\dagger$ & 46.27 & \cellcolor{tablemidgreen}{44.28} & 47.11 & \cellcolor{tablemidblue}{47.13} & 46.27 \\ \hline

\multirow{8}{*}{\makecell[l]{Mini-WebVision}}
& \multicolumn{1}{l|}{}                                & T-ACC            & \cellcolor{tablemidgreen}{75.39} & 76.58 & 77.12 & 77.09 & \cellcolor{tablemidblue}{77.18} \\
& \multicolumn{1}{l|}{\multirow{-2}{*}{ResNet-18}}     & E-Prop$^\dagger$ & 67.29 & \cellcolor{tablemidgreen}{65.37} & \cellcolor{tablemidblue}{68.90} & 68.86 & \cellcolor{tablemidblue}{68.90} \\ \cline{2-8}

& \multicolumn{1}{l|}{}                                & T-ACC            & \cellcolor{tablemidgreen}{82.52} & 83.01 & 83.12 & 83.15 & \cellcolor{tablemidblue}{83.18} \\
& \multicolumn{1}{l|}{\multirow{-2}{*}{EfficientNet}}  & E-Prop$^\dagger$ & \cellcolor{tablemidgreen}{76.43} & 76.92 & \cellcolor{tablemidblue}{79.78} & 79.76 & 79.61 \\ \cline{2-8}

& \multicolumn{1}{l|}{}                                & T-ACC            & \cellcolor{tablemidgreen}{85.97} & 86.43 & 86.67 & 86.71 & \cellcolor{tablemidblue}{86.78} \\
& \multicolumn{1}{l|}{\multirow{-2}{*}{ConvNeXt}}      & E-Prop$^\dagger$ & 81.83 & \cellcolor{tablemidgreen}{81.10} & \cellcolor{tablemidblue}{83.69} & 83.65 & 83.57 \\ \cline{2-8}

& \multicolumn{1}{l|}{}                                & T-ACC            & 86.74 & \cellcolor{tablemidblue}{86.90} & 86.71 & 86.77 & \cellcolor{tablemidgreen}{86.70} \\
& \multicolumn{1}{l|}{\multirow{-2}{*}{MaxViT}}        & E-Prop$^\dagger$ & \cellcolor{tablemidblue}{84.01} & 82.15 & 82.24 & 82.29 & \cellcolor{tablemidgreen}{82.07} \\ \hline

\end{tabular}%
}
\end{threeparttable}
\end{table*}

\clearpage

\section{Detailed Descriptions of Datasets and Experimental Settings}\label{appendix:descriptions}

In this appendix, we provide detailed descriptions of the datasets and experimental settings used in Section~\ref{section:experiments}, including the network architectures and optimizer settings.

\subsection{Datasets}\label{appendix:descriptions:datasets}

Table~\ref{tab:datasets} lists the datasets and indicates whether label noise is inherent or artificially added. CIFAR-10/100, CIFAR-10N/100N, Mini-WebVision, and WebVision are image classification datasets. For CIFAR-10/100 and CIFAR-10N/100N, we split the original training set into training and validation sets with a 9:1 ratio. For WebVision and its subset Mini-WebVision, since test labels are not publicly available, we report results on their validation sets. For the NLP dataset RTE, since test labels are not publicly available, we report results on the validation set. For the graph dataset NCI1, since there is no standard train/test split, we randomly divide the dataset into training, validation, and test sets with an 8:1:1 ratio. We describe the datasets in detail below.

\begin{table*}[htbp]
\centering
\renewcommand{\arraystretch}{1.1}
\caption{The characteristics of the classification datasets used in our experiments. In the ``Contain label noise'' column, a ``\cmark'' indicates that the dataset includes inherent label noise. The percentage in parentheses is the reported noise rate, and an asterisk (``*'') denotes an unknown proportion. In the ``Add synthetic label noise'' column, a ``\cmark'' indicates that we report results on both the clean version and the noisy version with 40\% synthetic label noise. In the ``Num. of test'' column, ``-'' indicates that the test labels are not publicly available.}\label{tab:datasets}
\resizebox{\linewidth}{!}{
\begin{tabular}{l|lllllll}
\toprule[1.2pt]
Dataset          & Contain label noise & Add synthetic label noise    & Num. of training         & Num. of validation     & Num. of test            & Num. of classes  \\ \midrule
CIFAR-10         & \xmark              & \cmark                       & $4.5\times10^4$          & $5\times10^3$          & $1\times10^4$           & 10 \\
CIFAR-100        & \xmark              & \cmark                       & $4.5\times10^4$          & $5\times10^3$          & $1\times10^4$           & 100 \\ \midrule
RTE              & \xmark              & \cmark                       & $\approx2.5\times10^3$   & $\approx2.8\times10^2$ & -                       & 2 \\ 
NCI1             & \xmark              & \cmark                       & $\approx3.3\times10^3$   & $\approx4.1\times10^2$ & $\approx4.1\times10^2$  & 2 \\ \midrule
CIFAR-10N-worse  & \cmark ($40.21 \%$) & \xmark                       & $4.5\times10^4$          & $5\times10^3$          & $1\times10^4$           & 10 \\
CIFAR-100N-fine  & \cmark ($40.20 \%$) & \xmark                       & $4.5\times10^4$          & $5\times10^3$          & $1\times10^4$           & 100 \\
Mini-WebVision   & \cmark ($*$)        & \xmark                       & $\approx6.59\times10^4$  & $2.5\times10^3$        & -                       & 50 \\
WebVision        & \cmark ($*$)        & \xmark                       & $\approx2.44\times10^6$  & $5\times10^4$          & -                       & 1000 \\
\bottomrule
\end{tabular}}
\end{table*}

{\bf CIFAR-10 and CIFAR-100}~{\normalfont\cite{krizhevsky2009learning}.}
CurML provides a fair benchmark across different CL methods~\cite{zhou2022curml} and uses CIFAR-10 with and without 40\% synthetic label noise. Specifically, the synthetic label noise is applied to the training set by independently changing the label of every training sample, with probability 40\%, to a different class label sampled uniformly from the remaining classes. Following CurML, we use the same network architecture and optimizer settings, and further consider both CIFAR-10 and CIFAR-100, with and without 40\% synthetic label noise.

{\bf RTE}~{\normalfont\cite{wang2018glue}} {\bf and NCI1}~{\normalfont\cite{morris2020tudataset}.}
CurBench~\cite{zhou2024curbench} summarizes the representative CL methods from CurML and extends the benchmark used in CurML from image classification datasets to NLP and graph datasets. To further align our experimental setup with CurBench, we follow its showcased examples by fine-tuning BERT on RTE and training a GCN on NCI1. The datasets RTE and NCI1 originally have clean labels. Following the implementation in CurBench, we conduct experiments with and without 40\% synthetic label noise.

{\bf CIFAR-10N and CIFAR-100N}~{\normalfont\cite{wei2021learning}.} 
CIFAR-10N and CIFAR-100N are noisy variants of CIFAR-10 and CIFAR-100 obtained via human re-annotation on Amazon Mechanical Turk. We use two versions from~\cite{wei2021learning}: 1) CIFAR-10N-worse, where each image has three human annotations, and if any annotation is incorrect, the noisy label is sampled uniformly from the incorrect annotations (otherwise the clean label is kept); and 2) CIFAR-100N-fine, where each image is annotated by a single worker who first selects one of 20 super-classes and then chooses a fine class within that super-class. We set $p_{\text{noise}}$ to the reported noise rate for each dataset (see Table~\ref{tab:datasets}). For brevity, we refer to CIFAR-10N-worse and CIFAR-100N-fine as CIFAR-10N and CIFAR-100N, respectively, throughout the main text.

{\bf WebVision and Mini-WebVision}~{\normalfont\cite{li2017WebVision}.}
WebVision is a large-scale webly supervised dataset collected from Flickr and Google Images using category-level search queries derived from the 1,000 classes of ILSVRC 2012. Its training labels come directly from the queries rather than image-level human annotations, which introduces substantial label noise ({\it e.g.,} off-topic retrievals, ambiguous images, or cases where the queried concept is absent or not salient). Prior work estimates the noise rate of WebVision to be around 20\%~\cite{lu2022noise}. Mini-WebVision is a smaller subset that includes the Google Images partition for the first 50 classes of ILSVRC 2012, and is commonly used as a lightweight benchmark for rapidly assessing the performance of CL algorithms under label noise. Recent analysis reports that Mini-WebVision contains 70.30\% clean labels, 5.32\% in-distribution noise, and 24.38\% out-of-distribution noise~\cite{albert2022addressing}. Since both datasets contain at least around 20\% noisy labels, we conservatively set the assumed noise rate to $p_{\text{noise}} = 0.2$ for both WebVision and Mini-WebVision.

\subsection{Experimental Settings}\label{appendix:descriptions:settings}

We describe the network architectures and optimizer settings used for the datasets listed in Table~\ref{tab:datasets}.

For the experiments on CIFAR-10/100 with and without 40\% synthetic label noise, we follow the experimental settings of the fair benchmark in CurML~\cite{zhou2022curml}. Specifically, the network consists of two convolutional blocks followed by a fully connected classifier. Each block contains two $3\times3$ convolutional layers with 32 channels, each followed by batch normalization and ReLU activation, and a $2\times2$ max-pooling layer for spatial downsampling. The network is trained for 200 epochs using stochastic gradient descent (SGD) with an initial learning rate of 0.1, and a cosine learning rate schedule is used to decay the learning rate, with a minimum value of $1\times10^{-6}$.

For the experiments on RTE and NCI1, with and without 40\% synthetic label noise, we follow the experimental settings of the fair comparison in CurBench~\cite{zhou2024curbench}. For the experiments on RTE using BERT, we use the bert-base-uncased checkpoint~\cite{devlin2019bert}, and fine-tune the model for 3 epochs using AdamW with a constant learning rate of $2\times10^{-5}$. For the experiments on NCI1 using GCN~\cite{kipf2016semi}, the network consists of three graph convolutional layers followed by a graph-level linear classifier. Each layer uses 64 hidden channels, with ReLU applied after the first two layers. Node representations are aggregated via global mean pooling, after which dropout is applied before the classifier. The network is trained for 200 epochs using Adam with a constant learning rate of 0.01.

For the experiments on CIFAR-10N/100N and Mini-WebVision, we use four network architectures: ResNet-18~\cite{he2016deep}, EfficientNet-B0~\cite{tan2019efficientnet}, ConvNeXt-Tiny~\cite{liu2022convnet}, and MaxViT-T~\cite{tu2022maxvit}. These networks span residual CNNs (ResNet), modern CNN designs (EfficientNet and ConvNeXt), and hybrid transformer models (MaxViT). Each network is trained for 50 epochs using SGD with an initial learning rate of 0.01, momentum of 0.9, and a cosine learning rate schedule is used to decay the learning rate, with a minimum value of $1\times10^{-6}$. 

Following~\cite{chen2019understanding}, we adopt the baseline network architecture and optimizer settings for WebVision. The network is the Inception-ResNet-v2 backbone~\cite{szegedy2017inception}. Specifically, the network is trained for 100 epochs using SGD with an initial learning rate of 0.02, momentum of 0.9, and a step learning rate schedule is used to decay the learning rate by a factor of 0.1 at the $50$-th epoch.

The combination of selecting the best checkpoint based on the validation set and applying learning-rate decay helps stabilize training and mitigate the negative effects of APW assigning excessively large weights to persistent hard samples.

\clearpage

\section{Interpreting the Weighting Behavior of APW}\label{appendix:APW}

In this appendix, we use the LR model on a linearly separable dataset as an illustrative example to provide intuition for the weighting behavior of APW. We first visualize the weighting dynamics and then explain how APW assigns and updates sample weights during training. Our analysis shows that the error threshold $e$ in APW induces a confidence band around the separating hyperplane. This perspective further suggests that, by explicitly promoting a higher E-Prop rather than solely minimizing empirical risk, APW can lead to improved generalization. In addition, we investigate how different phase transitions (controlled by the phase threshold) affect the performance of APW.

\subsection{Illustrative Example Setup}\label{appendix:APW:example}

We generate 600 samples with 2 features and 2 classes, where each class is drawn from a Gaussian cluster with standard deviation 1.5 and the class centers are sampled uniformly from $[-10,10]^2$. We repeat the sampling until the resulting dataset is linearly separable. The dataset is then split into 70\% for training and 30\% for testing. We use the standard unregularized LR model with the L-BFGS optimizer, setting the step size to 0.01, the gradient tolerance to $10^{-5}$, and the maximum number of iterations to 150. For the APW method (using the E-weighting approach), we set $q$ to one-tenth of the training set size and choose $e = 0.3$. We report E-Prop on both the training and test sets using this error threshold.

\begin{figure}[htbp]
\centering
\begin{subfigure}[b]{0.27\textwidth}
\centering
\includegraphics[width=\linewidth]{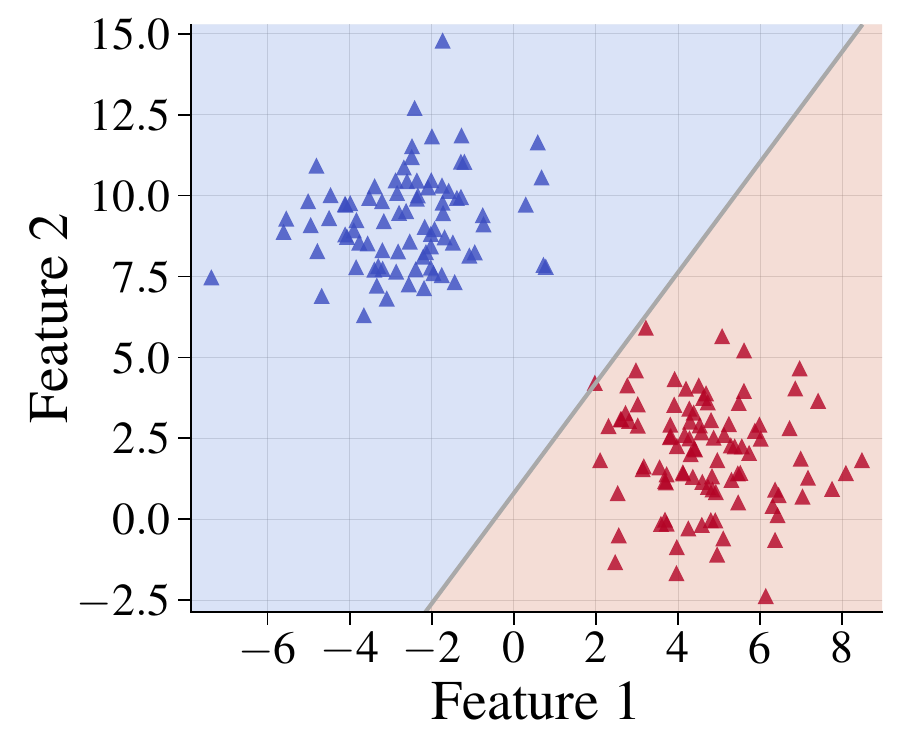}
\caption{Standard LR}\label{fig:LR_results:standard}
\end{subfigure}
\begin{subfigure}[b]{0.27\textwidth}
\centering
\includegraphics[width=\linewidth]{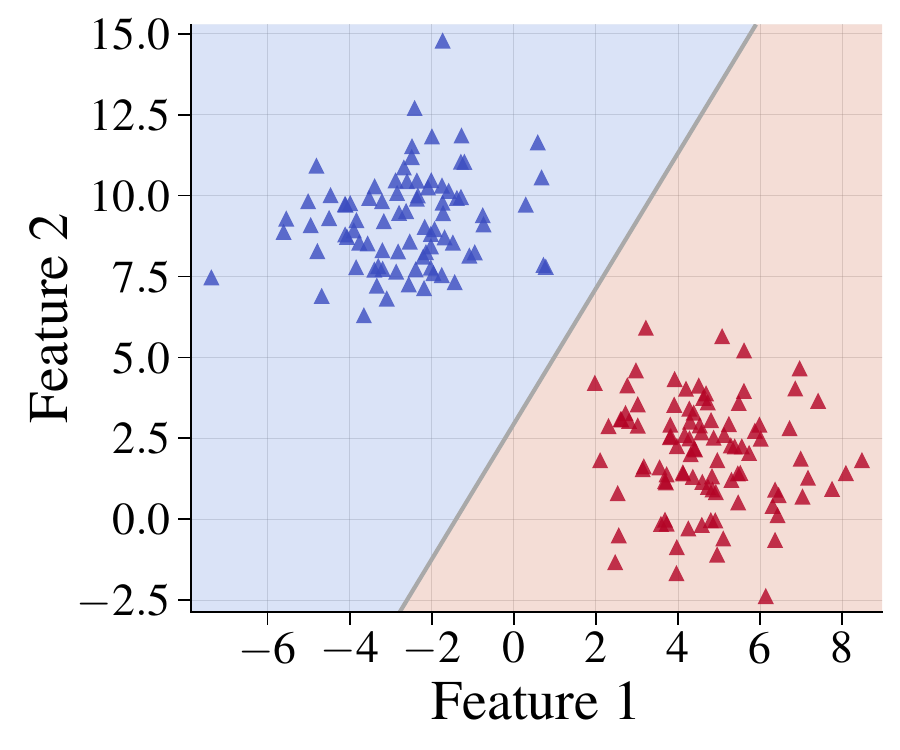}
\caption{APW + LR}\label{fig:LR_results:APW}
\end{subfigure}
\begin{subfigure}[b]{0.27\textwidth}
\centering
\includegraphics[width=\linewidth]{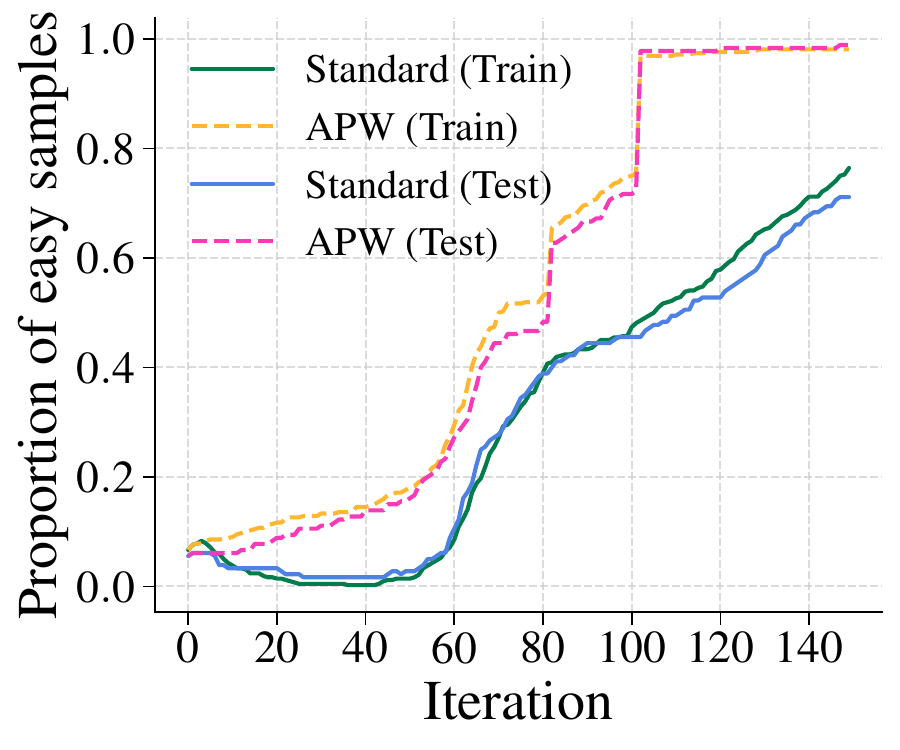}
\caption{E-Prop}\label{fig:LR_results:E_Prop}
\end{subfigure}
\caption{A diagram illustrating the generalization performance ({\it cf.} Theorem~\ref{theorem:generalization_error}) of applying the APW method. (a) An unregularized logistic regression (LR) model struggles when trained on a linearly separable dataset because the maximum-likelihood solution is attained only in the limit of an unbounded parameter norm; as a result, the hyperplane reached in finite iterations empirically generalizes poorly on the test set. For a detailed theoretical analysis, see Appendix~\ref{appendix:APW:band}. (b)-(c) Theorem~\ref{theorem:generalization_error} indicates that APW can enhance the E-Prop on the test set by improving that on the training set. In this manner, APW enables the LR model to obtain a better hyperplane that improves the generalization performance on the test set.}
\label{fig:LR_results}
\end{figure}

We provide an intuitive example to illustrate the core idea of APW. As shown in Figure~\ref{fig:LR_results}, with the error threshold on the sample loss of the LR model, APW quickly converts more training samples into easy samples during training and maintains this status well. This characteristic of APW helps the LR model learn a separating hyperplane with strong generalization performance. In the same spirit, we consider image classification. We introduce an error threshold on the cross-entropy loss of the training samples and show that increasing E-Prop tends to improve prediction accuracy and enhance prediction confidence ({\it cf.} Theorem~\ref{theorem:generalization_error}). The latter is crucial when training networks on real-world datasets.

\subsection{Error Threshold Induces a Confidence Band}\label{appendix:APW:band}

Consider a linearly separable training set $\{(\mathbf{x}_\ddagger^{(n)}, y_\ddagger^{(n)})\}_{n=1}^{N}$. To simplify the theoretical analysis, we append an additional dimension with value $1$ to each feature vector. There exists a vector $\omega_\star$ such that $y_\ddagger^{(n)} \omega_\star^\top \mathbf{x}_\ddagger^{(n)} > 0$ for all $n$, meaning that every training sample is correctly classified. For any scalar $A > 0$, the scaled vector $A\omega_\star$ also correctly classifies all training samples, and the logistic loss function is given by
\begin{equation*}
\mathrm{L}_\ddagger(A\omega_\star) = \sum_{n=1}^{N} \log\left(1 + \exp\left(-A \left(y_\ddagger^{(n)} \omega_\star^\top \mathbf{x}_\ddagger^{(n)}\right)\right)\right) 
\to 0 \quad \text{as } A \to \infty.
\end{equation*}
Thus, the infimum of the logistic loss is $0$, but it is not attained at any finite $\omega$, and the maximum-likelihood solution is achieved only when $\|\omega\| \to \infty$~\cite{albert1984existence,soudry2018implicit}. 

To address this challenge, one can impose explicit regularization on $\omega$ to obtain a finite solution~\cite{cessie1992ridge,heinze2002solution}, or instead rely on implicit regularization from optimization dynamics~\cite{hu2022early} or use a variable step size schedule~\cite{axiotis2023gradient}. APW provides an alternative way to constrain the choice of the separating hyperplane by introducing an error threshold $e < \log(2)$ and encouraging more training samples to be marked as easy, which can increase E-Prop on the test set. If a sample $(\mathbf{x}_\ddagger, y_\ddagger)$ can be marked as an easy sample, its logistic loss is below the error threshold $e$:
\begin{equation*}
\log\left(1+\exp\left(-y_\ddagger\,\omega^\top \mathbf{x}_\ddagger\right)\right) \leq e \quad \iff \quad y_\ddagger\,\omega^\top \mathbf{x}_\ddagger \geq -\log\left(\exp({e})-1\right)>0.
\end{equation*}
Geometrically, starting from the current hyperplane $\omega^\top \mathbf{x}_\ddagger = 0$, we can draw two parallel hyperplanes
\begin{equation*}
\omega^\top \mathbf{x}_\ddagger = \pm M_e \;\; \text{with } M_e = -\log(\exp(e)-1)>0,
\end{equation*}
which are symmetric with respect to the hyperplane $\omega^\top \mathbf{x}_\ddagger = 0$. The region between these two hyperplanes can be interpreted as a confidence band. The samples that are correctly classified ({\it i.e.,} $y_\ddagger\,\omega^\top \mathbf{x}_\ddagger > 0$) and lie outside this confidence band are exactly those whose logistic loss does not exceed the error threshold $e$, namely, the easy samples.

The APW method introduces an error threshold $e$ and makes the LR model focus on easy samples in the early training phase to quickly identify an approximate separating direction, and subsequently shifts focus to turning hard samples into easy ones in the later training phase. In this manner, APW induces a confidence band for the LR model and effectively improves the generalization performance of the learned hyperplane. This confidence band interpretation is consistent with our theoretical findings in Section~\ref{section:theoretical_analysis} that APW rapidly increases E-Prop on the training set and then improves E-Prop on the test set.

\subsection{Weighting Dynamics Visualization}

In the main text, Figure~\ref{fig:LR_results} compares the learned hyperplanes and E-Prop when training the LR model with and without APW on the linearly separable dataset. Here, we further analyze the weighting behavior of APW. 

As shown in Figure~\ref{fig:LR_results_detail}, each row shows three snapshots at the same training iterations to illustrate the evolution of the hyperplane. Comparing Figure~\ref{fig:LR_results_detail:standard} and Figure~\ref{fig:LR_results_detail:APW}, APW assigns larger weights to the easy samples (the left column of Figure~\ref{fig:LR_results_detail:APW}), while in the later phase, it gradually shifts the LR model's attention to the hard samples (the middle column of Figure~\ref{fig:LR_results_detail:APW}). As a result, the final hyperplane provides a better partition of the training data (the right column of Figure~\ref{fig:LR_results_detail:APW}). 

Figure~\ref{fig:LR_results_detail:APW_margin} illustrates how APW induces a confidence band for the LR model and encourages the training process to convert more hard samples into easy samples. As training progresses, the confidence band associated with the final hyperplane contains only a few hard samples, so that the resulting hyperplane better exploits the information in the training data and exhibits improved generalization compared with the standard LR model.

\begin{figure}[htbp]
\centering
\begin{subfigure}{\linewidth}
\centering
\includegraphics[width=0.27\linewidth]{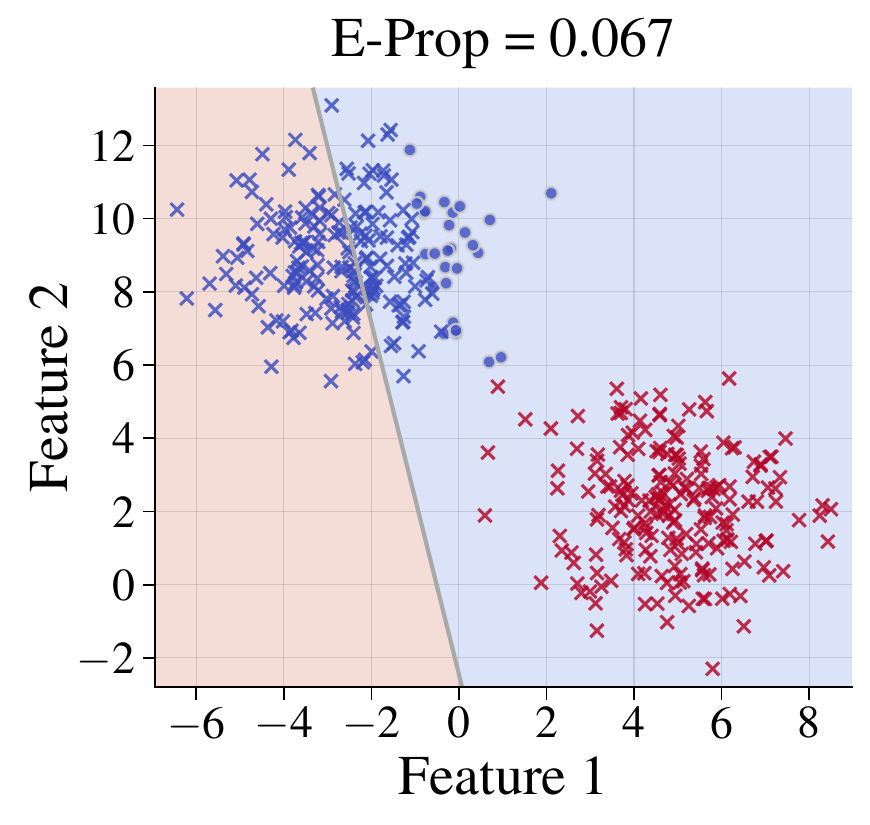}
\includegraphics[width=0.27\linewidth]{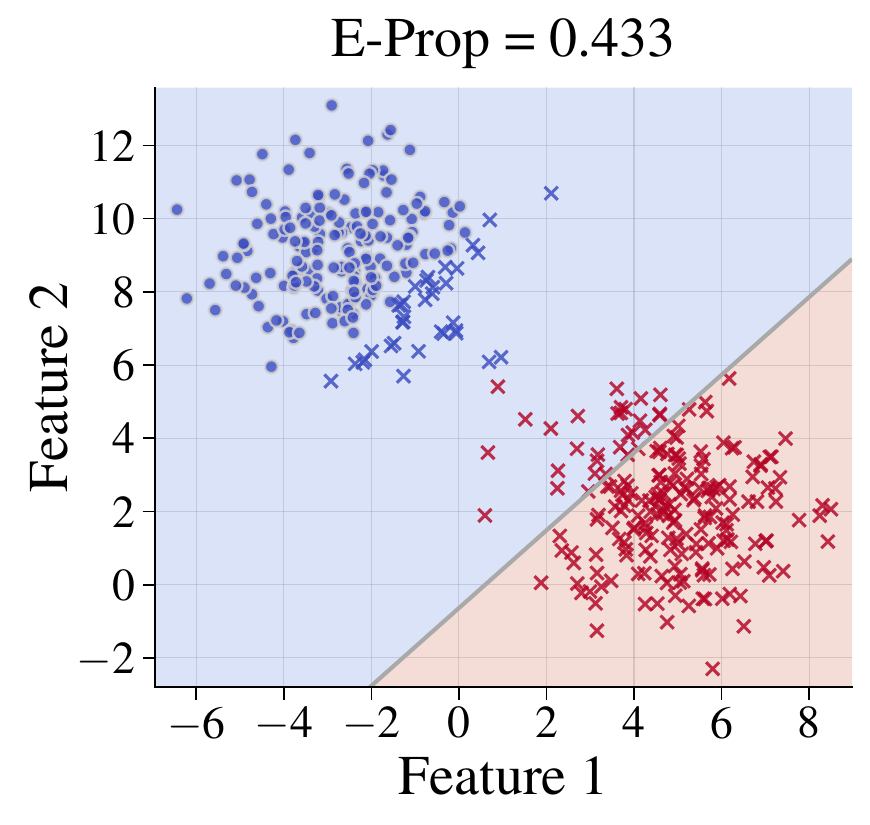}
\includegraphics[width=0.27\linewidth]{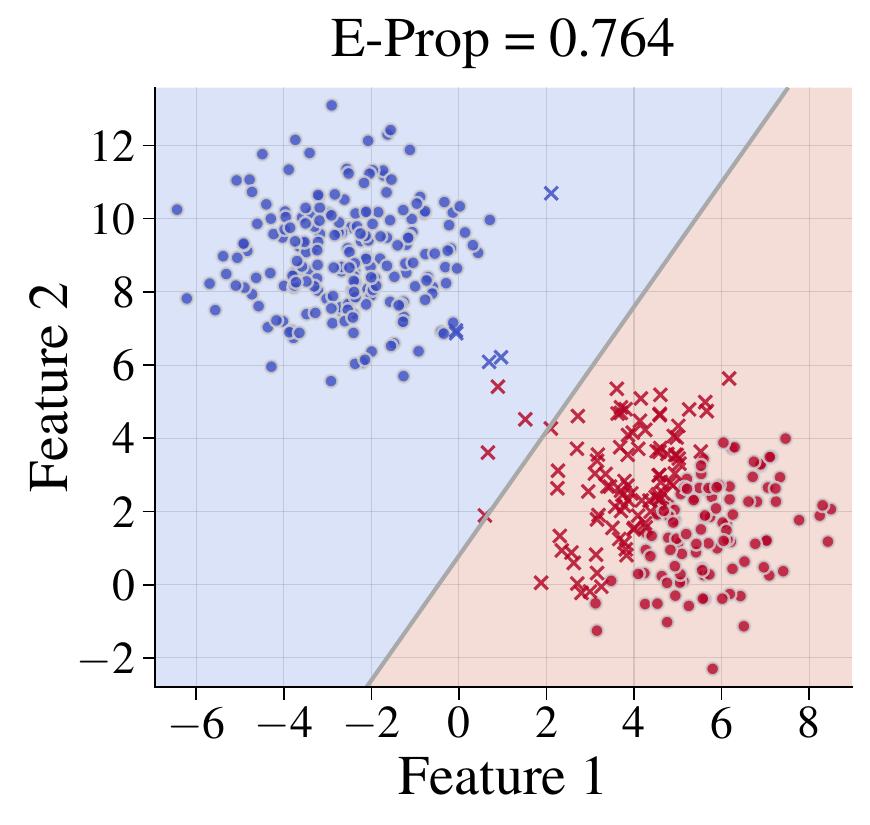}
\caption{Evolution of the hyperplane of the standard LR model.}
\label{fig:LR_results_detail:standard}
\end{subfigure}
\begin{subfigure}{\linewidth}
\centering
\includegraphics[width=0.27\linewidth]{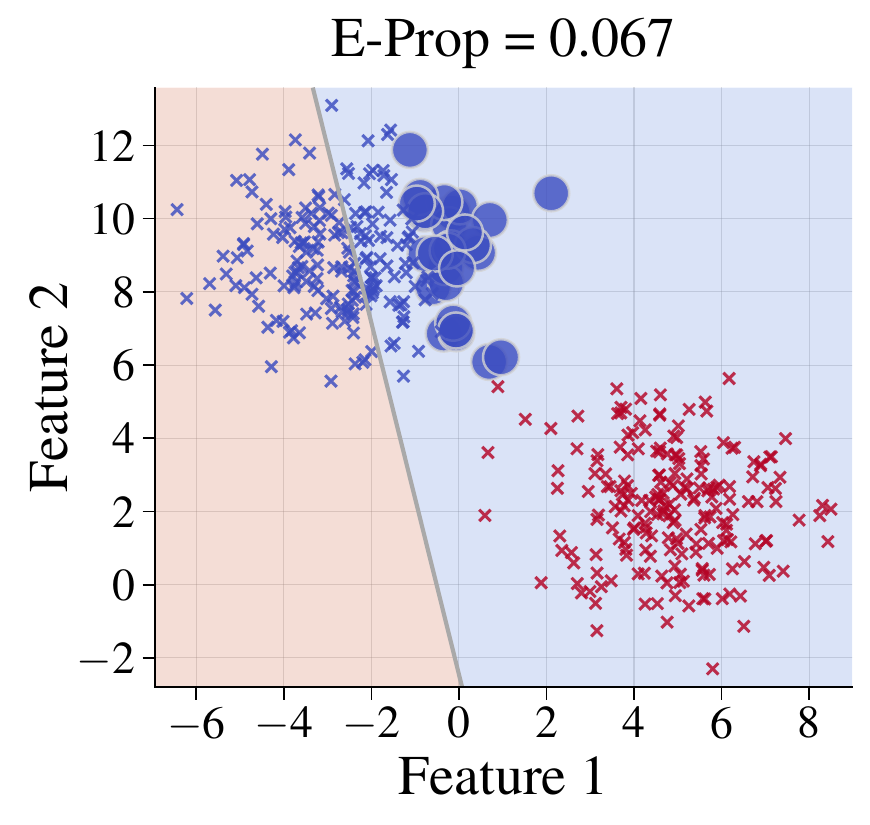}
\includegraphics[width=0.27\linewidth]{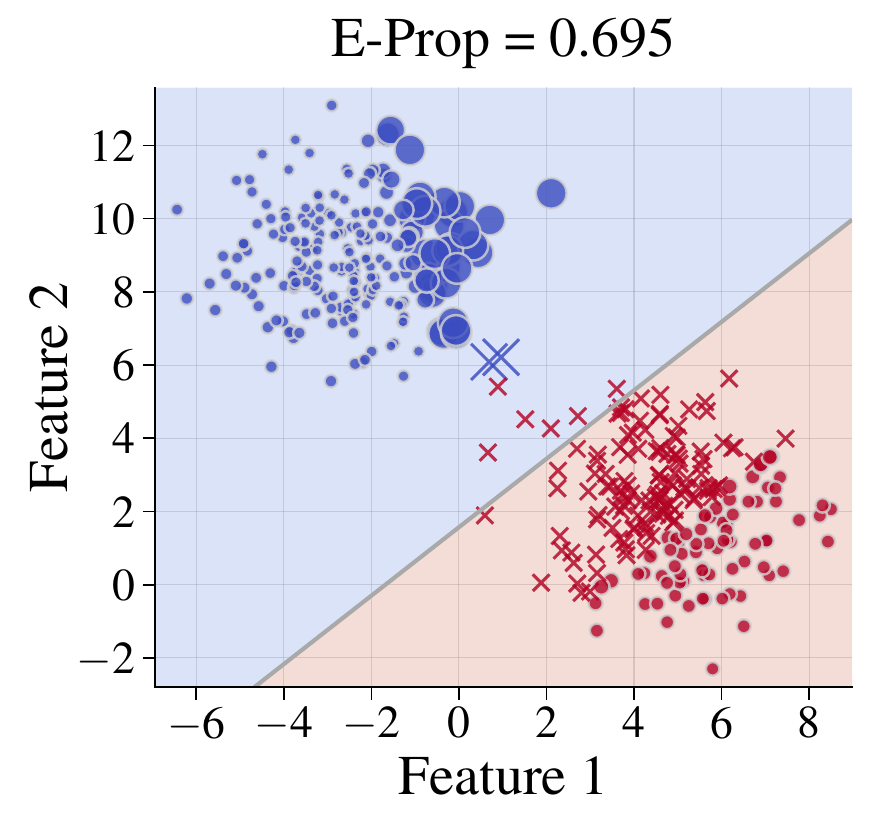}
\includegraphics[width=0.27\linewidth]{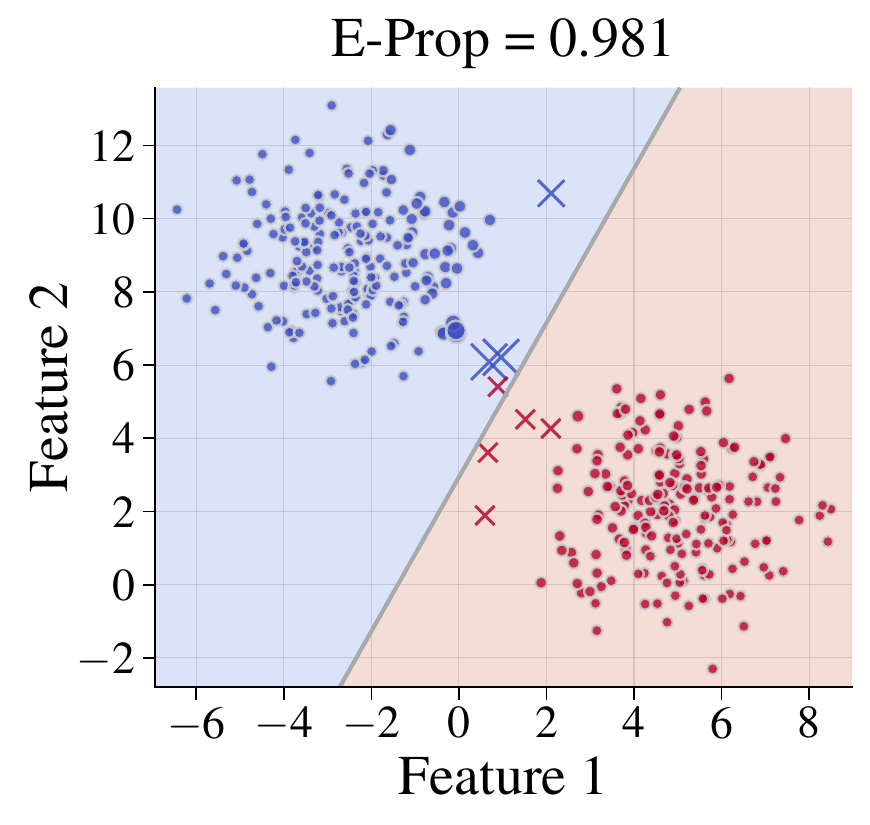}
\caption{Evolution of the hyperplane of the LR model trained with APW.}
\label{fig:LR_results_detail:APW}
\end{subfigure}
\begin{subfigure}{\linewidth}
\centering
\includegraphics[width=0.27\linewidth]{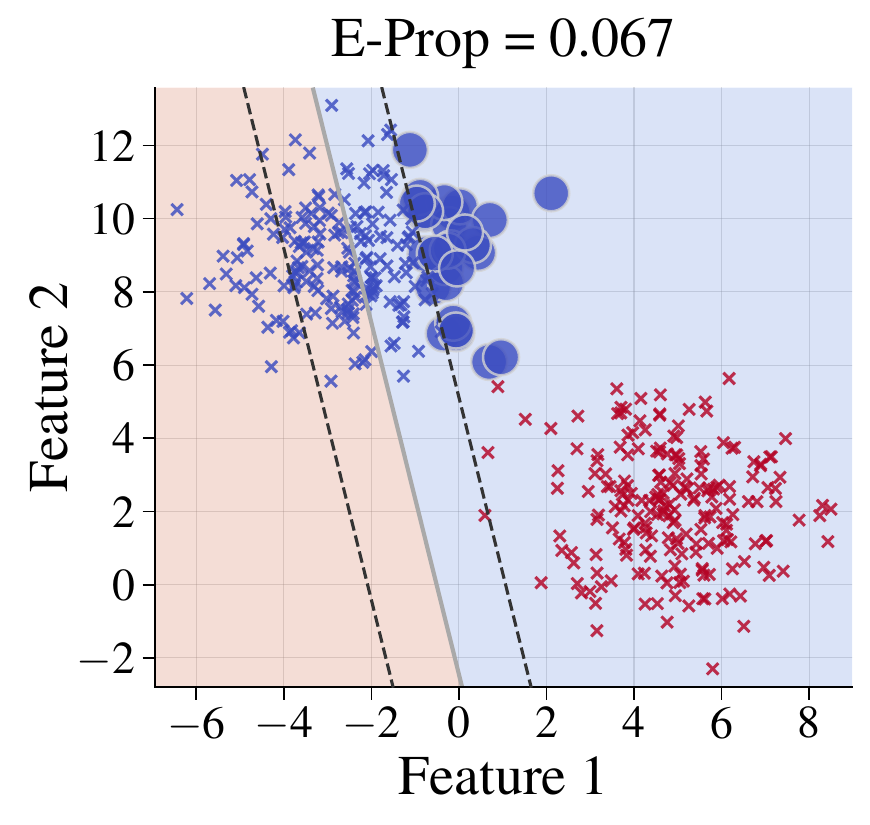}
\includegraphics[width=0.27\linewidth]{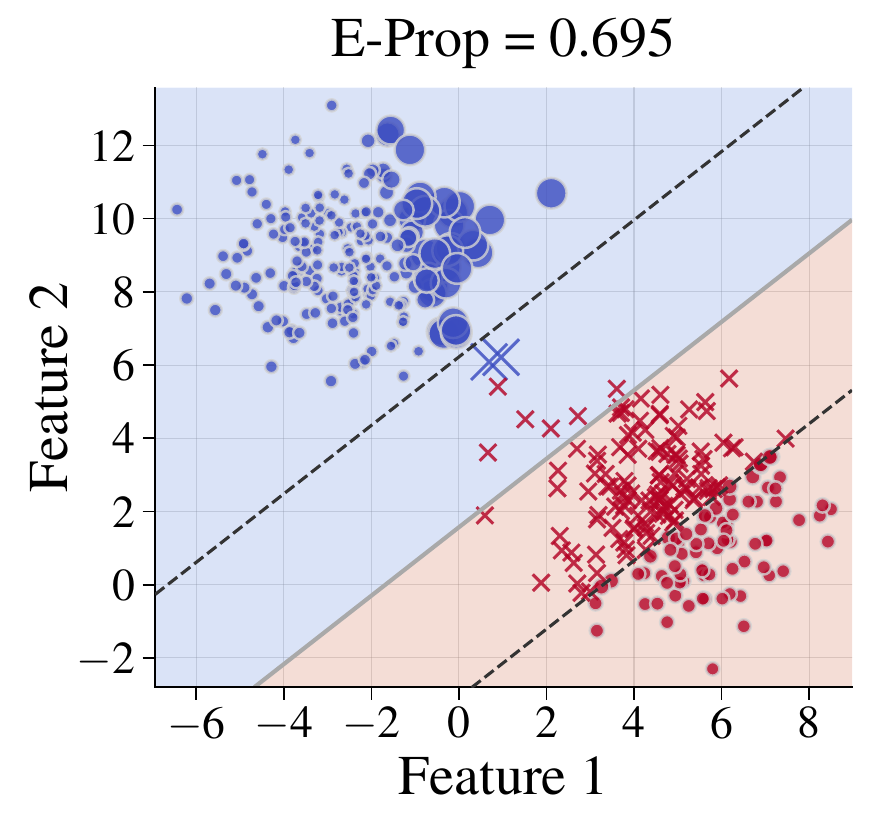}
\includegraphics[width=0.27\linewidth]{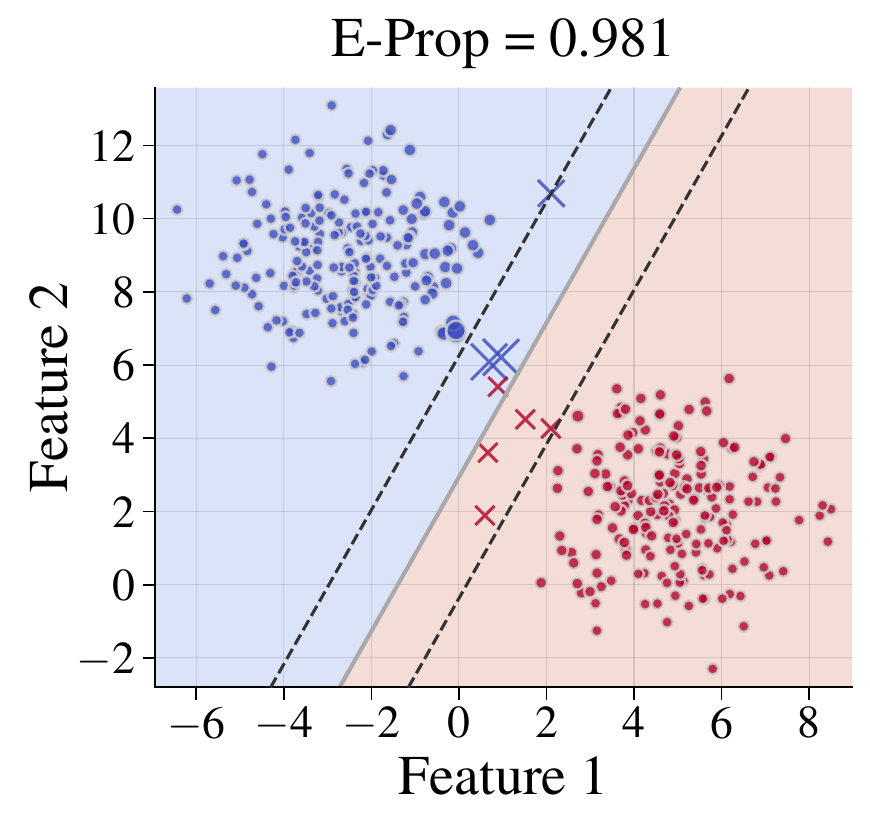}
\caption{The error threshold $e$ in APW induces a confidence band for the hyperplane.}
\label{fig:LR_results_detail:APW_margin}
\end{subfigure}
\caption{Comparison of the hyperplanes of the standard LR model and the LR model trained with APW. The plots in each column are taken at the same training iteration for both methods. In each plot, blue and red markers denote training samples from the two classes; circles indicate easy samples and crosses indicate hard samples. In the second row, the marker size is proportional to the APW-assigned weight of each sample. In the third row, the region between the black dashed lines represents the confidence band, and correctly classified samples that lie outside this confidence band are marked as easy samples.}
\label{fig:LR_results_detail}
\end{figure}

\subsection{Phase Threshold $\tau$ and Phase Transition}

By introducing a phase threshold $\tau$, the relationship between $\rho_k$ and $\frac{1}{2}$ in Eq.~\eqref{eq:alpha_k} can be generalized to the relationship between $\rho_k$ and any $\tau$ given by
\begin{equation}
\alpha_k^\ddagger
= \frac{1}{q} \log\!\left( \frac{(1-\rho_k)\,\tau}{\rho_k(1-\tau)} \right)
= \frac{1}{q} \left[ \log \frac{1-\rho_k}{\rho_k} + \log \frac{\tau}{1-\tau} \right].
\end{equation}
We denote APW with the phase threshold $\tau$ as APW$^\ddagger$. Correspondingly, the weight update scheme in Table~\ref{tab:relation} is generalized to that in Table~\ref{tab:relation_tau}. As training progresses, once $\rho_k$ decreases below $\tau$, the training state transitions from the early phase to the later phase. However, a small $\tau$ leads to a late transition, and APW$^\ddagger$ may cause the model to overemphasize easy samples, which impedes learning from more informative examples. In contrast, a large $\tau$ leads to an early transition, and APW$^\ddagger$ may cause the model to overemphasize hard samples, which makes optimization more difficult.

\begin{table}[htbp]
\centering   
\caption{Weight update in the early and later training phases for APW$^\ddagger$.}\label{tab:relation_tau}
\resizebox{0.8\linewidth}{!}{
\begin{tabular}{c|c|c}
\toprule[1.2pt]
Training phase & Early training phase ($\rho_k > \tau$) & Later training phase ($\rho_k < \tau$) \\
\midrule
Sign of $\alpha^\ddagger_k$ & $\alpha^\ddagger_k < 0$ & $\alpha^\ddagger_k > 0$ \\
\midrule
\makecell{Weight update \\ (unnormalized)} 
& \makecell{$\beta_{k}(\mathbf{z}^{(n)};e) = +1$: $w_k^{(n)} > w_{k-1}^{(n)}$ \\ $\beta_{k}(\mathbf{z}^{(n)};e) = -1$: $w_k^{(n)} < w_{k-1}^{(n)}$} 
& \makecell{ $\beta_{k}(\mathbf{z}^{(n)};e) = +1$: $w_k^{(n)} < w_{k-1}^{(n)}$ \\  $\beta_{k}(\mathbf{z}^{(n)};e) = -1$: $w_k^{(n)} > w_{k-1}^{(n)}$} \\
\bottomrule[1.2pt]
\end{tabular}
}
\end{table}

As shown in Table~\ref{tab:LR_results_tau}, the experimental results are obtained under different phase thresholds $\tau$, and an overly large or overly small $\tau$ can cause the LR model to learn a poor hyperplane, which is consistent with the discussion above. Therefore, we recommend $\tau = \frac{1}{2}$, under which APW$^\ddagger$ reduces to APW and exhibits more stable performance. Moreover, our theoretical analysis is established under $\tau = \frac{1}{2}$, further supporting the validity of this choice.

\begin{table}[htbp]
\centering
\caption{Effect of the phase threshold $\tau$ in APW$^\ddagger$ on LR. For the first column (training set), the marker definitions follow those in Fig.~\ref{fig:LR_results_detail}; for the second column (test set), the marker definitions follow those in Fig.~\ref{fig:LR_results}. For each plot, the region between the black dashed lines represents the confidence band, and correctly classified samples that lie outside this confidence band are marked as easy samples.}\label{tab:LR_results_tau}
\resizebox{0.57\linewidth}{!}{%
\begin{threeparttable} 
\begin{tabular}{
c|
>{\centering\arraybackslash}m{0.32\linewidth}|
>{\centering\arraybackslash}m{0.32\linewidth}
}
\toprule[1pt]
\multicolumn{1}{c}{} & Training set & Test set \\
\midrule[1pt]
\makecell[c]{\rotatebox{90}{{$\tau=0.1$}}}
& \begin{minipage}[b]{\linewidth}
\centering
\includegraphics[width=\linewidth]{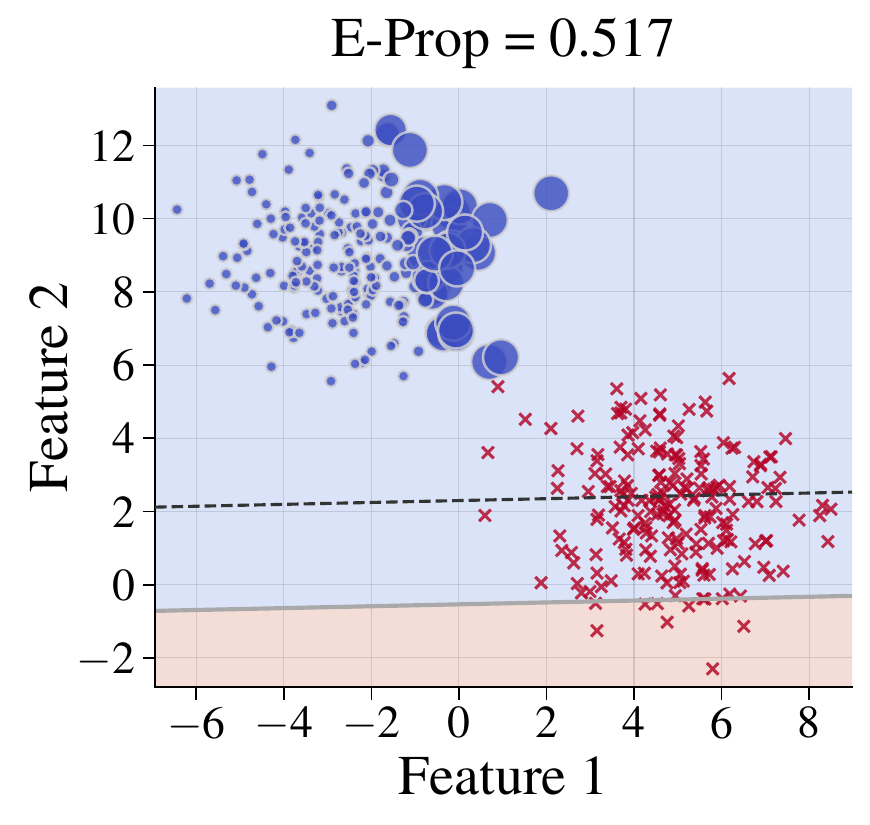}
\end{minipage}
& \begin{minipage}[b]{\linewidth}
\centering
\includegraphics[width=\linewidth]{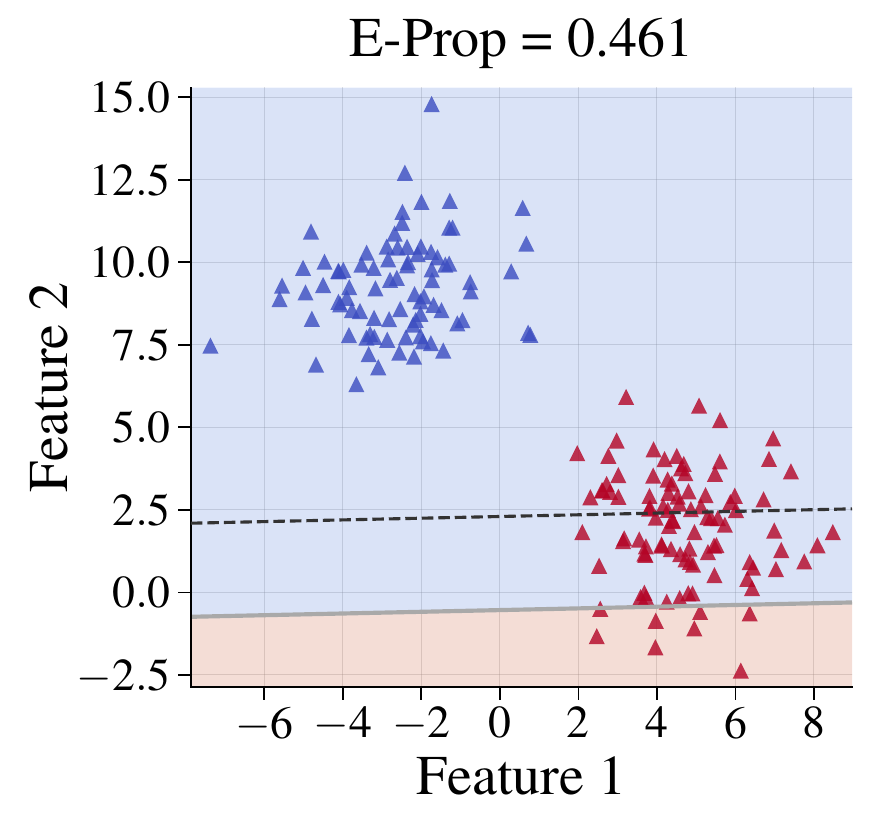}
\end{minipage}
\\
\midrule[0.7pt]
\makecell[c]{\rotatebox{90}{{$\tau=0.3$}}}
& \begin{minipage}[b]{\linewidth}
\centering
\includegraphics[width=\linewidth]{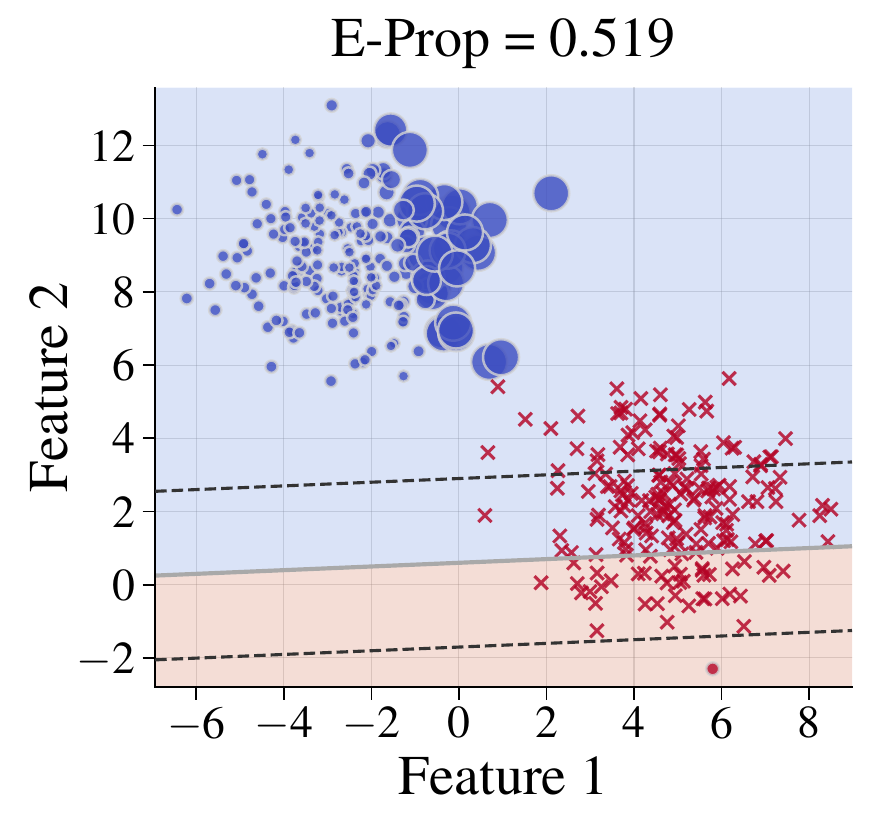}
\end{minipage}
& \begin{minipage}[b]{\linewidth}
\centering
\includegraphics[width=\linewidth]{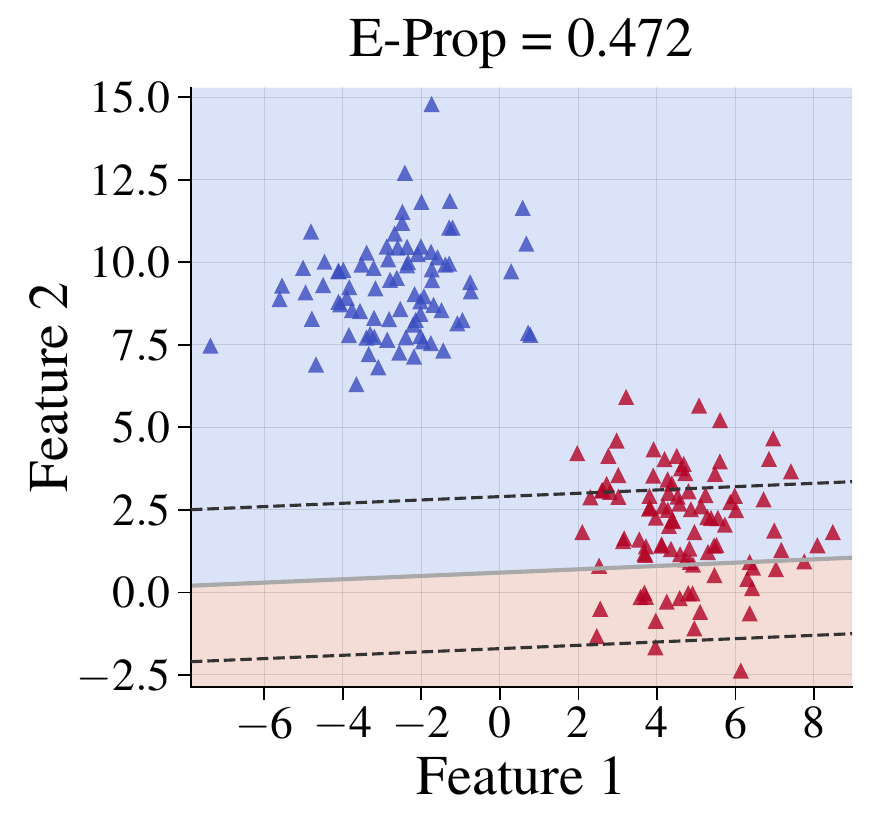}
\end{minipage}
\\
\midrule[0.7pt]
\makecell[c]{\rotatebox{90}{{$\tau=0.5$ (default)}}}
& \begin{minipage}[b]{\linewidth}
\centering
\includegraphics[width=\linewidth]{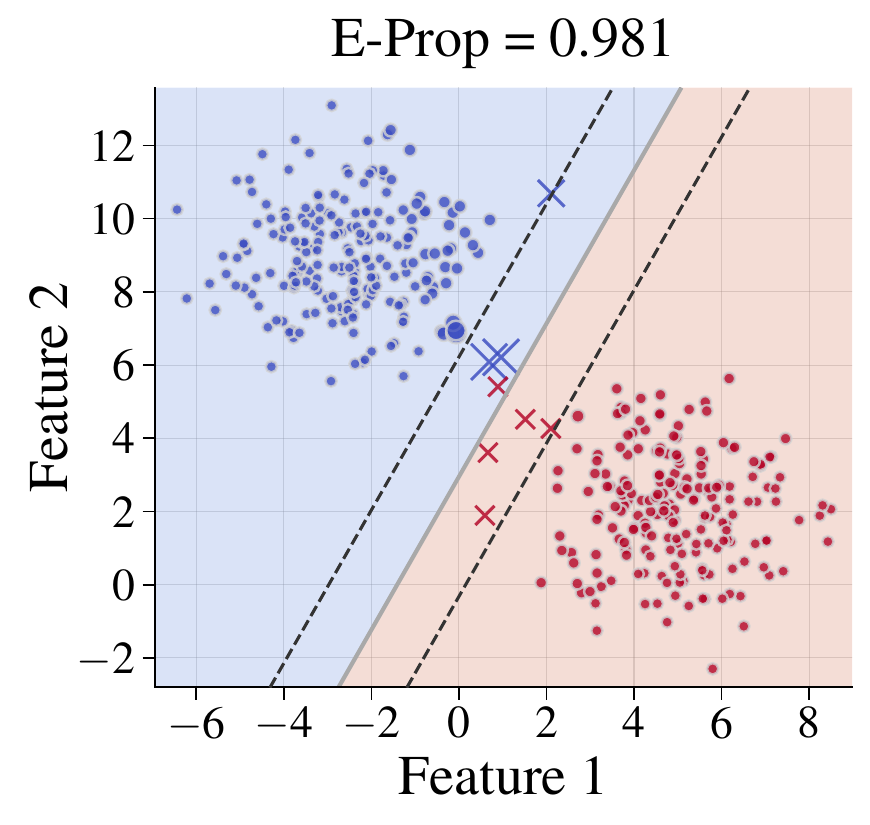}
\end{minipage}
& \begin{minipage}[b]{\linewidth}
\centering
\includegraphics[width=\linewidth]{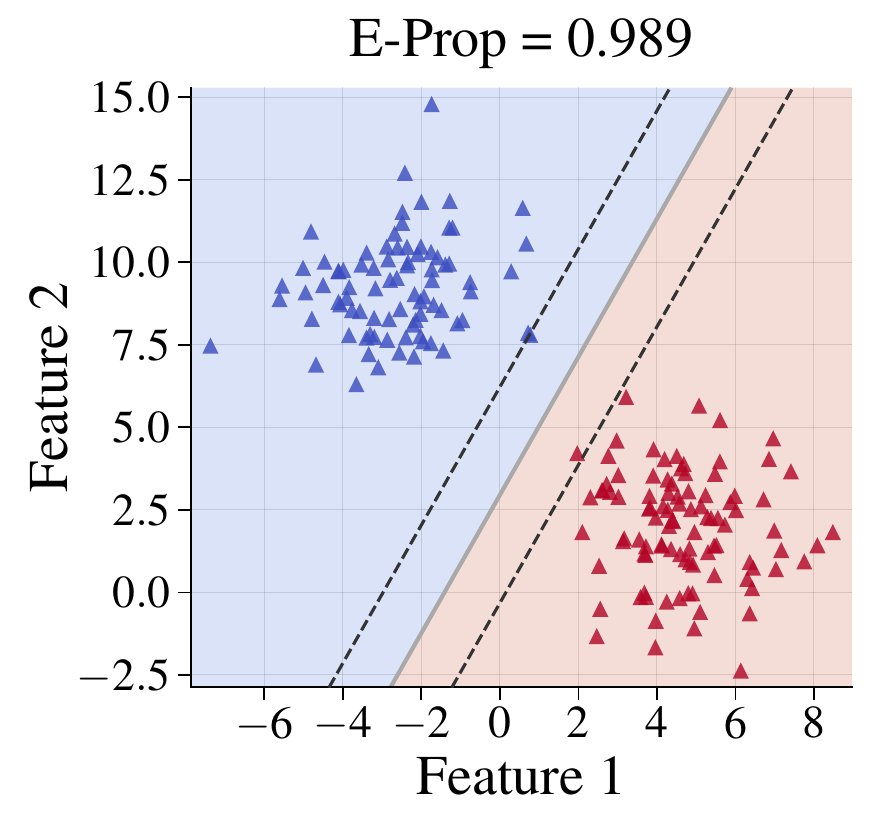}
\end{minipage}
\\
\midrule[0.7pt]
\makecell[c]{\rotatebox{90}{{$\tau=0.7$}}}
& \begin{minipage}[b]{\linewidth}
\centering
\includegraphics[width=\linewidth]{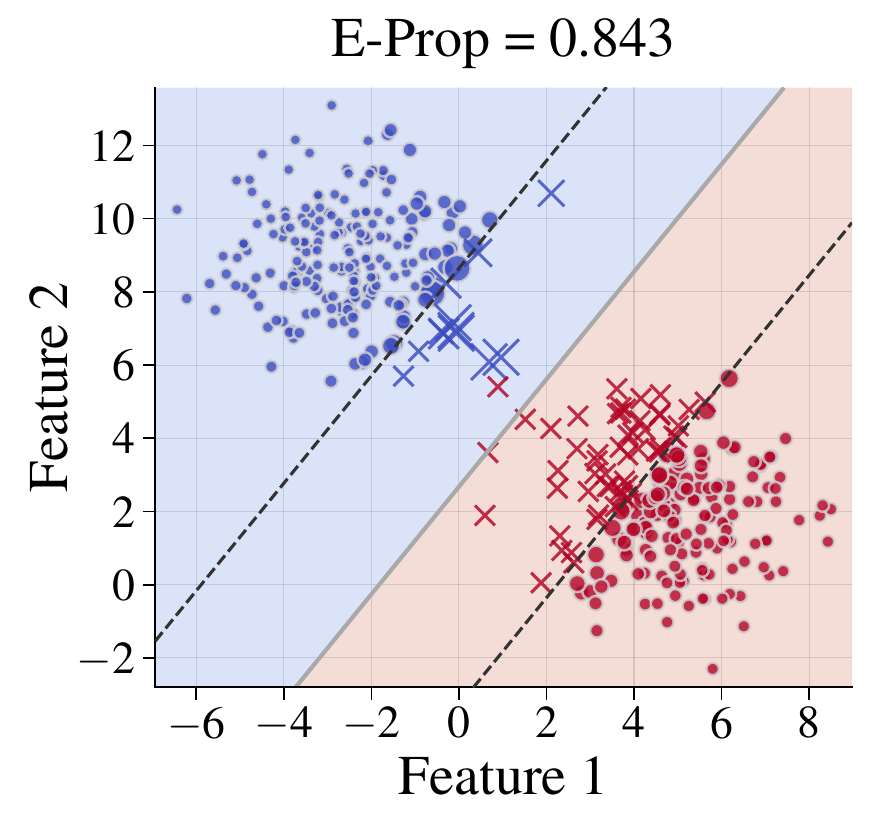}
\end{minipage}
& \begin{minipage}[b]{\linewidth}
\centering
\includegraphics[width=\linewidth]{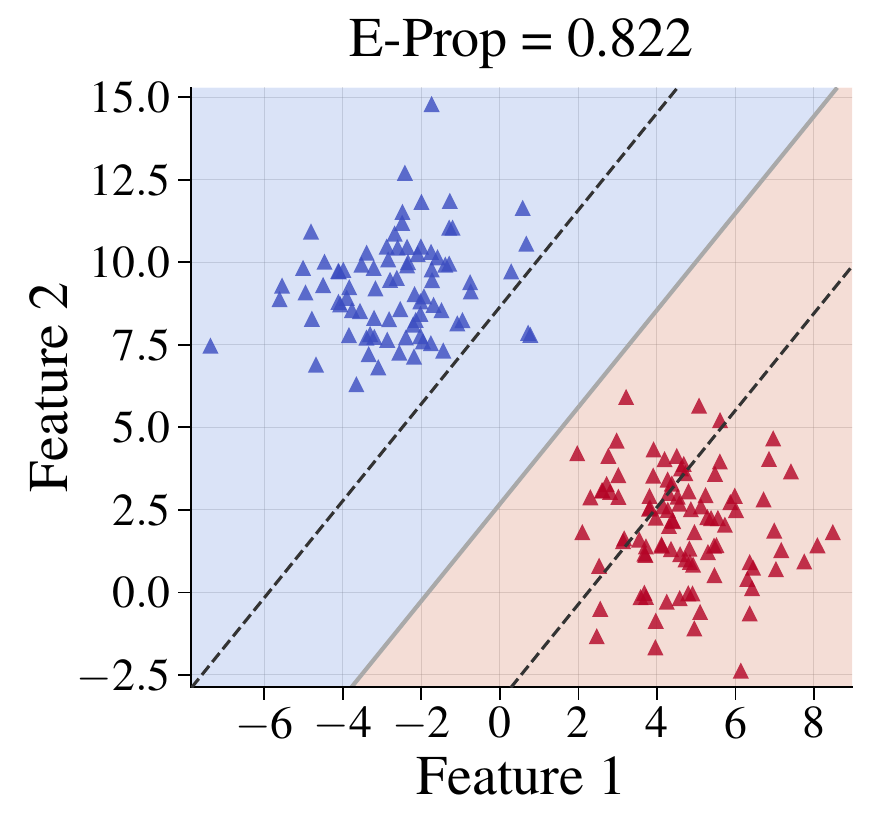}
\end{minipage}
\\
\midrule[0.7pt]
\makecell[c]{\rotatebox{90}{{$\tau=0.9$}}}
& \begin{minipage}[b]{\linewidth}
\centering
\includegraphics[width=\linewidth]{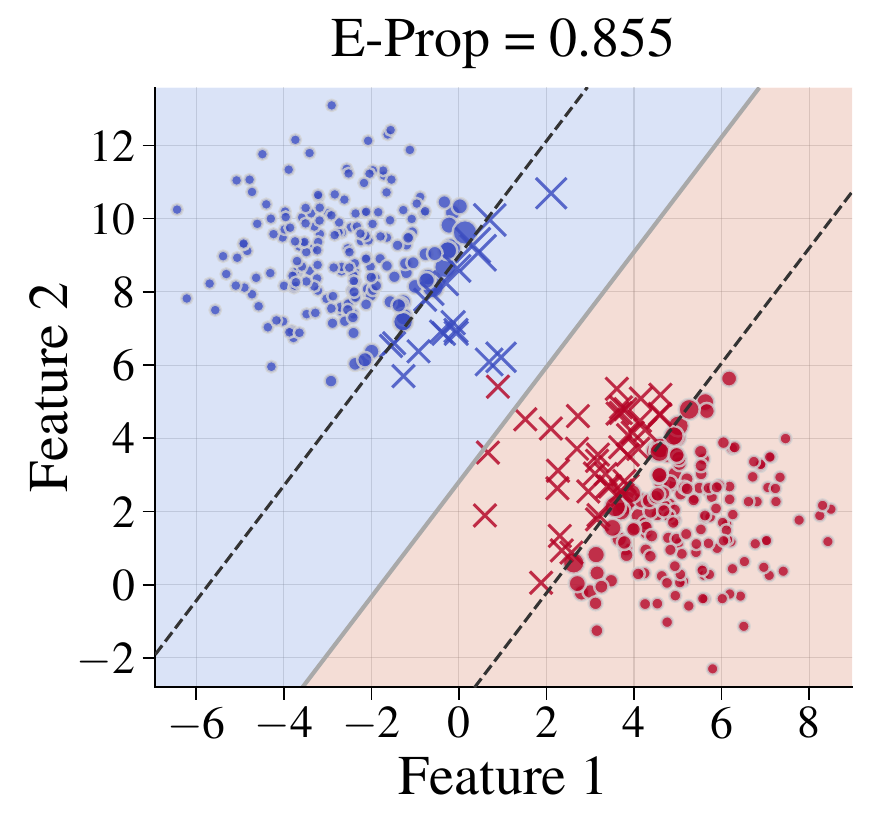}
\end{minipage}
& \begin{minipage}[b]{\linewidth}
\centering
\includegraphics[width=\linewidth]{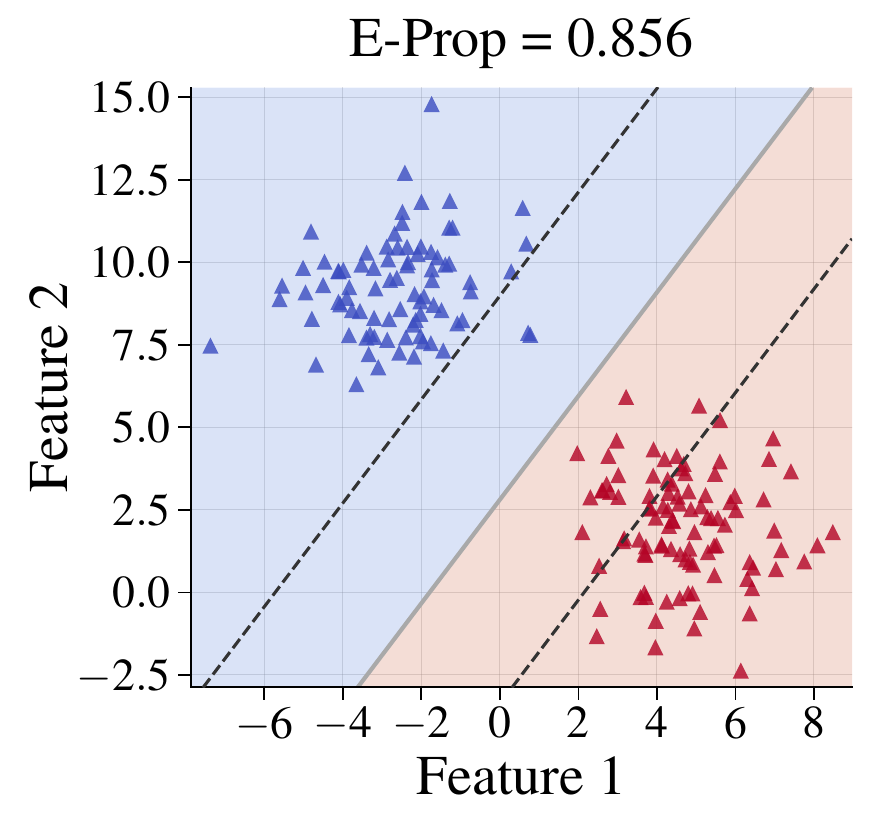}
\end{minipage}
\\
\bottomrule[1pt]
\end{tabular}
\end{threeparttable}
}
\end{table}

\clearpage

\section{Proofs of Main Results}\label{appendix:proof}

In this appendix, we prove the theoretical results of this paper, including Lemma~\ref{lemma:ineq:m_K}, Theorem~\ref{theorem:rho_k}, Theorem~\ref{theorem:training_effectiveness}, Theorem \ref{theorem:training_stability}, Lemma~\ref{lemma:theta} and Theorem~\ref{theorem:generalization_error}. 

\subsection{Proof of Lemma~\ref{lemma:ineq:m_K}}\label{proof:inequation}	
\begin{proof}
For any $a,b,c,d>0$, if $b-d>0$, it holds that
\begin{equation*}
\frac{a}{b}>\frac{c}{d}>0 \implies \frac{a-c}{b-d}>\frac{c}{d}>0.
\end{equation*}
Therefore, by setting $a := g_{K+\Delta_K}(\mathbf{z}^{(n)})$, $b := A_{K+\Delta_K}$, $c := g_K(\mathbf{z}^{(n)})$ and $d := A_K$, since $b-d=\sum_{k=K+1}^{K+\Delta_K}\alpha_k>0$ of the $\Delta_K$-convergence phase, we have
\begin{equation*}
\frac{g_{K+\Delta_K}(\mathbf{z}^{(n)})}{A_{K+\Delta_K}}>\frac{g_K(\mathbf{z}^{(n)})}{A_K}>0 \implies \frac{g_{K+\Delta_K}(\mathbf{z}^{(n)})-g_{K}(\mathbf{z}^{(n)})}
{A_{K+\Delta_K}-A_{K}}=\frac{\sum_{k=K+1}^{K+\Delta_K}\alpha_k\,\beta_k(\mathbf{z}^{(n)};e)}
{\sum_{k=K+1}^{K+\Delta_K}\alpha_k}>\frac{g_K(\mathbf{z}^{(n)})}{A_K}>0.
\end{equation*}
This completes the proof.
\end{proof}

\subsection{Proof of Theorem~\ref{theorem:rho_k}}\label{proof:theorem_rho_k}	

\begin{proof}
According to Eq.~\eqref{eq:APW_loss}, we have
\begin{equation*}
\mathrm{L}_{k}^{\mathrm{APW}} \geq \sum_{\{n: \beta_{k}(\mathbf{z}^{(n)}; e) = -1\}} w_{k}^{(n)} \mathrm{L}_{k}^{(n)} 
> e \sum_{\{n: \beta_{k}(\mathbf{z}^{(n)}; e) = -1\}} w_{k}^{(n)}.
\end{equation*}
Thus,
\begin{equation}\label{ineq:w_k}
\sum_{\{n: \beta_k(\mathbf{z}^{(n)}; e) = -1\}} w_k^{(n)} < \frac{1}{e} \mathrm{L}_k^{\mathrm{APW}}.
\end{equation}
If the hard samples in the $k$-th epoch are still the hard ones in the $(k+1)$-th epoch, it holds that
\begin{equation*}
\beta_k(\mathbf{z}^{(n)}; e) = -1 \iff \beta_{k+1}(\mathbf{z}^{(n)}; e) = -1, 
\end{equation*}
and then 
\begin{equation*}
\sum_{\{n: \beta_k(\mathbf{z}^{(n)}; e) = -1\}} w_k^{(n)} 
= \sum_{\{n: \beta_{k+1}(\mathbf{z}^{(n)}; e) = -1\}} w_k^{(n)} = \rho_{k+1}.
\end{equation*}
It follows from Eq.~\eqref{ineq:w_k} that
\begin{equation*}
\rho_{k+1} < \frac{1}{e} \mathrm{L}_k^{\mathrm{APW}}.
\end{equation*}
This completes the proof.
\end{proof}

\subsection{Proof of Theorem~\ref{theorem:training_effectiveness}}\label{proof:theorem_training_effectiveness}	

\begin{proof}
The proof is divided into two stages:

(1) In the first stage, we would like to prove
\begin{equation}\label{ineq:I}
\frac{1}{N}\sum_{n=1}^N \mathbf{I}\left(g_K(\mathbf{z}^{(n)}) \leq 0\right) \leq \frac{1}{N}\sum_{n=1}^N \exp\left(-g_K(\mathbf{z}^{(n)})\right) = \prod_{k=1}^K Z_k.
\end{equation}
In fact,  when $g_K(\mathbf{z}^{(n)}) > 0$,
\begin{equation*}
\mathbf{I}\left(g_K(\mathbf{z}^{(n)}) \leq 0\right) = 0 \;\text{and}\; \exp\left(-g_K(\mathbf{z}^{(n)})\right) > 0 \implies \mathbf{I}\left(g_K(\mathbf{z}^{(n)}) \leq 0\right) < \exp\left(-g_K(\mathbf{z}^{(n)})\right).
\end{equation*}
When $g_K(\mathbf{z}^{(n)}) \leq 0$,
\begin{equation*}
\mathbf{I}\left(g_K(\mathbf{z}^{(n)}) \leq 0\right) = 1 \;\text{and}\; \exp\left(-g_K(\mathbf{z}^{(n)})\right) \geq 1\implies  \mathbf{I}\left(g_K(\mathbf{z}^{(n)}) \leq 0\right) \leq \exp\left(-g_K(\mathbf{z}^{(n)})\right).
\end{equation*}
To sum up, we have obtained the first inequality of Eq.~\eqref{ineq:I}.
Next, according to Eq.~\eqref{eq:g_K}, we have
\begin{equation*}
\begin{aligned}
\frac{1}{N}\sum_{n=1}^N \exp\left(-g_K(\mathbf{z}^{(n)})\right) 
&=\frac{1}{N}\sum_{n=1}^N \exp\left(-\sum_{k=1}^K \alpha_k \beta_k\left(\mathbf{z}^{(n)}; e\right)\right)\\
&=\sum_{n=1}^N \frac{1}{N} \prod_{k=1}^K \exp\left(-\alpha_k \beta_k\left(\mathbf{z}^{(n)}; e\right)\right)\\
&= \sum_{n=1}^N w_{0}^{(n)} \prod_{k=1}^K \exp\left(-\alpha_k \beta_k\left(\mathbf{z}^{(n)}; e\right)\right) \quad (\text{since}\; w_{0}^{(n)} = \frac{1}{N})\\
&= \sum_{n=1}^N w_{1}^{(n)} Z_1 \prod_{k=2}^K \exp\left(-\alpha_k \beta_k\left(\mathbf{z}^{(n)}; e\right)\right)  \quad (\text{according to Eq.~\eqref{eq:weight_update}})\\
&= Z_1 Z_2 \cdots Z_{K-1} \sum_{n=1}^N w_{K-1}^{(n)} \exp\left(-\alpha_K \beta_K\left(\mathbf{z}^{(n)}; e\right)\right) \\
&= \prod_{k=1}^K Z_k \left(\sum_{n=1}^N w_{K}^{(n)}\right)\\
&= \prod_{k=1}^K Z_k. \quad (\text{since}\; \sum_{n=1}^N w_{k}^{(n)} = 1,\; \forall k\in \mathbb{N})
\end{aligned}
\end{equation*}
Until now, the equality of Eq.~\eqref{ineq:I} is achieved. It completes the first stage of the proof.

(2) In the second stage, we set $q \geq 2$ and define $\gamma_k = \frac{1}{2} - \rho_k$. We aim to prove
\begin{equation}\label{eq:pai_z_k_a}
\prod_{k=1}^K Z_k \leq \exp\left(-\frac{4}{q} \sum_{k=1}^K \gamma_k^2\right).
\end{equation}
At first, we rewrite $Z_k$ as:
\begin{align}\label{eq:Z_k}
Z_k &= \sum_{n=1}^{N} w_{k-1}^{(n)} \exp\big(-\alpha_{k} \beta_k(\mathbf{z}^{(n)};e)\big) \nonumber\\
&= \sum_{\{n: \beta_k(\mathbf{z}^{(n)}; e)=1\}} w_{k-1}^{(n)} \exp\big(-\alpha_{k} \beta_k(\mathbf{z}^{(n)};e)\big) + \sum_{\{n: \beta_k(\mathbf{z}^{(n)}; e)=-1\}} w_{k-1}^{(n)} \exp\big(-\alpha_{k} \beta_k(\mathbf{z}^{(n)};e)\big)  \nonumber\\
&= (1-\rho_{k}) \exp(-\alpha_{k}) + \rho_{k}\exp(\alpha_{k}). \quad \text{(since $\rho_{k}  = \sum_{\{n: \beta_k(\mathbf{z}^{(n)}; e)=-1\}}w_{k-1}^{(n)}$ and $\sum_{n=1}^Nw_{k-1}^{(n)}=1$)} 
\end{align}
Subsequently, we consider two cases that are dependent on the value of $q$.
\begin{itemize}
\item {\bf Case 1 ($q = 2$):} Setting $\alpha_k = \frac{1}{2} \log\left(\frac{1 - \rho_k}{\rho_k}\right)$, according to Eq.~\eqref{eq:Z_k}, we have
\begin{equation*}
Z_k = 2\sqrt{\rho_k (1 - \rho_k)} = 2\sqrt{\frac{1}{4} - \gamma_k^2} = \sqrt{1 - 4\gamma_k^2}. \quad (\text{since} \; \gamma_k = \frac{1}{2} - \rho_k)
\end{equation*}
If $0 < x < 1$, it is true that $\ln(1 - x) \leq -x$, and then
\begin{equation*}
\exp\left(\frac{1}{2} \ln(1 - x)\right) \leq \exp\left(-\frac{x}{2}\right)\implies\sqrt{1 - x} \leq \exp\left(-\frac{x}{2}\right).
\end{equation*}
Since $\rho_k\in(0,1)$ and $\gamma_k = \frac{1}{2} - \rho_k$, we have $4\gamma_k^2 \in(0,1)$. Letting $x = 4\gamma_k^2$, it follows from the fact $\sqrt{1 - x} \leq \exp\left(-\frac{x}{2}\right)$ that
\begin{equation}\label{ineq:gamma_k}
\sqrt{1 - 4\gamma_k^2} \leq \exp(-2\gamma_k^2). 
\end{equation}
Therefore, we have 
\begin{equation*}
\prod_{k=1}^K Z_k = \prod_{k=1}^K \sqrt{1 - 4\gamma_k^2} \leq \prod_{k=1}^K \exp(-2\gamma_k^2) = \exp\left(-2 \sum_{k=1}^K \gamma_k^2\right).
\end{equation*}
It achieves the result Eq.~\eqref{eq:pai_z_k_a} in the case of $q = 2$.
\item {\bf Case 2 ($q > 2$):} When $q > 2$, we set $\alpha_k = \frac{1}{q} \log\left(\frac{1 - \rho_k}{\rho_k}\right)$. According to Eq.~\eqref{eq:Z_k}, we have
\begin{equation*}
\begin{aligned}
Z_k &= (1 - \rho_k) \exp(-\alpha_k) + \rho_k \exp(\alpha_k) \\
&= (1 - \rho_k)^{1 - \frac{1}{q}} \rho_k^{\frac{1}{q}} + \rho_k^{1 - \frac{1}{q}} (1 - \rho_k)^{\frac{1}{q}} \\
&= (1 - \rho_k)^{\frac{1}{q}} \rho_k^{\frac{1}{q}} \left[(1 - \rho_k)^{1 - \frac{2}{q}} + \rho_k^{1 - \frac{2}{q}}\right].
\end{aligned}
\end{equation*}
Define a function
\begin{equation*}
f(x) := (1 - x)^{\frac{1}{q}} x^{\frac{1}{q}} \left[(1 - x)^{1 - \frac{2}{q}} + x^{1 - \frac{2}{q}}\right],
\end{equation*}
such that
\begin{equation*}
Z_k = f(\rho_k).
\end{equation*}
Taking the natural logarithm on $f(x)$, we define
\begin{align*}
&\phi(x) := \ln(f(x)) = \frac{1}{q} \ln((1 - x)x) + \ln\left((1 - x)^{1 - \frac{2}{q}} + x^{1 - \frac{2}{q}}\right),\\
&\phi_1(x) := \frac{1}{q} \ln((1 - x)x),\\
&\phi_2(x) := \ln\left((1 - x)^{1 - \frac{2}{q}} + x^{1 - \frac{2}{q}}\right).
\end{align*}
Thus, we have $f(x) = \exp(\phi(x)) = \exp(\phi_1(x)+\phi_2(x))$ and then, according to $Z_k = f(\rho_k)$,
\begin{equation*}
Z_k = \exp(\phi(\rho_k)) = \exp(\phi_1(\rho_k)) \cdot \exp(\phi_2(\rho_k))  \implies \prod_{k=1}^K Z_k = \prod_{k=1}^K \exp(\phi_1(\rho_k)) \cdot \prod_{k=1}^K \exp(\phi_2(\rho_k)).
\end{equation*}
By combining Eq.~\eqref{ineq:gamma_k} and $\gamma_k = \frac{1}{2}-\rho_k$, we arrive at
\begin{equation}\label{ineq:exp_a}
\begin{aligned}
\prod_{k=1}^K \exp(\phi_1(\rho_k)) &= \prod_{k=1}^K \sqrt[q]{(1 - \rho_k)\rho_k} \\
&= \prod_{k=1}^K \sqrt[q]{\frac{1}{4}(1 - 4\gamma_k^2)}\\
&\leq \prod_{k=1}^K \sqrt[q]{\frac{1}{4} \exp(-4\gamma_k^2)}\\
&= \sqrt[q]{\frac{1}{4^K} \exp\left(-4 \sum_{k=1}^K \gamma_k^2\right)} \\
&= 4^{-\frac{K}{q}} \exp\left(-\frac{4}{q} \sum_{k=1}^K \gamma_k^2\right).
\end{aligned}
\end{equation}
Moreover, consider composite function $\phi_2(x)=\ln\left((1-x)^{1-\frac2q}+x^{1-\frac2q}\right)$ with $x \in (0,1)$. Since, $(1-x)^{1-\frac2q}+x^{1-\frac2q}$ is a concave function ($x \in (0,1)$ and $q>2$) and $\ln(x)$ is increasing and concave ($x \in (0,\infty)$), we have $\phi_2(x)$ is a concave function with $x \in (0,1)$. Additionally, $\phi_2(x)$ is symmetric about $x = \frac{1}{2}$.
Therefore, we have
\begin{equation*}
\phi_2(x) \leq \phi_2\left(\frac{1}{2}\right) = \frac{2 \ln(2)}{q},
\end{equation*}
which implies that
\begin{equation}\label{ineq:exp_b}
\prod_{k=1}^K \exp(\phi_2(\rho_k)) \leq \exp\left(\frac{2K \ln(2)}{q}\right) = 4^{\frac{K}{q}}.
\end{equation}
The combination of Eq.~\eqref{ineq:exp_a} and Eq.~\eqref{ineq:exp_b} leads to
\begin{equation*}
\begin{aligned}
\prod_{k=1}^K Z_k &= \prod_{k=1}^K \exp(\phi(\rho_k)) \\
&= \prod_{k=1}^K \exp(\phi_1(\rho_k)) \cdot \prod_{k=1}^K \exp(\phi_2(\rho_k)) \\
&\leq 4^{-\frac{K}{q}} \exp\left(-\frac{4}{q} \sum_{k=1}^K \gamma_k^2\right) \cdot 4^{\frac{K}{q}} \\
&= \exp\left(-\frac{4}{q} \sum_{k=1}^K \gamma_k^2\right).
\end{aligned}
\end{equation*}
It achieves the result \eqref{eq:pai_z_k_a} in the case of $q > 2$.
\end{itemize}
The combination of the results in the two cases leads to
\begin{equation*}
\prod_{k=1}^K Z_k \leq \exp\left(-\frac{4}{q} \sum_{k=1}^K \gamma_k^2\right), \quad \forall \; q \geq 2.
\end{equation*}
This completes the proof.
\end{proof}

\subsection{Proof of Theorem~\ref{theorem:training_stability}}\label{proof:theorem_training_stability}	

\begin{proof}
The proof is divided into two stages:

\textbf{(1)} Let $ q \geq 2 $ and assume $ A_K > 0 $. In the first stage, we aim to prove:
\begin{equation}\label{ineq:P_z_desire}
\mathbb{P}_{\mathbf{z} \sim \mathcal{Z}}\left[ m_K(\mathbf{z}) \leq \theta \right] \leq 4^{\frac{K}{q}} \prod_{k=1}^K \rho_k^{\frac{1 - \theta}{q}} (1 - \rho_k)^{\frac{1 + \theta}{q}}.
\end{equation}
Given
\begin{equation*}
m_K(\mathbf{z}^{(n)}) = \frac{g_K(\mathbf{z}^{(n)})}{A_K} = \frac{\sum_{k=1}^K \alpha_k \beta_k(\mathbf{z}^{(n)}; e)}{\sum_{k=1}^K \alpha_k},
\end{equation*}
under the conditions $ m(\mathbf{z}^{(n)}) \leq \theta $ and $A_K =  \sum_{k=1}^K \alpha_k > 0 $, we have:
\begin{equation*}
\sum_{k=1}^K \alpha_k \beta_k(\mathbf{z}^{(n)}; e) \leq \theta \sum_{k=1}^K \alpha_k.
\end{equation*}
Therefore,
\begin{equation*}
\exp\left( -\sum_{k=1}^K \alpha_k \beta_k(\mathbf{z}^{(n)}; e) + \theta \sum_{k=1}^K \alpha_k \right) \geq 1.
\end{equation*}
This implies
\begin{equation}\label{ineq:probability_exp}
\mathbf{I}\{ m_K(\mathbf{z}^{(n)}) \leq \theta \} \leq \exp\left( -\sum_{k=1}^K \alpha_k \beta_k(\mathbf{z}^{(n)}; e) + \theta \sum_{k=1}^K \alpha_k \right),
\end{equation}
where $\mathbf{I}(\cdot)$ is the indicator function. We have
\begin{equation*}
\begin{aligned}
\mathbb{E}_{\mathbf{z} \sim \mathcal{Z}} \left[ \mathbf{I}\{ m_K(\mathbf{z}) \leq \theta \} \right] &= 1 \cdot \mathbb{P}_{\mathbf{z} \sim \mathcal{Z}}\left[ m_K(\mathbf{z}) \leq \theta \right] + 0 \cdot \mathbb{P}_{\mathbf{z} \sim \mathcal{Z}}\left[ m_K(\mathbf{z}) > \theta \right] \\
&= \mathbb{P}_{\mathbf{z} \sim \mathcal{Z}}\left[ m_K(\mathbf{z}) \leq \theta \right],
\end{aligned}
\end{equation*}
and then
\begin{equation*}
\begin{aligned}
\mathbb{P}_{\mathbf{z} \sim \mathcal{Z}}\left[ m_K(\mathbf{z}) \leq \theta \right] &= \mathbb{E}_{\mathbf{z} \sim \mathcal{Z}} \left[ \mathbf{I}\{ m_K(\mathbf{z}) \leq \theta \} \right] \\
&\leq \mathbb{E}_{\mathbf{z} \sim \mathcal{Z}} \left[ \exp\left( -\sum_{k=1}^K \alpha_k \beta_k(\mathbf{z}; e) + \theta \sum_{k=1}^K \alpha_k \right) \right].  \quad (\text{according to} \; \text{Eq.~\eqref{ineq:probability_exp}}) 
\end{aligned}
\end{equation*}
Considering that the $ N $ samples are independently and uniformly sampled from $ \mathcal{Z} $, we have
\begin{equation*}
\mathbb{E}_{\mathbf{z} \sim \mathcal{Z}}[\cdot] = \frac{1}{N} \sum_{n=1}^N [\cdot],
\end{equation*}
and then
\begin{equation*}
\begin{aligned}
\mathbb{P}_{\mathbf{z} \sim \mathcal{Z}}\left[ m_K(\mathbf{z}) \leq \theta \right] &\leq \mathbb{E}_{\mathbf{z} \sim \mathcal{Z}} \left[ \exp\left( -\sum_{k=1}^K \alpha_k \beta_k(\mathbf{z}; e) + \theta \sum_{k=1}^K \alpha_k \right) \right]\\
&= \frac{1}{N} \sum_{n=1}^N \exp\left( -\sum_{k=1}^K \alpha_k \beta_k(\mathbf{z}^{(n)}; e) + \theta \sum_{k=1}^K \alpha_k \right) \\
&= \frac{1}{N} \sum_{n=1}^N \left[ \exp\left( -\sum_{k=1}^K \alpha_k \beta_k(\mathbf{z}^{(n)}; e) \right) \cdot \exp\left( \theta \sum_{k=1}^K \alpha_k \right) \right] \\
&= \exp\left( \theta \sum_{k=1}^K \alpha_k \right) \cdot \left( \frac{1}{N} \sum_{n=1}^N \exp\left( -\sum_{k=1}^K \alpha_k \beta_k(\mathbf{z}^{(n)}; e) \right) \right),
\end{aligned}
\end{equation*}
where
\begin{equation*}
\frac{1}{N} \sum_{n=1}^N \exp\left( -\sum_{k=1}^K \alpha_k \beta_k(\mathbf{z}^{(n)}; e) \right) = \frac{1}{N} \sum_{n=1}^N \exp(-g_K(\mathbf{z}^{(n)})) = \prod_{k=1}^K Z_k, \quad (\text{according to} \; \text{Eq.~\eqref{ineq:I}})
\end{equation*}
therefore
\begin{equation}\label{ineq:P_z}
\mathbb{P}_{\mathbf{z} \sim \mathcal{Z}}\left[ m_K(\mathbf{z}) \leq \theta \right] \leq \exp\left( \theta \sum_{k=1}^K \alpha_k \right) \cdot \prod_{k=1}^K Z_k = \left(\prod_{k=1}^K \exp(\theta \alpha_k) \right) \prod_{k=1}^K Z_k = \prod_{k=1}^K \exp(\theta \alpha_k) Z_k.
\end{equation}
Next, we examine $\exp(\theta \alpha_k) Z_k$ in two cases that are dependent on the value of $q$:
\begin{itemize}
\item \textbf{Case 1 ($q = 2$):} according to Eq.~\eqref{eq:Z_k}, we have 
\begin{align*}
&\alpha_k = \frac{1}{2} \log\left( \frac{1 - \rho_k}{\rho_k} \right),\\
&Z_k = (1-\rho_{k}) \exp(-\alpha_{k}) + \rho_{k}\exp(\alpha_{k}) = 2 \sqrt{ \rho_k (1 - \rho_k) }.
\end{align*}
Therefore, 
\begin{equation*}
\begin{aligned}
\exp(\theta \alpha_k) \cdot Z_k &= \left( \frac{1 - \rho_k}{\rho_k} \right)^{\frac{\theta}{2}} \cdot 2 \sqrt{ \rho_k (1 - \rho_k) } \\
&= 2 \rho_k^{\frac{1}{2}} (1 - \rho_k)^{\frac{1}{2}} \left( \frac{1 - \rho_k}{\rho_k} \right)^{\frac{\theta}{2}} \\
&= 2 \rho_k^{\frac{1 - \theta}{2}} (1 - \rho_k)^{\frac{1 + \theta}{2}}.
\end{aligned}
\end{equation*}
Following this equation, we have
\begin{equation*}
\prod_{k=1}^K \exp(\theta \alpha_k) Z_k = \prod_{k=1}^K \left[ 2 \rho_k^{\frac{1 - \theta}{2}} (1 - \rho_k)^{\frac{1 + \theta}{2}} \right] = 2^K \prod_{k=1}^K \rho_k^{\frac{1 - \theta}{2}} (1 - \rho_k)^{\frac{1 + \theta}{2}},
\end{equation*}
and according to Eq.~\eqref{ineq:P_z}, we have
\begin{align*}
\mathbb{P}_{\mathbf{z} \sim \mathcal{Z}}\left[ m_K(\mathbf{z}) \leq \theta \right] &\leq \prod_{k=1}^K \exp(\theta \alpha_k) Z_k\\
&= 2^K \prod_{k=1}^K \rho_k^{\frac{1 - \theta}{2}} (1 - \rho_k)^{\frac{1 + \theta}{2}}\\
&= 2^K \prod_{k=1}^K \sqrt{ \rho_k^{1 - \theta} (1 - \rho_k)^{1 + \theta} }.
\end{align*}
This achieves the result Eq.~\eqref{ineq:P_z_desire} in the case of $q = 2$.

\item \textbf{Case 2 ($q > 2$):} according to Eq.~\eqref{eq:Z_k}, we have 
\begin{align*}
&\alpha_k = \frac{1}{q} \log\left( \frac{1 - \rho_k}{\rho_k} \right),\\
&Z_k = (1 - \rho_k)^{\frac{1}{q}} \rho_k^{\frac{1}{q}} \left[ (1 - \rho_k)^{1 - \frac{2}{q}} + \rho_k^{1 - \frac{2}{q}} \right].
\end{align*}
Similar to the proof of Theorem \ref{theorem:training_effectiveness}, we define a function
\begin{equation*}
f(x) := (1 - x)^{\frac{1}{q}} x^{\frac{1}{q}} \left[(1 - x)^{1 - \frac{2}{q}} + x^{1 - \frac{2}{q}}\right],
\end{equation*}
such that
\begin{equation*}
Z_k = f(\rho_k).
\end{equation*}
Taking the natural logarithm on $f(x)$, we define
\begin{align*}
&\phi(x) := \ln(f(x)) = \frac{1}{q} \ln((1 - x)x) + \ln\left((1 - x)^{1 - \frac{2}{q}} + x^{1 - \frac{2}{q}}\right),\\
&\phi_1(x) := \frac{1}{q} \ln((1 - x)x),\\
&\phi_2(x) := \ln\left((1 - x)^{1 - \frac{2}{q}} + x^{1 - \frac{2}{q}}\right).
\end{align*}
Thus, according to $f(x) = \exp(\phi(x)) = \exp(\phi_1(x)+\phi_2(x))$ and $Z_k = f(\rho_k)$, we have
\begin{equation*}
Z_k = \exp(\phi(\rho_k)) = \exp(\phi_1(\rho_k)) \cdot \exp(\phi_2(\rho_k)) \implies \prod_{k=1}^K Z_k = \prod_{k=1}^K \exp(\phi_1(\rho_k)) \cdot \prod_{k=1}^K \exp(\phi_2(\rho_k)).
\end{equation*}
Following the proof of Theorem \ref{theorem:training_effectiveness}, we have 
\begin{align*}
&\prod_{k=1}^K \exp(\phi_1(\rho_k)) = \prod_{k=1}^K \sqrt[q]{ (1 - \rho_k) \rho_k },\\
&\prod_{k=1}^K \exp(\phi_2(\rho_k)) \leq 4^{\frac{K}{q}}.
\end{align*}
Therefore,
\begin{equation*}
\prod_{k=1}^K Z_k \leq 4^{\frac{K}{q}} \prod_{k=1}^K \sqrt[q]{ (1 - \rho_k) \rho_k }.
\end{equation*}
According to $\alpha_k = \frac{1}{q} \log\left( \frac{1 - \rho_k}{\rho_k} \right)$, it holds that 
\begin{equation*}
\begin{aligned}
\prod_{k=1}^K \exp(\theta \alpha_k) Z_k &= \prod_{k=1}^K \exp(\theta \alpha_k) \cdot \prod_{k=1}^K Z_k\\
&= \prod_{k=1}^K \left[ \left( \frac{1 - \rho_k}{\rho_k} \right)^{\frac{\theta}{q}} \right] \cdot \prod_{k=1}^K Z_k\\
&\leq 4^{\frac{K}{q}} \prod_{k=1}^K \left[ \left( \frac{1 - \rho_k}{\rho_k} \right)^{\frac{\theta}{q}} \cdot \sqrt[q]{ (1 - \rho_k) \rho_k } \right] \\
&= 4^{\frac{K}{q}} \prod_{k=1}^K \left[ (1 - \rho_k)^{\frac{1 + \theta}{q}} \rho_k^{\frac{1 - \theta}{q}} \right].
\end{aligned}
\end{equation*}
Thus, according to Eq.~\eqref{ineq:P_z}, we have
\begin{equation*}
\mathbb{P}_{\mathbf{z} \sim \mathcal{Z}}\left[ m_K(\mathbf{z}) \leq \theta \right] \leq 4^{\frac{K}{q}} \prod_{k=1}^K \rho_k^{\frac{1 - \theta}{q}} (1 - \rho_k)^{\frac{1 + \theta}{q}}.
\end{equation*}
This achieves the result Eq.~\eqref{ineq:P_z_desire} in the case of $q > 2$. 
\end{itemize}
Combining the results of the two cases, we have proven that
\begin{equation*}
\mathbb{P}_{\mathbf{z} \sim \mathcal{Z}} \left[ m_K(\mathbf{z}) \leq \theta \right] \leq 4^{\frac{K}{q}} \prod_{k=1}^K \rho_k^{\frac{1-\theta}{q}} (1 - \rho_k)^{\frac{1+\theta}{q}} , \quad \forall \; q \geq 2.
\end{equation*}
Until now, we have completed the proof of the first stage.

\textbf{(2)} In the second stage, we aim to prove:
\begin{equation}\label{eq:theta}
4^{\frac{K}{q}} \prod_{k=1}^K \rho_k^{\frac{1-\theta}{q}} (1 - \rho_k)^{\frac{1+\theta}{q}} = \prod_{k=1}^K \delta(\theta,\gamma_k),
\end{equation}
where $\delta(\theta,\gamma_k) =  (1 - 2\gamma_k)^{\frac{1 - \theta}{q}} (1 + 2\gamma_k)^{\frac{1 + \theta}{q}} $. We also consider two cases: $q = 2$ and $q > 2$.

\begin{itemize}
\item \textbf{Case 1 ($q = 2$):} It holds that
\begin{align*}
2^K \prod_{k=1}^K \rho_k^{\frac{1-\theta}{2}} (1 - \rho_k)^{\frac{1+\theta}{2}} &= 2^K \prod_{k=1}^K \sqrt{ \rho_k^{1-\theta} (1 - \rho_k)^{1+\theta} }\\
&= \prod_{k=1}^K \sqrt{ 2^{1-\theta} 2^{1+\theta} \left( \frac{1}{2} - \gamma_k \right)^{1-\theta} \left( \frac{1}{2} + \gamma_k \right)^{1+\theta} } \quad (\text{according to} \; \gamma_k = \frac{1}{2} - \rho_k)\\
&= \prod_{k=1}^K \sqrt{ (1 - 2\gamma_k)^{1-\theta} (1 + 2\gamma_k)^{1+\theta} } \\
&= \prod_{k=1}^K \left( (1 - 2\gamma_k)^{\frac{1-\theta}{2}} (1 + 2\gamma_k)^{\frac{1+\theta}{2}} \right)\\
&= \prod_{k=1}^K \delta(\theta,\gamma_k). \quad (\text{according to} \; q=2\;\text{and}\;\delta(\theta,\gamma_k) =  (1 - 2\gamma_k)^{\frac{1 - \theta}{q}} (1 + 2\gamma_k)^{\frac{1 + \theta}{q}} )
\end{align*}
This achieves the result Eq.~\eqref{eq:theta} in the case of $q =2$.
\item \textbf{Case 2 ($q > 2$):} We have
\begin{align*}
4^{\frac{K}{q}} \prod_{k=1}^K \rho_k^{\frac{1-\theta}{q}} (1 - \rho_k)^{\frac{1+\theta}{q}} &= \prod_{k=1}^K \left( 4^{\frac{1}{q}} \left( \frac{1}{2} - \gamma_k \right)^{\frac{1-\theta}{q}} \left( \frac{1}{2} + \gamma_k \right)^{\frac{1+\theta}{q}} \right)  \quad (\text{according to} \; \gamma_k = \frac{1}{2} - \rho_k)\\
&= \prod_{k=1}^K \left( 2^{\frac{1-\theta}{q}} 2^{\frac{1+\theta}{q}} \left( \frac{1}{2} - \gamma_k \right)^{\frac{1-\theta}{q}} \left( \frac{1}{2} + \gamma_k \right)^{\frac{1+\theta}{q}} \right) \\
&= \prod_{k=1}^K \left( (1 - 2\gamma_k)^{\frac{1-\theta}{q}} (1 + 2\gamma_k)^{\frac{1+\theta}{q}} \right)\\
&= \prod_{k=1}^K \delta(\theta,\gamma_k). \quad (\text{according to} \; q>2\;\text{and}\;\delta(\theta,\gamma_k) =  (1 - 2\gamma_k)^{\frac{1 - \theta}{q}} (1 + 2\gamma_k)^{\frac{1 + \theta}{q}} )
\end{align*}
This achieves the result Eq.~\eqref{eq:theta} in the case of $q > 2$.
\end{itemize}
By combining the results of the two cases, we arrive at
\begin{equation*}
4^{\frac{K}{q}} \prod_{k=1}^K \rho_k^{\frac{1-\theta}{q}} (1 - \rho_k)^{\frac{1+\theta}{q}} = \prod_{k=1}^K \delta(\theta,\gamma_k) , \quad \forall \; q \geq 2.
\end{equation*}
Finally, by combining Eqs.~\eqref{ineq:P_z_desire} and \eqref{eq:theta}, we have
\begin{equation*}
\mathbb{P}_{\mathbf{z} \sim \mathcal{Z}} \left[ m_K(\mathbf{z}) \leq \theta \right] \leq \prod_{k=1}^K \delta(\theta,\gamma_k) , \quad \forall \; q \geq 2.
\end{equation*}
Equivalently,
\begin{equation*}
\mathbb{P}_{\mathbf{z} \sim \mathcal{Z}} \left[ m_K(\mathbf{z}) > \theta \right] \geq 1- \prod_{k=1}^K \delta(\theta,\gamma_k) , \quad \forall \; q \geq 2.
\end{equation*}
This completes the proof.

\end{proof}

\subsection{Proof of Lemma~\ref{lemma:theta}}\label{proof:remark_theta}	
\begin{proof}
Consider the function
\begin{equation*}
\delta(\theta, \gamma_k) = \left(1 - 2\gamma_k\right)^{\frac{1 - \theta}{q}} \left(1 + 2\gamma_k\right)^{\frac{1 + \theta}{q}}.
\end{equation*}
Taking the natural logarithm on both sides, since $0<\theta \leq \gamma_k<\frac{1}{2}$, we obtain
\begin{equation*}
\ln \delta(\theta, \gamma_k) = \frac{1 - \theta}{q} \ln(1 - 2\gamma_k) + \frac{1 + \theta}{q} \ln(1 + 2\gamma_k).
\end{equation*}
Define
\begin{equation*}
\phi(\theta, \gamma_k) := q \ln \delta(\theta, \gamma_k) = (1 - \theta) \ln(1 - 2\gamma_k) + (1 + \theta) \ln(1 + 2\gamma_k).
\end{equation*}
Since $ q \geq 2 $, our objective is to prove the inequality
\begin{equation*}
\delta(\theta, \gamma_k) \leq \delta(\gamma_k, \gamma_k) \leq \delta(\gamma_{\min},\gamma_{\min}) < 1.
\end{equation*}
Next, consider the derivative of $ \phi(\theta, \gamma_k) $ with respect to $ \theta $,
\begin{equation*}
\frac{\partial \phi(\theta, \gamma_k)}{\partial \theta} = -\ln(1 - 2\gamma_k) + \ln(1 + 2\gamma_k) = \ln\left(\frac{1 + 2\gamma_k}{1 - 2\gamma_k}\right) > 0.
\end{equation*}
Therefore, $ \phi(\theta, \gamma_k) $ is monotonically increasing with respect to $ \theta $ and then we have
\begin{equation*}
\phi(\theta, \gamma_k) \leq \phi(\gamma_k, \gamma_k).
\end{equation*}
Furthermore, define the function
\begin{equation*}
\psi(x) = (1 - x) \ln(1 - 2x) + (1 + x) \ln(1 + 2x).
\end{equation*}
It is straightforward to verify that $ \psi(x) $ is a symmetric concave function within the interval $ \left(-\frac{1}{2}, \frac{1}{2}\right)$ and its axis of symmetry is $x=0$. Specifically, for $ x \in \left(0, \frac{1}{2}\right) $, $ \psi(x) $ is monotonically decreasing and satisfies $ \psi(x) < 0 $. Therefore, we have
\begin{equation*}
\phi(\gamma_k, \gamma_k) = \psi(\gamma_k) < 0.
\end{equation*}
Since $ \gamma_{\min} $ is the minimum of all $ \gamma_k $ in the finite number of epochs, we have
\begin{equation*}
\phi(\gamma_k, \gamma_k) = \psi(\gamma_k) \leq \phi(\gamma_{\min}, \gamma_{\min}) = \psi(\gamma_{\min}) < 0.
\end{equation*}
%

By combining these results, for any $0<\theta \leq \gamma_k<\frac{1}{2}$, we obtain
\begin{equation*}
\phi(\theta, \gamma_k) \leq \phi(\gamma_k, \gamma_k) \leq \phi(\gamma_{\min}, \gamma_{\min}) < 0 \implies q \ln \delta(\theta, \gamma_k) < 0 \implies \delta(\theta, \gamma_k) < 1.
\end{equation*}
It follows from $\exp(x)$ is monotonically increasing w.r.t. $x$ and $ \delta(\theta, \gamma_k) = \exp(\frac{1}{q}\phi(\theta, \gamma_k))$ that 
\begin{equation*}
\delta(\theta, \gamma_k) \leq \delta(\gamma_k, \gamma_k) \leq \delta(\gamma_{\min}, \gamma_{\min}) < 1.
\end{equation*}
This completes the proof.
\end{proof}

\subsection{Proof of Theorem~\ref{theorem:generalization_error}}\label{proof:generalization_error}	

First, let us introduce some useful concepts from \cite{bartlett1998boosting}. Given a class $\mathcal{F}$ of real-valued functions and a training set $\mathcal{S}'$ of size $m$, a function class $\widehat{\mathcal{F}}$ is said to be an $\epsilon$-\textit{sloppy} $\theta$-\textit{cover} of $\mathcal{F}$ w.r.t. $\mathcal{S}'$ if for all $f \in \mathcal{F}$, there exists $\widehat{f} \in \widehat{\mathcal{F}}$ with 
\begin{equation*}
\mathbb{P}_{x \sim \mathcal{S}'}[|\widehat{f}(x) - f(x)| > \theta] < \epsilon,
\end{equation*}
for the given $\theta,\epsilon >0$. Let $\mathcal{N}(\mathcal{F}, \theta, \epsilon, m)$ denote the maximum, over all training sets $\mathcal{S}$ of size $m$, of the size of the smallest $\epsilon$-sloppy $\theta$-cover of $\mathcal{F}$ w.r.t. $\mathcal{S}'$. Here $\mathcal{S}'$ is the projection of $\mathcal{S}$ onto $X$. Under these notations, the following result holds:
\begin{theorem}[Theorem 4 of \cite{bartlett1998boosting}]\label{theorem:bartlett}
Let $\mathcal{F}$ be a class of real-valued functions defined on the instance space $X$. Let $\mathcal{D}$ be a distribution over $X \times \{-1, 1\}$, and let $\mathcal{S}$ be a sample of $m$ examples chosen independently at random according to $\mathcal{D}$. Let $\epsilon > 0$ and let $\theta > 0$. Then the probability over the random choice of the training set $\mathcal{S}$ that there exists any function $f \in \mathcal{F}$ for which
\begin{equation*}
\mathbb{P}_{\mathcal{D}}[yf(x) \leq 0] > \mathbb{P}_{\mathcal{S}}[yf(x) \leq \theta] + \epsilon
\end{equation*}
is at most
\begin{equation*}
2 \mathcal{N}(\mathcal{F}, \theta / 2, \epsilon / 8, 2m) \exp\left(-\epsilon^2 m / 32\right).
\end{equation*}
\end{theorem}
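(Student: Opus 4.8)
\subsection*{Proof proposal for Theorem~\ref{theorem:bartlett}}

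The plan is to prove this margin-based uniform-convergence bound by the classical three-part argument behind \cite{bartlett1998boosting}: (i) \emph{symmetrization} against an independent ghost sample, (ii) reduction of the infinite class $\mathcal{F}$ to a finite $(\epsilon/8)$-sloppy $(\theta/2)$-cover on the resulting double sample, and (iii) a \emph{permutation (swapping) argument} combined with Hoeffding's inequality, finished by a union bound over the cover. Throughout I abbreviate $\mathrm{err}_{\mathcal{D}}(f) = \mathbb{P}_{\mathcal{D}}[yf(x)\le 0]$ and, for a sample $S$ and threshold $t$, $\mathrm{err}_{S}^{t}(f) = \mathbb{P}_{(x,y)\sim S}[yf(x)\le t]$; the goal is to bound the probability that some $f\in\mathcal{F}$ satisfies $\mathrm{err}_{\mathcal{D}}(f) > \mathrm{err}_{\mathcal{S}}^{\theta}(f) + \epsilon$.

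First I would symmetrize. Draw an independent ghost sample $\mathcal{S}'$ of the same size $m$ from $\mathcal{D}$. The key lemma is that
\[
\mathbb{P}_{\mathcal{S}}\!\left[\exists f:\ \mathrm{err}_{\mathcal{D}}(f) > \mathrm{err}_{\mathcal{S}}^{\theta}(f) + \epsilon\right]
\le 2\,\mathbb{P}_{\mathcal{S},\mathcal{S}'}\!\left[\exists f:\ \mathrm{err}_{\mathcal{S}'}^{0}(f) > \mathrm{err}_{\mathcal{S}}^{\theta}(f) + \tfrac{\epsilon}{2}\right].
\]
To see this, condition on a draw of $\mathcal{S}$ for which a witness $f^\star$ exists; over the independent draw of $\mathcal{S}'$ the empirical quantity $\mathrm{err}_{\mathcal{S}'}^{0}(f^\star)$ concentrates around its mean $\mathrm{err}_{\mathcal{D}}(f^\star)$, so a one-sided Chebyshev/Chernoff estimate gives $\mathrm{err}_{\mathcal{S}'}^{0}(f^\star) \ge \mathrm{err}_{\mathcal{D}}(f^\star) - \epsilon/2$ with probability at least $1/2$, in which case $\mathrm{err}_{\mathcal{S}'}^{0}(f^\star) > \mathrm{err}_{\mathcal{S}}^{\theta}(f^\star) + \epsilon/2$ and $f^\star$ also witnesses the right-hand event. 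This factor-$1/2$ step needs $m\ge 1/\epsilon^2$, but that is harmless: when $m$ is smaller the target bound already exceeds $1$ and is vacuous.

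Next I would pass to a cover. Fix the double sample $\mathcal{S}\cup\mathcal{S}'$ of size $2m$ and let $\widehat{\mathcal{F}}$ be a minimal $(\epsilon/8)$-sloppy $(\theta/2)$-cover of $\mathcal{F}$ on it, so $|\widehat{\mathcal{F}}|\le \mathcal{N}(\mathcal{F},\theta/2,\epsilon/8,2m)$. For each $f$ choose $\widehat f\in\widehat{\mathcal{F}}$ with $|\widehat f(x)-f(x)|\le \theta/2$ on all but a fraction $<\epsilon/8$ of the $2m$ points. On the agreeing points, since $|y|=1$ we have $|y\widehat f(x)-yf(x)|\le\theta/2$, so $yf(x)\le0$ forces $y\widehat f(x)\le\theta/2$ and $y\widehat f(x)\le\theta/2$ forces $yf(x)\le\theta$. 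Summing the two discarded fractions over the halves $\mathcal{S}$ and $\mathcal{S}'$ costs at most $\epsilon/4$ in total, so the two-sample event at level $\epsilon/2$ for $f$ implies, for its cover element $\widehat f$, the cleaner event $\mathrm{err}_{\mathcal{S}'}^{\theta/2}(\widehat f) - \mathrm{err}_{\mathcal{S}}^{\theta/2}(\widehat f) > \epsilon/4$.

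Finally I would exploit the exchangeability of the $2m$ i.i.d.\ points. Conditionally on $\mathcal{S}\cup\mathcal{S}'$, independently swapping the $i$-th matched pair between $\mathcal{S}$ and $\mathcal{S}'$ with a uniform sign $\sigma_i\in\{\pm1\}$ preserves the joint law, and for a \emph{fixed} $\widehat f$ the discrepancy becomes $\tfrac1m\sum_{i=1}^{m}\sigma_i(v_i-u_i)$ with each $v_i-u_i\in\{-1,0,1\}$ fixed; Hoeffding then yields $\mathbb{P}_{\sigma}[\,\cdot > \epsilon/4\,]\le \exp(-\epsilon^2 m/32)$. A union bound over the at most $\mathcal{N}(\mathcal{F},\theta/2,\epsilon/8,2m)$ elements of $\widehat{\mathcal{F}}$, followed by the symmetrization factor $2$, gives exactly $2\,\mathcal{N}(\mathcal{F},\theta/2,\epsilon/8,2m)\exp(-\epsilon^2 m/32)$. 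I expect the main obstacle to be the symmetrization lemma --- establishing the factor $2$ while controlling the ghost-sample concentration (and handling the small-$m$ regime where it would fail) --- together with the careful bookkeeping of how the two margin thresholds $0$ and $\theta$ collapse to $\theta/2$ and how the two sloppy fractions combine to shrink the $\epsilon/2$ slack to $\epsilon/4$.
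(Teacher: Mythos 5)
The paper itself offers no proof of this statement: Theorem~\ref{theorem:bartlett} is imported verbatim as Theorem~4 of \cite{bartlett1998boosting}, and the appendix only restates it in an equivalent form (Theorem~\ref{theorem:equivalent}) and matches notation to deduce Theorem~\ref{theorem:generalization_error}. Your sketch therefore goes beyond what the paper does, and it faithfully reconstructs the original argument from the cited source with the correct constants: symmetrization against a ghost sample with slack $\epsilon/2$; reduction to an $(\epsilon/8)$-sloppy $(\theta/2)$-cover of the double sample, with the thresholds $0$ and $\theta$ collapsing to $\theta/2$ and the two sloppy fractions consuming at most $\epsilon/4$ of the remaining slack; and the swapping argument with Hoeffding, $\exp\left(-2(\epsilon m/4)^2/(4m)\right)=\exp\left(-\epsilon^2 m/32\right)$, finished by a union bound over the cover and the symmetrization factor $2$. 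Two small points if you write this up in full. First, the factor-$1/2$ Chebyshev step in symmetrization needs roughly $m \ge 2/\epsilon^2$ rather than $1/\epsilon^2$ (variance $\le 1/(4m)$, deviation $\epsilon/2$), but your vacuousness fallback covers this comfortably, since $2\exp(-\epsilon^2 m/32)\ge 1$ whenever $m \le 32\ln(2)/\epsilon^2$. Second, make explicit that the cover may be chosen as a function of the \emph{unordered} double sample, which is invariant under the coordinate swaps $\sigma_i$; conditioning on the multiset and union-bounding over at most $\mathcal{N}(\mathcal{F},\theta/2,\epsilon/8,2m)$ cover elements is then legitimate even though the cover is data-dependent. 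With that remark added, your outline is a complete and correct proof, essentially identical to the one in \cite{bartlett1998boosting} that the paper relies on.
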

Now, we present the proof of Theorem \ref{theorem:generalization_error}.

\begin{proof}[Proof of Theorem \ref{theorem:generalization_error}]
Let $\mathcal{S}=\{(x_i,y_i)\}_{i=1}^m$ be the i.i.d. sample from $\mathcal{D}$ in Theorem~\ref{theorem:bartlett}, and let $\mathcal{S}'=\{x_i\}_{i=1}^m$ be its projection onto $X$. Then, for any functions $f,\widehat f \in \mathcal{F}$, it holds that
\begin{align*}
&\mathbb{P}_{x \sim \mathcal{S}'}\left[|\widehat{f}(x) - f(x)| > \theta\right] < \epsilon \\
\iff\;&\mathbb{P}_{(x,y) \sim \mathcal{S}}\left[|\widehat{f}(x) - f(x)| > \theta\right] < \epsilon \\
\iff\;&\mathbb{P}_{(x,y) \sim \mathcal{S}}\left[|y|\cdot|\widehat{f}(x) - f(x)| > \theta\right] < \epsilon \\
\iff\;&\mathbb{P}_{(x,y) \sim \mathcal{S}}\left[|y\widehat{f}(x) - yf(x)| > \theta\right] < \epsilon \\
\iff\;&\mathbb{P}_{(x,y) \sim \mathcal{S}}\left[|\widehat{g}(x,y)-g(x,y)| > \theta\right] < \epsilon,
\end{align*}
where $g(x,y)=y f(x)$ and $\widehat g(x,y)=y\widehat f(x)$. Following \cite{bartlett1998boosting}, let $\mathcal{H}$ denote the space from which the base classifiers are chosen, where each $h\in\mathcal{H}$ is a mapping $h:X\to\{-1,+1\}$. Each $f \in \mathcal{F}$ is a voting classifier, {\it i.e.}, $f(x)=\sum_{h\in\mathcal{H}} a_h h(x)$ with $a_h\geq 0$ and $\sum_{h\in\mathcal{H}} a_h = 1$; thus, $\mathcal{F} \subseteq \mathrm{conv}(\mathcal{H})$. For convenience, define $\widetilde h(x,y):=y\,h(x)\in\{-1,+1\}$. Next, define a function class
\begin{equation*}
\mathcal{G} = \left\{ g \mid g(x,y)= yf(x) = \sum_{h \in \mathcal{H}} a_h(yh(x))=\sum_{h \in \mathcal{H}} a_h \widetilde{h}(x,y) \right\},
\end{equation*}
and we have
\begin{equation*}
|\widehat{\mathcal{G}}|= \mathcal{N}(\mathcal{G}, \theta / 2, \epsilon / 8, m)=|\widehat{\mathcal{F}}| = \mathcal{N}(\mathcal{F}, \theta / 2, \epsilon / 8, m),
\end{equation*}
where  $\widehat{\mathcal{G}}$ is an $\epsilon$-sloppy $\theta$-cover of $\mathcal{G}$ w.r.t. $\mathcal{S}$.
Then, the result of Theorem \ref{theorem:bartlett} can be equivalently rewritten as: 
\begin{quote}
\vspace{-0.9cm}
\begin{theorem}\label{theorem:equivalent}
Let $\mathcal{G}$ be a class of real-valued functions defined on the sample space $X \times \{-1, 1\}$. Let $\mathcal{S}$ be a set of $m$ i.i.d. training samples taken from the distribution $\mathcal{D}$ over the sample space. Let $\epsilon > 0$ and let $\theta > 0$. Then the probability over the random choice of the training set $\mathcal{S}$ that there exists any function $g \in \mathcal{G}$ for which
\begin{equation*}
\mathbb{P}_{(x,y) \sim \mathcal{D}}[g(x,y) \leq 0] > \mathbb{P}_{(x,y) \sim \mathcal{S}}[g(x,y) \leq \theta] + \epsilon
\end{equation*}
is at most
\begin{equation*}
2 \mathcal{N}(\mathcal{G}, \theta/2, \epsilon/8, 2m) \exp\left(-\epsilon^2 m / 32\right).
\end{equation*}
\end{theorem}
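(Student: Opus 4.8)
The plan is to treat Theorem~\ref{theorem:equivalent} as a verbatim reformulation of Theorem~\ref{theorem:bartlett} obtained through the change of variables $g(x,y)=yf(x)$, and to transfer the probabilistic bound across the bijection $\Phi:\mathcal{F}\to\mathcal{G}$ defined by $(\Phi f)(x,y)=yf(x)$. Concretely, I would first invoke Theorem~\ref{theorem:bartlett} on the real-valued class $\mathcal{F}$ on $X$ from which $\mathcal{G}$ is built, and then argue that every object appearing in Theorem~\ref{theorem:bartlett} — the failure event, the two probabilities, and the covering number — is carried to its counterpart in Theorem~\ref{theorem:equivalent} under $\Phi$ with no change in value, so that the two statements are logically identical.

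The first key step is the event-level identity. Since $y\in\{-1,1\}$ we have $g(x,y)=yf(x)$, so for every fixed $\theta$ the sublevel sets coincide pointwise: $\{(x,y):g(x,y)\leq 0\}=\{(x,y):yf(x)\leq 0\}$, and likewise at threshold $\theta$. Consequently $\mathbb{P}_{(x,y)\sim\mathcal{D}}[g(x,y)\leq 0]=\mathbb{P}_{\mathcal{D}}[yf(x)\leq 0]$ and $\mathbb{P}_{(x,y)\sim\mathcal{S}}[g(x,y)\leq\theta]=\mathbb{P}_{\mathcal{S}}[yf(x)\leq\theta]$, where both empirical probabilities use the uniform measure on the $m$ points of $\mathcal{S}$. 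Because $\Phi$ is a bijection between $\mathcal{F}$ and $\mathcal{G}$, the event ``there exists $f\in\mathcal{F}$ with $\mathbb{P}_{\mathcal{D}}[yf(x)\leq 0]>\mathbb{P}_{\mathcal{S}}[yf(x)\leq\theta]+\epsilon$'' and the event ``there exists $g\in\mathcal{G}$ with $\mathbb{P}_{\mathcal{D}}[g\leq 0]>\mathbb{P}_{\mathcal{S}}[g\leq\theta]+\epsilon$'' describe the \emph{same} collection of training samples $\mathcal{S}$, so their probabilities over the draw of $\mathcal{S}$ are equal.

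The second key step is the covering-number identity $\mathcal{N}(\mathcal{G},\theta/2,\epsilon/8,2m)=\mathcal{N}(\mathcal{F},\theta/2,\epsilon/8,2m)$, which is exactly what the displayed chain of equivalences preceding the statement establishes: because $|y|=1$, one has the pointwise equality $|\widehat g(x,y)-g(x,y)|=|y|\,|\widehat f(x)-f(x)|=|\widehat f(x)-f(x)|$, so $\mathbb{P}_{x\sim\mathcal{S}'}[|\widehat f(x)-f(x)|>\theta]<\epsilon$ holds if and only if $\mathbb{P}_{(x,y)\sim\mathcal{S}}[|\widehat g(x,y)-g(x,y)|>\theta]<\epsilon$, where $\mathcal{S}'$ is the projection of $\mathcal{S}$ onto $X$. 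Hence $\widehat{\mathcal{F}}$ is an $\epsilon/8$-sloppy $\theta/2$-cover of $\mathcal{F}$ w.r.t.\ $\mathcal{S}'$ if and only if $\Phi(\widehat{\mathcal{F}})=\widehat{\mathcal{G}}$ is an $\epsilon/8$-sloppy $\theta/2$-cover of $\mathcal{G}$ w.r.t.\ $\mathcal{S}$, and since $\Phi$ is a size-preserving bijection the smallest such covers have equal cardinality for every $\mathcal{S}$ of size $2m$; taking the maximum over all such $\mathcal{S}$ yields the claimed equality of covering numbers.

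With these two identities in hand, I would simply apply Theorem~\ref{theorem:bartlett} to $\mathcal{F}$: its conclusion bounds the probability of the $\mathcal{F}$-failure event by $2\mathcal{N}(\mathcal{F},\theta/2,\epsilon/8,2m)\exp(-\epsilon^2 m/32)$, and substituting the event-level and covering-number identities rewrites this as a bound of $2\mathcal{N}(\mathcal{G},\theta/2,\epsilon/8,2m)\exp(-\epsilon^2 m/32)$ on the $\mathcal{G}$-failure event, which is precisely Theorem~\ref{theorem:equivalent}. I expect the main obstacle to be the careful verification that the sloppy-cover relation transfers in \emph{both} directions under $\Phi$ — in particular, that passing from the distribution on $X\times\{-1,1\}$ to its projection $\mathcal{S}'$ on $X$ leaves the relevant empirical probabilities unchanged, and that $\Phi$ restricts to a genuine bijection between covers rather than merely an injection — since it is the \emph{equality} of covering numbers, not a one-sided inequality, that makes this a faithful reformulation.
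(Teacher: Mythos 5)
Your proposal is correct and follows essentially the same route as the paper: establish the pointwise identity $g(x,y)=yf(x)$ so that the failure events under $\mathcal{D}$ and $\mathcal{S}$ coincide, show via $|y|=1$ that the $\epsilon/8$-sloppy $\theta/2$-cover condition transfers between $\mathcal{F}$ (w.r.t.\ the projection $\mathcal{S}'$) and $\mathcal{G}$ (w.r.t.\ $\mathcal{S}$) so that $\mathcal{N}(\mathcal{G},\theta/2,\epsilon/8,2m)=\mathcal{N}(\mathcal{F},\theta/2,\epsilon/8,2m)$, and then read off Theorem~\ref{theorem:equivalent} as a rewriting of Theorem~\ref{theorem:bartlett}. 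If anything, you are slightly more careful than the paper in flagging that the covering-number \emph{equality} (not just a one-sided inequality) must be verified in both directions under the map $(\Phi f)(x,y)=yf(x)$, which the paper asserts without elaboration.
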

\end{quote}

Let $\mathcal{W}$ be a pre-specified, data-independent parameter set, and define the base function class
\begin{equation*}
\mathcal{H}_e := \{\, \beta(\cdot;\Theta,e):\mathcal{Z}\to\{-1,1\}\mid \Theta\in\mathcal{W}\,\}.
\end{equation*}
Next, the function class for a $\Delta_K$-convergence phase in the APW method is defined as
\begin{equation*}
\mathcal{B} = \left\{ b_{\Delta_K} \mid b_{\Delta_K}(\mathbf{z}) = \frac{g_{K+\Delta_K}(\mathbf{z})-g_K(\mathbf{z})}{A_{K+\Delta_K}-A_K}\right\}. 
\end{equation*}
By $A_K = \sum_{k=1}^K \alpha_k$ and Eq.~\eqref{eq:g_K}, $\mathcal{B}$ can be equivalently expressed as
\begin{equation*}
\mathcal{B} = \left\{  b_{\Delta_K} \mid b_{\Delta_K}(\mathbf{z}) = \frac{\sum_{k=K+1}^{K+\Delta_K} \alpha_k \beta_k(\mathbf{z}; e)}{\sum_{k=K+1}^{K+\Delta_K} \alpha_k}\right\},
\end{equation*}
where $\alpha_k>0$, $\mathcal{B}\subseteq \mathrm{conv}(\mathcal{H}_e)$, and $\beta_k(\mathbf{z}; e)$ can be written as $\beta(\mathbf{z};\Theta_k,e)$ for some $\Theta_k\in\mathcal{W}$, where $\mathcal{W}$ is fixed before seeing the training set. Although the network parameters $\Theta_k$ are learned from data, Theorem~\ref{theorem:equivalent} provides a uniform bound over $\mathcal{H}_e$, and thus it can be applied to the realized $\Theta_k$.

As shown in Tab.~\ref{tab:connections}, the notations in Theorem \ref{theorem:generalization_error} are quite parallel to those in Theorem \ref{theorem:equivalent}. Consequently, the result of Theorem \ref{theorem:generalization_error} can be directly derived from Theorem \ref{theorem:equivalent}. This completes the proof. 

\begin{table}[htbp]
\centering   
\caption{Correspondence between the notations in Theorem \ref{theorem:equivalent} and those in Theorem \ref{theorem:generalization_error}}\label{tab:connections}
\begin{tabular}{c|c}
\bottomrule
Notations in Theorem \ref{theorem:equivalent} & Notations in Theorem \ref{theorem:generalization_error}\\
\midrule
\makecell[c]{
$\mathcal{G} = \left\{ g \mid g(x,y)=\sum_{h \in \mathcal{H}} a_h \widetilde{h}(x,y) \right\}$ 
}
& \makecell[c]{
$\mathcal{B} = \left\{ b \mid b(\mathbf{z})=\sum_{j}\left[ \frac{\alpha_{j}}{\sum_{k}\alpha_{k}}\beta_{j}(\mathbf{z};e)\right] \right\}$ 
} \\
\midrule
\makecell[c]{
$a_h \geq 0; \, \sum_h a_h = 1$ 
}
& \makecell[c]{
$\frac{\alpha_{j}}{\sum_{k} \alpha_k}>0 ; \, \sum_{j}\frac{\alpha_{j}}{\sum_{k} \alpha_k} = 1$
} \\
\midrule
\makecell[c]{
$\widetilde{h}(x,y)\in \{-1, 1\}$ 
}
& \makecell[c]{
$\beta_{j}(\mathbf{z};e)\in \{-1, 1\}$ 
} \\
\midrule
\makecell[c]{$\widetilde{h}(x,y) = +1 \iff h$ gives the correct prediction \\
$\widetilde{h}(x,y) = -1 \iff h$ gives the wrong prediction
}
& \makecell[c]{
$\beta_{j}(\mathbf{z}; e) = +1 \iff \mathrm{L}_{j}(\mathbf{z}) \leq e$ \\
$\beta_{j}(\mathbf{z}; e) = -1 \iff \mathrm{L}_{j}(\mathbf{z})>e$ 
} \\
\midrule
\makecell[c]{Let $\mathcal{S}$ be a set of $m$ i.i.d. training samples  taken \\ from the distribution $\mathcal{D}$ over the sample space.}
& \makecell{Let $\mathcal{Z}$ be a set of $N$ i.i.d. training samples taken \\ from the distribution $\mathcal{D}$ over the sample space.} \\
\bottomrule
\end{tabular}
\end{table}

\end{proof}

\end{document}